\newcommand{\R}{\mathbb{R}}
\newcommand{\abs}[1]{\lvert#1\rvert}
\newcommand{\bigabs}[1]{\big\lvert#1\big\rvert}
\newcommand{\biggabs}[1]{\bigg\lvert#1\bigg\rvert}
\renewcommand{\epsilon}{\varepsilon}
\renewcommand{\hat}{\widehat}
\renewcommand{\tilde}{\widetilde}
\DeclareMathOperator{\E}{\mathbb{E}}
\DeclareMathOperator{\Prob}{\mathbb{P}}
\DeclareMathOperator{\var}{Var}
\DeclareMathOperator{\err}{\mathsf{Err}}
\let\liminf\relax
\DeclareMathOperator*\liminf{\underline{lim}}
\let\limsup\relax
\DeclareMathOperator*\limsup{\overline{lim}}
\DeclareMathOperator*{\argmax}{arg\,max\,}
\newcommand{\beq}{\begin{equation}}
\newcommand{\eeq}{\end{equation}}
\newcommand{\beqa}{\begin{equation} \begin{aligned}}
\newcommand{\eeqa}{\end{aligned} \end{equation}}
\newcommand{\beqas}{\begin{equation*} \begin{aligned}}
\newcommand{\eeqas}{\end{aligned} \end{equation*}}
\newcommand{\bit}{\begin{itemize}}
	\newcommand{\eit}{\end{itemize}}
\newcommand{\bmat}{\begin{bmatrix}}
	\newcommand{\emat}{\end{bmatrix}}
\newcommand{\sigmasub}{\sigma_{\mathsf{sub}}}
\newcommand{\sigmaopt}{\sigma_{\mathsf{opt}}}
\newcommand{\Pigood}{\Pi_{\mathsf{good}}}
\newcommand{\KL}[2]{\mathsf{KL}(#1 \lVert #2 )}
\begin{document}

\title{Precise Asymptotics and Refined Regret of Variance-Aware UCB}

\author{\name Yingying Fan \email fanyingy@marshall.usc.edu \\
       \addr Data Sciences and Operations Department\\
       University of Southern California\\
       Los Angeles, CA 90089, USA
        \AND
        \name Yuxuan Han \email yh6061@stern.nyu.edu \\
       \addr Stern School of Business,\\
        New York University,\\
        New York, NY 10003, USA
       \AND
       \name Jinchi Lv \email jinchilv@marshall.usc.edu \\
       \addr Data Sciences and Operations Department\\
       University of Southern California\\
       Los Angeles, CA 90089, USA
        \AND
       \name Xiaocong Xu \email xuxiaoco@marshall.usc.edu \\
       \addr Data Sciences and Operations Department\\
       University of Southern California\\
       Los Angeles, CA 90089, USA 
        \AND
       \name Zhengyuan Zhou \email zzhou@stern.nyu.edu \\
       \addr Stern School of Business,\\
        New York University,\\
        New York, NY 10003, USA
       }

\editor{My editor}

\maketitle

\begin{abstract}%
In this paper, we study the behavior of the Upper Confidence Bound-Variance (UCB-V) algorithm for the Multi-Armed Bandit (MAB) problems, a variant of the canonical Upper Confidence Bound (UCB) algorithm that incorporates variance estimates into its decision-making process. More precisely, we provide an asymptotic characterization of the arm-pulling rates for UCB-V, extending recent results for the canonical UCB in \citet{kalvit2021closer} and \citet{khamaru2024inference}. In an interesting contrast to the canonical UCB, our analysis reveals that the behavior of UCB-V can exhibit instability, meaning that the arm-pulling rates may not always be asymptotically deterministic. Besides the asymptotic characterization, we also provide non-asymptotic bounds for the arm-pulling rates in the high probability regime, offering insights into the regret analysis. As an application of this high probability result, we establish that UCB-V can achieve a more refined regret bound, previously unknown even for more complicate and advanced variance-aware online decision-making algorithms.
\end{abstract}

\setcounter{tocdepth}{1}

\sloppy

\section{Introduction}

\subsection{Overview}
The Multi-Armed Bandit (MAB) problem is a fundamental framework that captures the exploration vs. exploitation trade-off in sequential decision-making. Over the past decades, this problem has been rigorously studied and widely applied across diverse fields, including dynamic pricing, clinical trials, and online advertising \citep{russo2018tutorial,slivkins2019introduction,lattimore2020bandit}.

In the classic $K$-armed MAB problem, a learner is faced with  $K$ arms, each associated with an unknown reward distribution $P_a$ for $a \in [K]$, supported on $[0,1]$ with mean $\mu_a$ and variance $\sigma_a^2$. At each time step $t \in [T]$, the learner selects an arm $a_t$ and observes a reward $X_t$ drawn independently from $P_{a_t}$. The learner's goal is to maximize the cumulative reward by striking an optimal balance between exploration (sampling less-known arms to improve estimates) and exploitation (selecting arms with high estimated rewards). This objective is commonly framed as a regret minimization problem, where the regret over a time horizon $T$ is defined as
\begin{align*}
    \mathrm{Reg}(T) \equiv \sum_{t=1}^T (\mu_{a^\star} - \mu_{a_t})
\end{align*}
with $a^\star \equiv \arg\max_{a \in [K]} \mu_a$ the optimal arm with the highest expected reward.

The minimax-optimal regret for the $K$-armed bandit problems is known to be $\Theta(\sqrt{KT})$ up to logarithmic factors. This rate is achievable by several well-established algorithms, including the Upper Confidence Bound (UCB) \citep{auer2002using,audibert2009exploration}, Thompson Sampling \citep{agrawal2012analysis,russo2018tutorial}, and Successive Elimination \citep{even2006action}, among others. Beyond the regret minimization, increasing attention has been devoted to analyzing finer properties and large-$T$ behaviors of several typical algorithms, including the regret tail distributions \citep{fan2021fragility,fan2022typical}, diffusion approximations \citep{fan2021diffusion,kalvit2021closer,araman2022diffusion,kuang2023weak}, and arm-pulling rates \citep{kalvit2021closer,khamaru2024inference}.  Among these works, \citet{kalvit2021closer} and \citet{khamaru2024inference}\footnote{It is worth noting that a very recent companion work \cite{han2024ucb} to \citet{khamaru2024inference} also provides precise regret analysis through a deterministic characterization of arm-pulling rates of the UCB algorithm for multi-armed bandits with Gaussian rewards.} introduced the concept of stability for the canonical UCB algorithm, enabling the precise characterization of arm-pulling behaviors and facilitating statistical inference for adaptively collected data, a task traditionally considered challenging for the general bandit algorithms.

\begin{algorithm}[h]
\caption{UCB-V Algorithm}\label{alg:var-ucb}
\begin{algorithmic}[1]
\STATE \textbf{Input:} Arm number $K,$ time horizon $T$, and exploration coefficient $\rho$.
\STATE Pull each of the $K$ arms once in the first $K$ iterations.
\FOR{$t = K+1$ to $T$}
    \STATE Compute the total number of times that arm $a$ is pulled up to time $t$
    \[
    n_{a,t} \equiv \sum_{s \in [t-1]} \bm{1}\{A_s = a\},\quad a \in [K].
    \]
    \STATE Compute the empirical means and variances 
\begin{align*}
    \bar{X}_{a,t} \equiv \frac{1}{n_{a,t}} \sum_{s\in [t-1]} \bm{1}\{a_s = a\} X_{s},\quad  \hat{\sigma}^2_{a,t} \equiv \frac{1}{n_{a,t}} \sum_{s\in [t-1]} \bm{1}\{a_s = a\} (X_{s} - \bar{X}_{a,t})^2.
\end{align*}
    \STATE Compute the optimistic rewards
    \[
    \mathrm{UCB}(a, t) \equiv   \bar{X}_{a,t}+ \bigg(\frac{\hat{\sigma}_{a,t}}{\sqrt{\rho\log T}} \vee  \frac{1}{\sqrt{n_{a,t}}} \bigg) \cdot  \frac{\rho \log T}{\sqrt{n_{a, t}}},\quad a \in [K].
    \]
    \STATE Choose arm $A_t$ given by
    \[
    A_{t} = \arg \max_{a \in [K]} \mathrm{UCB}(a, t).
    \]
\ENDFOR
\end{algorithmic}
\end{algorithm}

In more structured settings, sharper regret bounds are achievable, particularly when arm variances are small. For instance, if all arms have zero variance (i.e., deterministic rewards), a single pull of each arm is sufficient to identify the optimal one. 
This observation has motivated active research into developing variance-aware algorithms for bandit problems \citep{auer2002using,audibert2006use,audibert2009exploration,honda2014optimality,mukherjee2018efficient,zhang2021improved,zhao2023variance,saha2024only}. 
Among these algorithms, the Upper Confidence Bound-Variance (UCB-V) algorithm \citep{audibert2006use,audibert2009exploration}, detailed in Algorithm~\ref{alg:var-ucb}, adapts the classic UCB algorithm by incorporating variance estimates of each arm.

While regret minimization for variance-aware algorithms has been well-studied, their precise arm-pulling behavior remains less explored. The main challenge here is that incorporating variance information introduces a new quantity, in addition to the reward gaps $\Delta_a \equiv \mu_{a^\star} - \mu_{a}, a \in [K]$, that influences the arm-pulling rates. In fact, the behavior of the algorithm can \textit{differ significantly} depending on whether variance information is incorporated or not. In Figures~\ref{fig:arm_pulling_statistic_two_images} and \ref{subfig:phase-transition-n1T},
we compare the empirical arm-pulling rates between UCB-V and the canonical UCB in a two-armed setting. Compared to the canonical UCB, its variance-aware version exhibits significantly greater fluctuations in arm-pulling numbers as the reward gap changes, and its arm-pulling distribution is more heavy-tailed. This highlights the significant differences in the variance-aware setting and introduces additional challenges.

 In this work, we try to close this \textit{gap} by presenting a precise asymptotic analysis of UCB-V. As in \citet{kalvit2021closer} and \citet{khamaru2024inference}, our main focus is on the arm-pulling numbers and stability.
For clarity, we concentrate on the two-armed bandit setting, specifically $K = 2$, in the main part of our paper, as it effectively captures the core exploration-exploitation trade-off while allowing a precise exposition of results \citep{rigollet2010nonparametric, goldenshluger2013linear, kaufmann2016complexity, kalvit2021closer}. The extension to the $K$-armed case is discussed in Section~\ref{app:Karm}.
 
 More precisely, by providing a general concentration result, we establish both precise asymptotic characterizations and high-probability bounds for the arm-pulling numbers of UCB-V. When $\sigma_1, \sigma_2 = \Omega(1)$, our results provide a straightforward generalization of those obtained for the canonical UCB.
 In contrast, when $\sigma_1 \ll \sigma_2$, we show that, unlike UCB, the stability result may \textit{not} hold for UCB-V and that it reveals a \textit{phase transition} phenomenon in the optimal arm-pulling numbers, as illustrated in Figure~\ref{fig:regret_and_PT}. 
 Finally, as an application of our \textit{sharp} characterization of arm-pulling rates, we present a \textit{refined} regret result for UCB-V that is  \textit{previously unknown} for any other variance-aware decision-making algorithms. We summarize our contributions in more detail below. For notational simplicity, we assume that the optimal arm is $a^\star = 1$ and denote by $\Delta = \Delta_2$.

\begin{figure}[t]
    \centering
    \begin{subfigure}{0.45\textwidth}
        \centering
        \includegraphics[width=\textwidth]{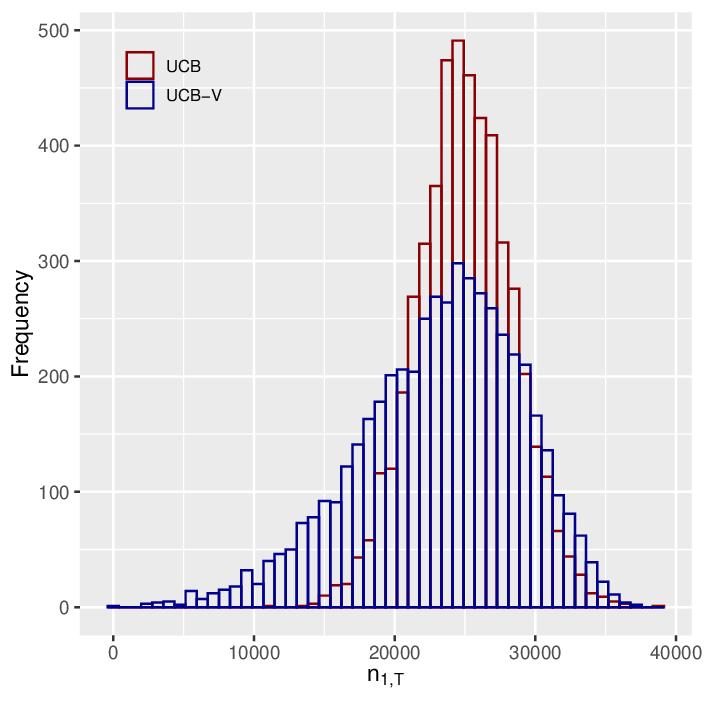}
        \caption{$\sigma_1 = \sigma_2 = 1/4,  \Delta_2 = 0$}
    \end{subfigure}
    \hfill
    \begin{subfigure}{0.45\textwidth}
        \centering
        \includegraphics[width=\textwidth]{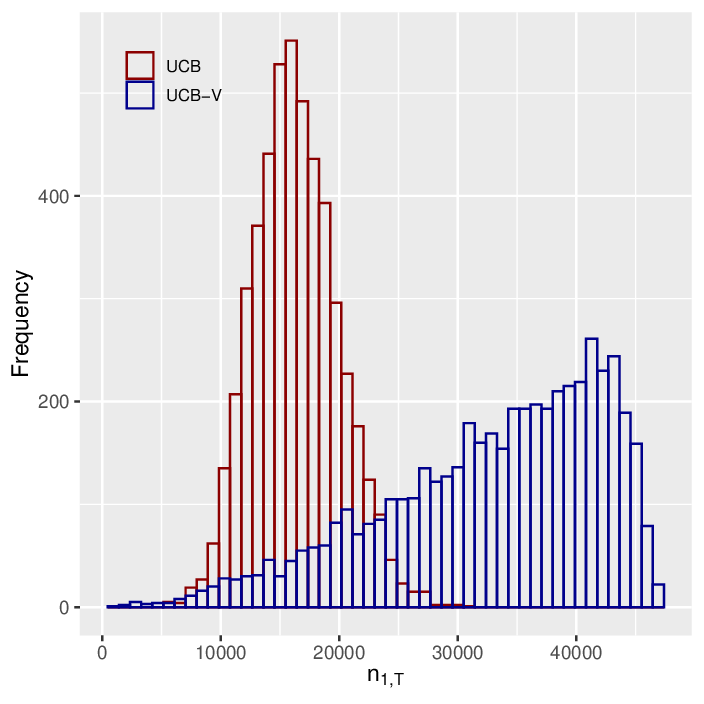}
        \caption{$\sigma_1 =0, \sigma_2 = 1/4, \Delta_2 = \sigma_2\sqrt{\log T/T}$}
    \end{subfigure}    
    \caption{The distributions of $n_{1,T}$ (optimal arm-pulling count) for UCB-V and UCB with $T = 50,000$ over $5000$ repetitions.} 
    \label{fig:arm_pulling_statistic_two_images}
\end{figure}

\subsection*{Precise asymptotic behavior for UCB-V} For fixed values of $\Delta, \sigma_1, \sigma_2$, and $T$, we propose deterministic equations with a unique solution, $\{n_{a,T}^\star\}_{a = 1,2}$, and demonstrate that the UCB-V arm-pulling numbers $n_{1,T}$ and $n_{2,T}$ are asymptotically equivalent to $n_{1,T}^\star$ and $n_{2,T}^\star$, except when $\sigma_1 \ll \sigma_2$ and $\sigma_2 \sqrt{\rho \log T} / (\sqrt{T} \Delta) = 1$ hold simultaneously. This phenomenon, referred to as the stability of arm-pulling numbers, is formally defined in Definition~\ref{def:stable}. Consequently, the solutions of these deterministic equations can be used to predict the behavior of UCB-V in the large $T$ regime, as illustrated in Figure~\ref{fig:regret_and_PT}. Our results reveal several notable differences between the UCB-V and  canonical UCB, as analyzed in \citet{kalvit2021closer} and \citet{khamaru2024inference}, due to the incorporation of variance information:

\begin{itemize}
    \item  When $\sigma_1, \sigma_2 = \Omega(1)$, the optimal arm will always be pulled a linear number of times, with $n_{1,T} \sim \frac{\sigma_1^2}{\sigma_1^2+\sigma_2^2} T$ as $\Delta \to 0$. This indicates that in the small gap regime, the UCB-V algorithm tends to allocate more pulling times to the arm with higher reward variance, generalizing the result for the canonical UCB in which both arms are pulled $T/2$ times in the small-$\Delta$ regime.
    
    \item  When $\sigma_1 = \mathfrak{o}(\sigma_2)$, UCB-V may pull the optimal arm sub-linearly in the small $\Delta$ regime. In contrast, the canonical UCB pulls the optimal arm linearly in $T$, as illustrated in Figure~\ref{subfig:phase-transition-n1T}. This behavior is governed by the ratio $\Lambda_T \equiv  \sigma_2 \sqrt{\rho\log T}/(\sqrt{T}\Delta) $. More precisely,
    \begin{enumerate}
        \item When $\limsup_{T \to \infty}\Lambda_T < 1$, we have $ n_{1,T}/n_{1,T}^\star \xrightarrow{p} 1$ with $n_{1,T}^\star = \Omega(T)$. 
        \item When $\liminf_{T \to \infty}\Lambda_T > 1$, we have $ n_{1,T}/n_{1,T}^\star \xrightarrow{p} 1$ with $n_{1,T}^\star = \mathcal{O}(\sqrt{T}/\sigma_2)$.
        \item When $\lim_{T \to \infty}\Lambda_T = 1,$ there exists a bandit instance such that for sufficiently large $T$, it holds that 
        $$\Prob \Big(n_{1,T} \gtrsim T  \Big) \wedge \Prob \Big(n_{1,T} \lesssim \sqrt{T}/\sigma_2  \Big) \gtrsim 1.$$
    \end{enumerate}
The above results characterize completely the behavior of $n_{1,T}$ in the $\sigma_1 = \mathfrak{o}(\sigma_2)$ regime. Notably, we establish a phase transition in the optimal-arm pulling times $n_{1,T}$, shifting from $\mathcal{O}(\sqrt{T}/\sigma_2)$ to $\Omega(T)$ at the critical point $\Lambda_T = 1$. We also note that the existence of an \textit{unstable} instance when $\Lambda_T \sim 1$ highlights a \textit{stark contrast} between the UCB-V and canonical UCB, where for the latter it has been shown that the behavior of $n_{1,T}$ for any $\Delta > 0$ can be asymptotically described by a deterministic sequence $\{n_{1,T}^\star\}$ as $T \to \infty$.
\end{itemize}
 
\subsection*{High probability bounds and confidence region for arm pulling numbers} While our asymptotic theory provides the precise limiting behavior of UCB-V, it has a drawback similar to those found in \citet{kalvit2021closer} and \citet{khamaru2024inference}: the convergence rate in probability is quite slow. Specifically, the probability that the uncontrolled event occurs decays at a rate of $\mathcal{O}((\log T)^{-1})$. This slow rate is inadequate for providing insight or guarantees in the popular \textit{high probability regime} for studying bandit algorithms, where the probability of an uncontrolled event should be in the order of $\mathcal{O}(T^{-1})$. To address this \textit{gap}, we demonstrate that, starting from our unified concentration result in Proposition~\ref{prop-perturbed-equation}, one can derive non-asymptotic bounds for the arm pulling numbers in the high probability regime. Such result provides a high probability confidence region for arm pulling numbers, as illustrated in Figure~\ref{subfig:asymptotic-equation and high probability bound}.

\subsection*{Refined regret for variance-aware decision making}
As an application of our high probability arm-pulling bounds, we demonstrate in Section~\ref{sec:regret} that the UCB-V algorithm achieves a refined regret bound of form
\[
\mathcal{O}\bigg(\frac{\sigma_2^2}{\sigma_1 \vee \sigma_2}\sqrt{T}\bigg).
\]
This result improves upon the best-known regret\footnote{\citet{audibert2009exploration} does not directly provide the worst-case regret bound for UCB-V, and the best known worst-case regret bound for UCB-V is claimed as $\mathcal{O}(\sqrt{KT\log T})$ in \citet{mukherjee2018efficient}. In Appendix~\ref{sec: proof of regret_2}, we show that the $\mathcal{O}(\sqrt{\sum_{a\neq a^\star} \sigma_a^2 T\log T})$ regret for UCB-V can be derived based on results in \citet{audibert2009exploration}.}
for UCB-V \citep{audibert2009exploration,talebi2018variance} and surpasses the regrets for all known variance-aware bandit algorithms \citep{zhao2023variance,zhang2021improved,saha2024only,dai2022variance,di2023variance}, which are of form $ \mathcal{O}(\sigma_2 \sqrt{T} )$ in the two-armed setting. Our result is the \textit{first} to reveal the effect of the optimal arm's variance:  When $\sigma_1 = \mathfrak{o}(\sigma_2)$, the regret matches the  previously known $\mathcal{O}(\sigma_2\sqrt{T})$ bound, and the optimal arm's variance does not affect performance; as $\sigma_1$ surpasses $\sigma_2$, the regret decreases as $\sigma_1$ increases, following the form $\mathcal{O}(\frac{\sigma_2^2}{\sigma_1}\sqrt{T})$. Simulations presented in Figure~\ref{fig:regret_and_PT} confirm our theoretical predictions, demonstrating that UCB-V exhibits improved performance in high-$\sigma_1$ scenarios, which was \textit{previously unknown}.

\begin{figure}[h]
    \centering
    \begin{subfigure}{0.45\textwidth}
        \centering
        \includegraphics[width=\textwidth]{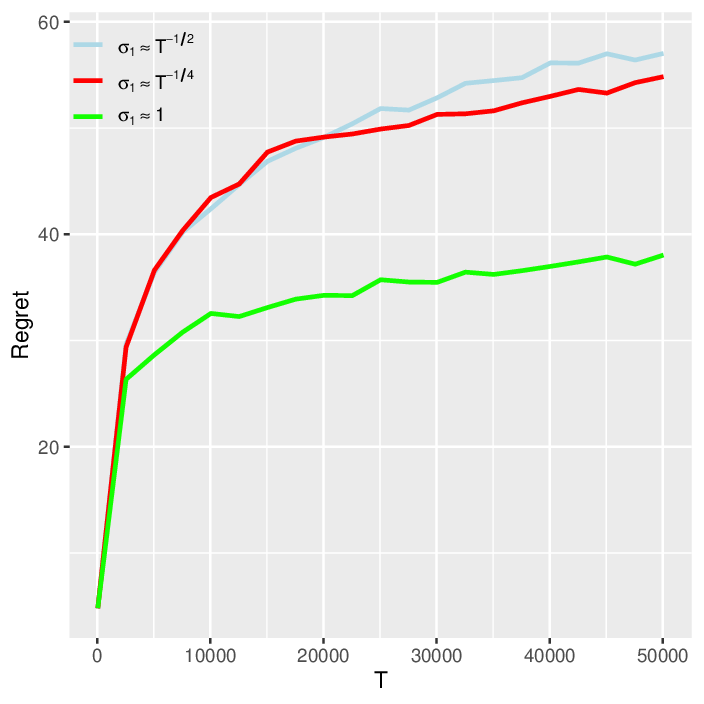}
        \caption{Regrets of UCB-V under different $\sigma_1$}
        \label{subfig:regret-diff-sigma}
    \end{subfigure}
    \hfill
    \begin{subfigure}{0.45\textwidth}
        \centering
        \includegraphics[width=\textwidth]{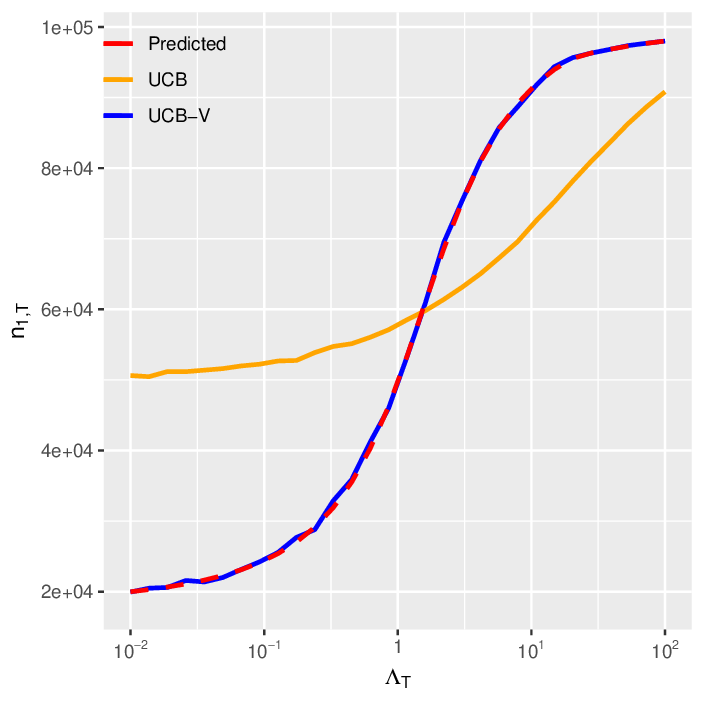}
        \caption{Arm-pulling numbers}
        \label{subfig:phase-transition-n1T}
    \end{subfigure}
    \caption{(a): The regrets of UCB-V with $\sigma_2 \asymp T^{-1/4},\Delta_2 \asymp 1/\sqrt{T}$ fixed and $\sigma_1 \asymp T^{-1/2},T^{-1/4},1$, each instance with $10$ repetitions. (b): The median and $30\%$ quantile of $n_{1,T}$ (optimal arm-pulling count) for UCB and UCB-V, under varying $\Lambda_T$ in the $\sigma_1 = \mathfrak{o}(\sigma_2)$ regime, with $T = 1,000,000$ over $30$ repetitions for each $\Delta_2$. The red dotted line is the predicted $n_{1,T}$ of UCB-V using \eqref{eq:fpe_star}.}
    \label{fig:regret_and_PT}
\end{figure}

\paragraph{Notation.}

For any positive integer $n$, denote by $[n]=[1:n]$ set $\{1,\ldots,n\}$. For $a,b \in \R$, $a\vee b\equiv \max\{a,b\}$ and $a\wedge b\equiv\min\{a,b\}$. For any $a \in \R$, $a_+ \equiv \max\{a,0\}$. Throughout this paper, we regard $T$ as our fundamental large parameter. For any possibly $T$-dependent quantities $a \equiv a(T)$ and $b \equiv b(T)$, we say that $a = \mathfrak{o}(b)$ or $b = \omega(a)$ if $\lim_{T \to \infty} a/b = 0$. Similarly, $a = \mathcal{O}(b)$ or $b =\Omega(a)$ if $\lim_{T \to \infty} \abs{a/b} \leq C$ for some constant $C$. If $a = \mathcal{O}(b)$ and $a = \Omega(b)$ hold simultaneously, we say that $a = \Theta(b)$, or $a \asymp b$, and write $a \sim b$ in the special case when $\lim_{T \to \infty} a/b = 1$. If either sequence  $a$ or $b$ is random, we say that $a = \mathfrak{o}_\mathbf{P}(b)$ if $a/b \xrightarrow{p} 0$ as $T \to \infty$.

\paragraph{Organization.}
The remainder of the paper is organized as follows. We begin by presenting some implicit bounds on the arm-pulling numbers in Section \ref{sec:asymp_ucbv_implicit}. These implicit bounds serve as motivation for our main results: the asymptotic characterization of the arm-pulling numbers for the UCB-V algorithm in Section \ref{sec:asymp_ucbv}, and a refined regret analysis for variance-aware decision-making in Section \ref{sec:regret}. Additionally, we provide our discussions on extension to the $K$-arm setting in Section \ref{app:Karm}. All proofs and technical lemmas are relegated to the Appendix.

\section{Implicit bounds for arm-pulling numbers}\label{sec:asymp_ucbv_implicit}
\subsection{Additional notation} 
For  $\rho > 1$ and $T \in \mathbb{N}_+$, let
\begin{align*}
    \overline{\sigma}_a(\rho;T) \equiv \frac{\sigma_a}{\sqrt{\rho \log T} },\quad  \overline{\Delta}_{a}(\rho;T) \equiv \frac{\Delta_a}{\rho \log T}, \quad a \in [K].
\end{align*}
Whenever there is no confusion, we will simply write $\overline{\sigma}_a \equiv \overline{\sigma}_a(\rho;T)$ and $\overline{\Delta}_{a} \equiv \overline{\Delta}_{a}(\rho;T)$. We will also mention that $\sigma_a$ and $\Delta_a$ may vary with $T$.

Next, for any $\sigma \geq 0$ and $n \in \mathbb{N}_+$, we define
\begin{align*}
    \varphi(n;\sigma) \equiv \frac{\sigma \vee n^{-1/2}}{\sqrt{n}}.
\end{align*}
It is straightforward to verify that for each fixed $\sigma \geq 0$, function $\varphi( \cdot ; \sigma): \mathbb{R}_{\geq 0} \to \mathbb{R}_{\geq 0}$ is  monotonic. Its inverse, denoted as $n(\cdot ; \sigma): \mathbb{R}_{\geq 0} \to \mathbb{R}_{\geq 0}$, is given by $n(\varphi ; \sigma ) \equiv (\sigma^2 \vee \varphi ) / \varphi^2$.
Consequently, studying the behavior of $n_{a,T}$ corresponds to analyzing $\varphi(n_{a,T}; \overline{\sigma}_{a})$. For simplicity, denote by $\varphi_{a,t} \equiv \varphi(n_{a,t};\overline{\sigma}_{a})$ for any $a \in [K]$ and $t\in [T]$.

\subsection{Implicit bounds for arm-pulling numbers}
Consider Algorithm \ref{alg:var-ucb} with $K = 2$. Our starting point is the following concentration result for $\varphi_{a,T}, a \in [2]$.
\begin{proposition}\label{prop-perturbed-equation}
Recall that $\overline{\sigma}_a \equiv \overline{\sigma}_a(\rho;T)$ and $\overline{\Delta}_a \equiv \overline{\Delta}_a(\rho;T)$ for $a \in [2]$. Fix any $\delta > 0$, $\rho > 1$, and any positive integer $T \geq 4$  such that 
\begin{align*}
    \epsilon \equiv \epsilon(\delta;\rho;T) 
    \equiv 5\bigg(\sqrt{48 \frac{ \log (\log(T/\delta) /\delta)}{\rho \log T}}+ \frac{128\log(1/\delta)}{\rho \log T}\bigg)^{1/2} + \frac{200}{\sqrt{T}} \leq \frac{1}{2}.
\end{align*}
Then, with probability at least $1-\delta$, we have
\begin{align*}
\bigg(\frac{1 - \epsilon}{1 + \epsilon}\bigg)^4 \leq \frac{\overline{\sigma}_{1}^2 \vee \varphi_{1,T}}{T \varphi^2_{1,T}} + \frac{\overline{\sigma}_{2}^2 \vee (\varphi_{1,T} + \overline{\Delta}_{2})}{T(\varphi_{1,T} + \overline{\Delta}_{2})^2} \leq \bigg(\frac{1 + \epsilon}{1 - \epsilon}\bigg)^4
\end{align*}
and
\begin{align*}
 \bigg(\frac{1 - \epsilon}{1 + \epsilon}\bigg)^4 \leq \frac{\varphi_{2,T}}{\varphi_{1,T} +\overline{\Delta}_{2}}  \leq  \bigg(\frac{1 + \epsilon}{1 - \epsilon}\bigg)^4.
\end{align*}
\end{proposition}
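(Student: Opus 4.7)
The proof naturally splits into a uniform-concentration step and a UCB balance step. The second display is the core content: once $\varphi_{2,T}\approx\varphi_{1,T}+\overline{\Delta}_2$ is established, the first display follows from rewriting the hard constraint $n_{1,T}+n_{2,T}=T-1$ via $n_{a,T}=(\overline{\sigma}_a^2\vee\varphi_{a,T})/\varphi_{a,T}^2$, with the $200/\sqrt{T}$ slack in $\epsilon$ absorbing the $(T-1)/T$ normalization and the $\pm 1$ edge corrections between $n_{a,t_a}$ and $n_{a,T}$.

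For the concentration step I would peel the time index $t\in[T]$ over dyadic scales of $n_{a,t}$ and invoke Bernstein's inequality for $\bar X_{a,t}$ together with an analogous second-moment bound for $\hat{\sigma}_{a,t}^2$, so as to obtain, with probability at least $1-\delta$ and for both $a\in\{1,2\}$,
\begin{align*}
\sup_{t\in[T]}\bigl|\bar X_{a,t}-\mu_a\bigr| &\lesssim \sigma_a\sqrt{\tfrac{\log(\log(T/\delta)/\delta)}{n_{a,t}}}+\tfrac{\log(1/\delta)}{n_{a,t}},\\
\sup_{t\in[T]}\bigl|(\hat{\sigma}_{a,t}\vee n_{a,t}^{-1/2})-(\sigma_a\vee n_{a,t}^{-1/2})\bigr| &\lesssim \sqrt{\tfrac{\log(\log(T/\delta)/\delta)}{n_{a,t}}}+\tfrac{\log(1/\delta)}{\sqrt{n_{a,t}}}.
\end{align*}
After dividing by $\rho\log T$ and collecting constants, the right-hand sides become precisely the $\epsilon/2$ contributions built into the proposition's definition of $\epsilon$; the $\log\log(T/\delta)$ records the cost of the peeling union bound, which is what allows uniformity in $t$ while paying only $\mathcal{O}(\log\log T)$ rather than $\mathcal{O}(\log T)$.

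For the balance step, let $t_a\ge 2$ denote the last round in which arm $a$ is pulled (well defined because both arms are pulled at initialization), so that $n_{a,t_a}=n_{a,T}-1$ and $n_{a',t_a}\le n_{a',T}$. The decision rule yields $\mathrm{UCB}(a,t_a)\ge\mathrm{UCB}(a',t_a)$. I would subtract the two optimistic-reward expressions, divide through by $\rho\log T$, use the concentration bounds to replace $\bar X_{a,\cdot}$ by $\mu_a$ and $\hat{\sigma}_{a,\cdot}/\sqrt{\rho\log T}$ by $\overline{\sigma}_a$ up to multiplicative factors $(1\pm\epsilon)$, and exploit the monotonicity of $\varphi(\cdot;\overline{\sigma})$ in the first argument to pass from counts at $t_a$ to counts at $T$. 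Running this for both $a\in\{1,2\}$ sandwiches the ratio $\varphi_{2,T}/(\varphi_{1,T}+\overline{\Delta}_2)$ inside $[(\tfrac{1-\epsilon}{1+\epsilon})^4,(\tfrac{1+\epsilon}{1-\epsilon})^4]$; the exponent $4$ reflects the four independent perturbations (two empirical means and two empirical standard deviations) cascading through the non-linear map $\varphi$.

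The hardest technical point, I expect, is the concentration of $\hat{\sigma}_{a,t}$ when $\sigma_a$ is near zero: a Bernstein-type bound on $\hat{\sigma}_{a,t}^2$ does not yield a useful multiplicative bound $\hat{\sigma}_{a,t}/\sigma_a\in[1\pm\epsilon]$ in that regime. This is precisely why $\varphi$ is defined with the floor $\sigma\vee n^{-1/2}$: one must argue about $\hat{\sigma}_{a,t}\vee n_{a,t}^{-1/2}$ and $\sigma_a\vee n_{a,t}^{-1/2}$ as a single unit, using the $n^{-1/2}$ guard to stabilize the ratio even when both the true and empirical variances collapse. A secondary complication is ensuring that the peeling step simultaneously produces multiplicative control on the combined UCB bonus rather than merely additive control on the mean and variance separately; pushing these errors through $\varphi$ is what ultimately dictates the stated form of $\epsilon$ and the quartic exponent appearing in the final display.
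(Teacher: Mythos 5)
Your proposal follows essentially the same route as the paper: a time-uniform Bernstein/LIL-type concentration for the empirical means and variances (the paper's Lemma~\ref{lem-variance-lil} and Proposition~\ref{prop:mean_var_conc}), a balance argument at the last pull time of each arm converting the UCB rule into a two-sided bound on $(\overline{\Delta}_a+\varphi_{1,T})/\varphi_{a,T}$ (Proposition~\ref{prop-ratio-sandwich-inequality}), inversion of $\varphi$ to transfer this to $n_{a,T}$ (Proposition~\ref{prop-nat-to-varphi}), and summation over arms using $\sum_a n_{a,T}=T$; your diagnosis of the $\hat{\sigma}\vee n^{-1/2}$ floor as the delicate point matches the paper's Lemma~\ref{lem:varphi_est}. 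The one ingredient you leave implicit is that absorbing the $n_{a,T}/(n_{a,T}-1)$ corrections into $\epsilon$ requires an a priori lower bound on the pull counts ($n_{1,T}\gtrsim\sqrt{T}$ and $n_{2,T}\gtrsim\rho\log T$), which the paper establishes separately by a contradiction/bootstrap argument from the balance inequality itself (Lemma~\ref{lem-n1T-lb-rough}); without it the edge corrections are not controlled by the $200/\sqrt{T}$ slack alone.
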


The proof of Proposition \ref{prop-perturbed-equation} above is inspired by the delicate analysis of bonus terms for the canonical UCB \citep{khamaru2024inference} in the large-$T$ regime and is presented in Section \ref{sec:proof of sandwich}. Proposition \ref{prop-perturbed-equation} establishes a sandwich inequality, demonstrating that as term $\epsilon(\delta;\rho;T)$ approaches $0$, the concerned ratio converges to $1$. This can be viewed as a non-asymptotic and variance-aware extension of the results found in \citet{kalvit2021closer} and \citet{khamaru2024inference} for the canonical UCB.  To incorporate variance information, we have also developed a Bernstein-type non-asymptotic law of iterated logarithm result in Lemma \ref{lem-variance-lil}, which is of independent interest. We now make several comments regarding Proposition~\ref{prop-perturbed-equation}:

\begin{enumerate}
    \item \emph{Deriving the asymptotic equation}. When selecting $\delta = (\log T)^{-1}$, we find that for any $\rho > 0$, the $\epsilon(\delta;\rho;T)$ term approaches $0$ at a rate of $\mathcal{O}\big(\frac{\log\log T}{\log T}\big)$. Under this selection, Proposition~\ref{prop-perturbed-equation} yields the following probability convergence guarantee
\begin{align}\label{eq-perturbed-equation-asymptotic}
\frac{\overline{\sigma}_{1}^2 \vee \varphi_{1,T}}{T \varphi^2_{1,T}} + \frac{\overline{\sigma}_{2}^2 \vee (\varphi_{1,T} + \overline{\Delta}_{2})}{T(\varphi_{1,T} + \overline{\Delta}_{2})^2}= 1 +\mathfrak{o}_\mathbf{P}(1).
\end{align}
It is noteworthy that in this asymptotic result, the $\mathfrak{o}_\mathbf{P}(1)$ term converges to $0$ at a relatively slow rate, both in probability and magnitude. For each $T$, it has a probability of at least $1-\mathcal{O}\big(\frac{1}{\log T}\big)$ of being bounded by $\mathcal{O}\big(\frac{\log\log T}{\log T}\big)$. This slow rate appears not only in our result but also in \citet{kalvit2021closer} and \citet{khamaru2024inference}, necessitating a considerably large $T$ to observe theoretical predictions clearly in experiments, as illustrated in Figure~\ref{subfig:phase-transition-n1T}.
    \item \emph{Permissible values of $\rho$ for high probability bounds}. One popular scale selection of $\delta$ is $\delta \asymp 1/T^2$, where Proposition \ref{prop-perturbed-equation} becomes a high-probability guarantee widely adopted in the pure-exploration and regret analysis literature \citep{auer2002using,audibert2006use}. With such selection, the requirement $\epsilon(\delta;\rho;T) < 1/2$ translates to $\rho > c'$ for some universal and sufficiently large constant $c'$. Thus, we have that with probability at least $1-\mathcal{O}(1/T^2)$, 
    \begin{align}\label{ineq:hpb_apn}
        1-(c')^{-1} \leq \frac{\overline{\sigma}_{1}^2 \vee \varphi_{1,T}}{T \varphi^2_{1,T}} + \frac{\overline{\sigma}_{2}^2 \vee (\varphi_{1,T} + \overline{\Delta}_{2})}{T(\varphi_{1,T} + \overline{\Delta}_{2})^2}\leq 1+(c')^{-1}.
    \end{align}
In particular, since the centered term is a monotone increasing function of $\varphi_{1,T}$, this result provides a high-probability confidence region for $\varphi_{1,T}$ and consequently for $n_{1,T}$, as shown in Figure~\ref{subfig:asymptotic-equation and high probability bound}. These high probability bounds of $n_{1,T}$ enable us to derive the refined regret bound for UCB-V in Section~\ref{sec:regret}.
\end{enumerate}

The implications outlined above will be made precise in the subsequent two sections.

\begin{figure}[t]
    \centering
    \begin{subfigure}{0.45\textwidth}
        \centering
        \includegraphics[width=\textwidth]{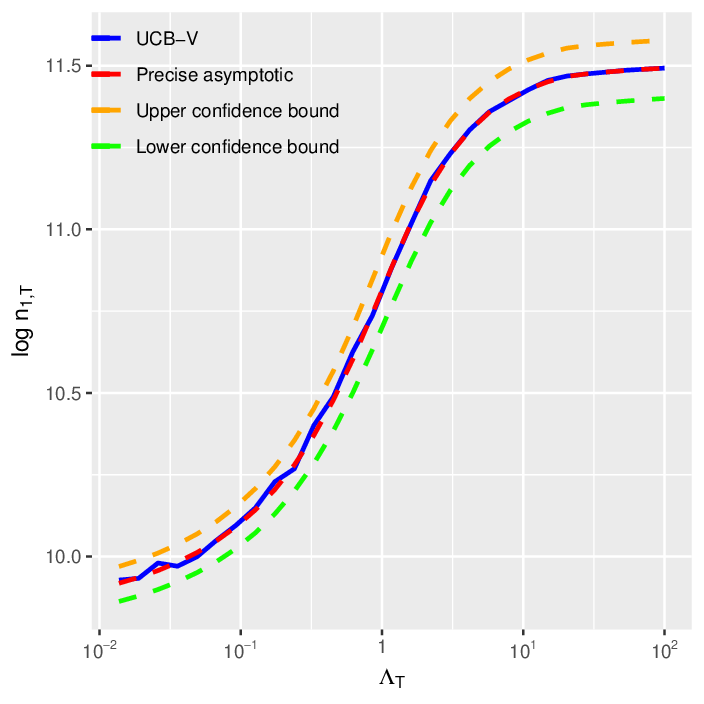}
        \caption{Confidence region of $n_{1,T}$}
        \label{subfig:asymptotic-equation and high probability bound}
    \end{subfigure}
    \hfill
    \begin{subfigure}{0.45\textwidth}
        \centering
        \includegraphics[width=\textwidth]{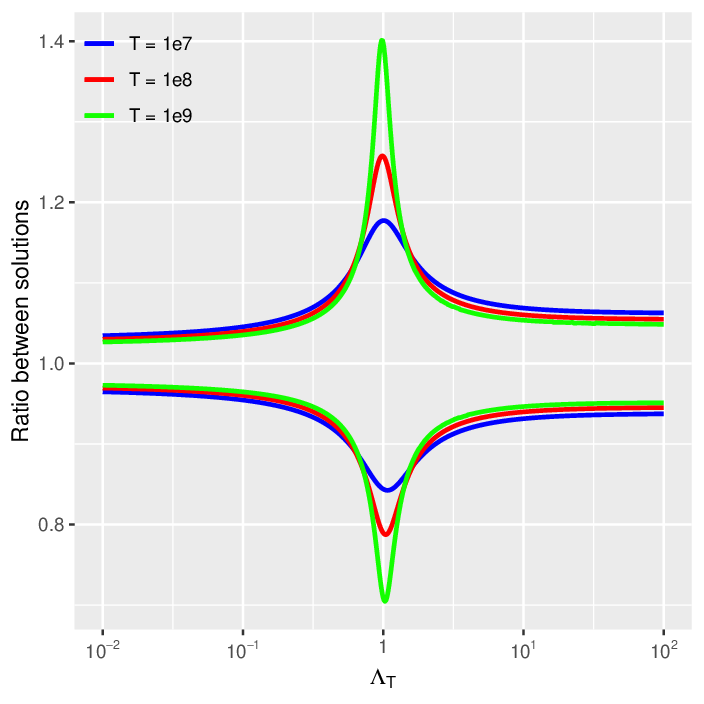}
        \caption{Stability of perturbed equation}
        \label{subfig:instability_varphi_star}
    \end{subfigure}
    \caption{(a): The confidence region of $n_{1,T}$ under different $\Lambda_T$, with $\sigma_1 = \mathfrak{o}(\sigma_2) $. The dotted lines represent the exact and perturbed solutions of \eqref{eq:fpe}, where the perturbed curves solve $f(\varphi) = 1\pm 1/\log T$. The UCB-V line shows the number of arm pulls under the UCB-V algorithm with $30\%$ quantile, with $T = 10^5$ over $30$ repetitions. 
    (b): The ratio between the perturbed solution \( f(\varphi) = 1 \pm 1/\log T \) and the exact solution \( f(\varphi) = 1 \) is shown for \( T = 10^5, 10^7, 10^9 \) under different \( \Lambda_T \). It can be seen that the ratio deviates from 1 as \( \Lambda_T \to 1 \) with increasing \( T \).
    }
    \label{fig:two_fig_asymp_eq}
\end{figure}

\section{Asymptotic characterization of arm-pulling numbers}\label{sec:asymp_ucbv}
\subsection{Stability of the asymptotic equation} 
Consider the deterministic equation 
\begin{align}\label{eq:fpe}
  f(\varphi) = 1 \quad  \text{with} \quad  f(\varphi) \equiv \frac{\overline{\sigma}_{1}^2 \vee \varphi}{T \varphi^2 } + \frac{ \overline{\sigma}_{2}^2 \vee (\varphi + \overline{\Delta}_{2} )}{T(\varphi + \overline{\Delta}_{2})^2}.
\end{align}
Recall that $\overline{\sigma}_a \equiv \sigma_a/\sqrt{\rho \log T}$ and $\overline{\Delta}_a \equiv \Delta_{a} / (\rho \log T)$ for $a = 1,2$, so the equation actually depends on both $\rho$ and $T$. For any fixed $T \in \mathbb{N}_+$ and $\rho > 1$, Proposition~\ref{prop-perturbed-equation} indicates that $\varphi_{1,T}$ satisfies $f(\varphi_{1,T}) = 1+\delta$, where $\delta$ represents a perturbation term. Given the asymptotic equation derived in \eqref{eq-perturbed-equation-asymptotic}, it is reasonable to conjecture that the behavior of $\varphi_{1,T}$ aligns with the solution of the above deterministic equation. To formally connect $\varphi_{1,T}$ to the solution of \eqref{eq:fpe}, we conduct a perturbation analysis of this equation. Specifically, we have the following result.

\begin{lemma}\label{lem:perturb_lem}
Fix any $T \in \mathbb{N}_+$ and $\rho > 1$. It holds that:
\begin{enumerate}
    \item The fixed-point equation (\ref{eq:fpe}) admits a unique solution $\varphi^{\star} \in (1/T,1)$.
    \item Assume that there exist some $\delta \in (-1/2,1/2)$ and $\varphi_{\delta}$ such that $f(\varphi_{\delta}) = 1 + \delta$. Then we have
    \begin{align}\label{eq:pertub}
         \biggabs{\frac{\varphi_{\delta}}{\varphi^{\star}} -1} \leq  27\abs{\delta}\cdot \bigg(1 +  \biggabs{\frac{\overline{\sigma}_{2}^2}{T\overline{\Delta}_{2}^2} -1 }^{-1} \wedge \frac{\overline{\sigma}_{2}}{\overline{\sigma}_{1} \vee T^{-1/2}} \bigg).
    \end{align}
\end{enumerate}
\end{lemma}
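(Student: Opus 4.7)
The plan is to dispatch Part~1 by monotonicity and boundary inspection, then attack Part~2 through multiplicative sandwich bounds on $g_1,g_2$ combined with a case split driven by $p\equiv g_1(\varphi^{\star})$. For Part~1, I observe that $g_1(\varphi)\equiv(\overline{\sigma}_1^2\vee\varphi)/(T\varphi^2)$ and $g_2(\varphi)\equiv(\overline{\sigma}_2^2\vee(\varphi+\overline{\Delta}_2))/(T(\varphi+\overline{\Delta}_2)^2)$ are each the maximum of two strictly decreasing rational functions, so $f=g_1+g_2$ is continuous and strictly decreasing on $(0,\infty)$. At the left endpoint $f(1/T)\geq g_1(1/T)=T(\overline{\sigma}_1^2\vee T^{-1})\geq 1$, and at $\varphi=1$, using $\overline{\sigma}_a^2\leq 1/(\rho\log T)\leq 1$, a direct computation gives $f(1)\leq 2/T<1$ for $T\geq 4$, so the intermediate value theorem delivers the unique $\varphi^{\star}\in(1/T,1)$.

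For Part~2, I would assume WLOG $\delta>0$, so $r\equiv\varphi_\delta/\varphi^{\star}\in(0,1)$ (the $\delta<0$ case is symmetric). Set $r_2\equiv(\varphi_\delta+\overline{\Delta}_2)/(\varphi^{\star}+\overline{\Delta}_2)\in[r,1)$; a direct calculation gives the identity $1-r=(1-r_2)(1+\overline{\Delta}_2/\varphi^{\star})$. The key step is to exploit that, from inspection of the two branches of each maximum, $\varphi\mapsto\varphi g_1(\varphi)$ is non-increasing while $\varphi\mapsto\varphi^2 g_1(\varphi)$ is non-decreasing, with analogous statements for $g_2$ in the shifted variable $\varphi+\overline{\Delta}_2$; this yields the sandwich bounds $1/r\leq g_1(\varphi_\delta)/g_1(\varphi^{\star})\leq 1/r^2$ and $1/r_2\leq g_2(\varphi_\delta)/g_2(\varphi^{\star})\leq 1/r_2^2$. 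Writing $p\equiv g_1(\varphi^{\star})$ and $q\equiv 1-p=g_2(\varphi^{\star})$, expanding $1+\delta=p\cdot(g_1(\varphi_\delta)/g_1(\varphi^{\star}))+q\cdot(g_2(\varphi_\delta)/g_2(\varphi^{\star}))$, and applying the two lower bounds selectively produce the two core inequalities \textbf{(A)}~$1-r\leq\delta/p$ and \textbf{(B)}~$1-r\leq(\delta/q)(1+\overline{\Delta}_2/\varphi^{\star})$.

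It then remains to verify, separately, the $(1+|\Lambda_T^2-1|^{-1})$ and $(1+\overline{\sigma}_2/(\overline{\sigma}_1\vee T^{-1/2}))$ forms of the target bound (recall $\Lambda_T^2=\overline{\sigma}_2^2/(T\overline{\Delta}_2^2)$); their minimum then comes for free. I would split into three cases: (i) if $p\geq 1/2$ then (A) gives $1-r\leq 2\delta$, which fits both targets; (ii) if $p<1/2$ and $\overline{\sigma}_2^2<\varphi^{\star}+\overline{\Delta}_2$, then $g_2(\varphi^{\star})=1/(T(\varphi^{\star}+\overline{\Delta}_2))>1/2$ forces $\overline{\Delta}_2\leq 1/T\leq\varphi^{\star}$, so (B) gives $1-r\leq 4\delta$; (iii) if $p<1/2$ and $\overline{\sigma}_2^2\geq\varphi^{\star}+\overline{\Delta}_2$, then $g_2(\varphi^{\star})=\overline{\sigma}_2^2/(T(\varphi^{\star}+\overline{\Delta}_2)^2)\leq\Lambda_T^2$ gives $p\geq(1-\Lambda_T^2)_+$, which feeds the $|\Lambda_T^2-1|^{-1}$ form through (A) when $\Lambda_T<1$ and through the algebraic inequality $\overline{\Delta}_2/\varphi^{\star}=1/(\Lambda_T/\sqrt{1-p}-1)\leq (1+\Lambda_T)/(\Lambda_T^2-1)$ in (B) when $\Lambda_T\geq 1$; for the second target, inverting $p=g_1(\varphi^{\star})$ together with $\varphi^{\star}\geq 1/T$ yields $\varphi^{\star}\geq(\overline{\sigma}_1\vee T^{-1/2})/\sqrt{pT}$, and combining with $\overline{\Delta}_2\leq\varphi^{\star}+\overline{\Delta}_2=\overline{\sigma}_2/\sqrt{Tq}$ and $q>1/2$ gives $\overline{\Delta}_2/\varphi^{\star}\leq\sqrt{2p}\,\overline{\sigma}_2/(\overline{\sigma}_1\vee T^{-1/2})$, feeding the second form through (B).

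The hard part will be case (iii), where the fixed-point equation becomes ill-conditioned exactly when $g_2(\varphi^{\star})\to 1$: this happens either as $\Lambda_T\to 1$ (the phase-transition point) or when the first term of $f$ becomes negligible relative to the second (corresponding to $\sigma_1\ll\sigma_2$), and the two factors inside the minimum capture these two complementary sources of ill-conditioning. Establishing both forms of the target bound simultaneously in this regime requires careful bookkeeping of how $\varphi^{\star}$, $\overline{\Delta}_2$ and $p$ interlink through the fixed-point equation; the universal constant $27$ is a generous round-up that absorbs all factors of $2$ and $\sqrt{2}$ accumulated across the sub-cases.
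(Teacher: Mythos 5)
Your proposal is correct and follows essentially the same route as the paper's proof: part~1 by monotonicity plus boundary evaluation, and part~2 by splitting on which of the two terms of $f(\varphi^\star)$ is at least $1/2$, then sub-casing on the active branch of each maximum and on the sign of $\overline{\sigma}_2^2/(T\overline{\Delta}_2^2)-1$; your inequalities (A) and (B) are exactly the paper's Case~1 and Case~2 lower bounds on $\delta$, repackaged via the ratios $r,r_2$ and the monotonicity of $\varphi\, g_i(\varphi)$ and $\varphi^2 g_i(\varphi)$. The only point left implicit is that in case~(iii) with $\Lambda_T\geq 1$ one must convert $\overline{\Delta}_2/\varphi^\star\leq 1/(\Lambda_T-1)$ into $C\big(1+(\Lambda_T^2-1)^{-1}\big)$ by noting that either $\Lambda_T$ is bounded (the paper shows $\Lambda_T^2\leq 9/4$ when $\overline{\Delta}_2/\varphi^\star>2$) or $1/(\Lambda_T-1)$ is itself $\mathcal{O}(1)$ — a bookkeeping step your constant $27$ indeed absorbs.
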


The proof of Lemma \ref{lem:perturb_lem} above is provided in Section \ref{sec:proof of pertub}. Intuitively, Lemma~\ref{lem:perturb_lem} asserts that in either the homogeneous variance case ($\sigma_2 \asymp \sigma_1$) or when the ratio $\overline{\sigma}_2^2 / (T \overline{\Delta}_2^2)$ is bounded away from $1$, the solution $\varphi_{\delta}$ of the $\delta$-perturbed equation is provably bounded by $\big(1 \pm \mathcal{O}(\delta)\big) \varphi^{\star}$. However, when $\sigma_1 = \mathfrak{o}(\sigma_2)$ and the ratio $\overline{\sigma}_2^2 / (T \overline{\Delta}_2^2)$ approaches $1$, the stability guarantee of the lemma breaks down. Although this result may seem pessimistic---as $\overline{\sigma}_2^2 / (T \overline{\Delta}_2^2) \sim 1$ causes the right-hand side of \eqref{eq:pertub} to diverge---it accurately reflects the behavior of the perturbed solution. This is illustrated in Figure~\ref{subfig:instability_varphi_star}, where instability arises when $\overline{\sigma}_2^2 / (T \overline{\Delta}_2^2) = 1$.

\subsection{Asymptotic stability}  
One important corollary of the perturbation bound is the following asymptotic stability result. Recall that $\overline{\Delta}_{a} \equiv \Delta_{a} / (\rho \log T)$ and $\overline{\sigma}_{a} \equiv \sigma_a / \sqrt{\rho \log T}$ for $a \in [2]$.

\begin{theorem}\label{thm:stable}
Consider Algorithm \ref{alg:var-ucb} with $K = 2$. 
Assume that
\begin{align}\label{ineq:bound_assume}
    \limsup_{T \to \infty} \Bigg\{ \biggabs{\frac{\overline{\sigma}_{2}^2}{T\overline{\Delta}_{2}^2} - 1}^{-1 } \wedge \frac{\overline{\sigma}_{2}}{\overline{\sigma}_{1} \vee T^{-1/2}} \Bigg\} < \infty.
\end{align}
Then for $a \in [2]$, we have
\begin{align*}
     \frac{n_{a,T}}{n_{a,T}^\star} \xrightarrow{p} 1,
\end{align*}
where $n_{1,T}^\star$ is the unique solution to equation
\begin{align}\label{eq:fpe_star}
    \frac{\overline{\sigma}_{1}^2 \vee \varphi(n_{1,T}^\star;\overline{\sigma}_{1})}{T \varphi^2(n_{1,T}^\star;\overline{\sigma}_{1}) } + \frac{\overline{\sigma}_{2}^2 \vee (\varphi(n_{1,T}^\star;\overline{\sigma}_{1}) + \overline{\Delta}_{2})}{T(\varphi(n_{1,T}^\star;\overline{\sigma}_{1}) + \overline{\Delta}_{2})^2} = 1
\end{align}
and 
\begin{align*}
    n_{2,T}^\star \equiv \frac{\overline{\sigma}_{2}^2 \vee (\varphi(n_{1,T}^\star;\overline{\sigma}_{1}) + \overline{\Delta}_{2})}{(\varphi(n_{1,T}^\star;\overline{\sigma}_{1}) + \overline{\Delta}_{2})^2}.
\end{align*}
\end{theorem}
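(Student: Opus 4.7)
The plan is to combine the concentration statement in Proposition~\ref{prop-perturbed-equation} with the perturbation analysis in Lemma~\ref{lem:perturb_lem} and then push the conclusions back from the transformed quantities $\varphi_{a,T}$ to the arm-pulling counts $n_{a,T}$ using the inverse relation $n(\varphi;\sigma) = (\sigma^2 \vee \varphi)/\varphi^2$. Concretely, I would first choose a sequence $\delta_T \to 0$ of confidence levels for which the hypothesis $\epsilon(\delta_T;\rho;T) \leq 1/2$ of Proposition~\ref{prop-perturbed-equation} holds for all large $T$; the natural choice $\delta_T = 1/\log T$ yields $\epsilon_T \equiv \epsilon(\delta_T;\rho;T) = \mathcal{O}(\sqrt{\log\log T / \log T}) \to 0$. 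Then Proposition~\ref{prop-perturbed-equation} says that with probability $1-\delta_T \to 1$, $\varphi_{1,T}$ solves $f(\varphi_{1,T}) = 1 + \tilde\delta_T$ for some random $\tilde\delta_T$ with $|\tilde\delta_T| \leq C\epsilon_T = \mathfrak{o}(1)$, and simultaneously $\varphi_{2,T}/(\varphi_{1,T} + \overline{\Delta}_2) = 1 + \mathfrak{o}(1)$.

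Next, I would invoke part~(2) of Lemma~\ref{lem:perturb_lem} with $\delta = \tilde\delta_T$, using the standing assumption~(\ref{ineq:bound_assume}) to bound the parenthesized factor on the right-hand side of~(\ref{eq:pertub}) by a constant $M$ uniformly in $T$ (for $T$ large enough that the limsup in~(\ref{ineq:bound_assume}) is nearly achieved). This gives $|\varphi_{1,T}/\varphi^{\star} - 1| \leq 27 M \tilde\delta_T$, which together with the event from Proposition~\ref{prop-perturbed-equation} shows $\varphi_{1,T}/\varphi^{\star} \xrightarrow{p} 1$. Here $\varphi^{\star}$ is the unique solution of~(\ref{eq:fpe}), and by construction $\varphi^{\star} = \varphi(n_{1,T}^\star;\overline{\sigma}_{1})$ so that $n_{1,T}^\star = (\overline{\sigma}_{1}^2 \vee \varphi^{\star})/(\varphi^{\star})^2$.

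To convert the ratio convergence of $\varphi_{1,T}$ into ratio convergence of $n_{1,T}$, I would write
\begin{align*}
\frac{n_{1,T}}{n_{1,T}^\star} = \frac{\overline{\sigma}_{1}^2 \vee \varphi_{1,T}}{\overline{\sigma}_{1}^2 \vee \varphi^{\star}} \cdot \bigg(\frac{\varphi^{\star}}{\varphi_{1,T}}\bigg)^{\!2}.
\end{align*}
The second factor tends to $1$ in probability by what has just been proved. For the first factor, the map $x \mapsto (\overline{\sigma}_{1}^2 \vee x)/(\overline{\sigma}_{1}^2 \vee \varphi^{\star})$ satisfies, for any $x$ with $|x/\varphi^{\star} - 1| \leq 1/2$, the inequality $|(\overline{\sigma}_{1}^2 \vee x)/(\overline{\sigma}_{1}^2 \vee \varphi^{\star}) - 1| \leq |x/\varphi^{\star} - 1|$ regardless of where $\varphi^{\star}$ sits relative to $\overline{\sigma}_{1}^2$ (a short case analysis on whether $\varphi^{\star} \geq \overline{\sigma}_{1}^2$ or not). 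This delivers $n_{1,T}/n_{1,T}^\star \xrightarrow{p} 1$.

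Finally, for the second arm I would combine the second bound of Proposition~\ref{prop-perturbed-equation}, which forces $\varphi_{2,T} = (1+\mathfrak{o}_{\mathbf{P}}(1))(\varphi_{1,T} + \overline{\Delta}_{2})$, with the just-proved convergence $\varphi_{1,T} \to \varphi^{\star}$ in ratio, so that $\varphi_{2,T}/(\varphi^{\star} + \overline{\Delta}_{2}) \xrightarrow{p} 1$. Applying the same inverse-map argument as above, with $\overline{\sigma}_{1}$ replaced by $\overline{\sigma}_{2}$ and $\varphi^{\star}$ replaced by $\varphi^{\star} + \overline{\Delta}_{2}$, yields $n_{2,T}/n_{2,T}^\star \xrightarrow{p} 1$. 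The main technical subtlety I foresee is handling the discontinuity of the derivative of the map $\varphi \mapsto n(\varphi;\sigma)$ at the threshold $\varphi = \sigma^2$; this is why I plan to argue directly with ratios and the elementary inequality above, rather than differentiating, so that the argument is uniform in $T$ and does not break down when $\varphi^{\star}$ happens to lie near $\overline{\sigma}_{1}^2$ or $\overline{\sigma}_{2}^2$. Everything else is a bookkeeping exercise carried out on the high-probability event supplied by Proposition~\ref{prop-perturbed-equation}.
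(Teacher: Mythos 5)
Your proposal is correct and follows essentially the same route as the paper: apply Proposition~\ref{prop-perturbed-equation} with $\delta = 1/\log T$ to get $f(\varphi_{1,T}) = 1 + \mathfrak{o}_{\mathbf{P}}(1)$, invoke Lemma~\ref{lem:perturb_lem}(2) under assumption~(\ref{ineq:bound_assume}) to conclude $\varphi_{1,T}/\varphi^{\star} \xrightarrow{p} 1$, and then transfer the ratio convergence from $\varphi$ to $n$ via the elementary inequality $1 \wedge (a/b) \leq (a\vee c)/(b\vee c) \leq 1 \vee (a/b)$, which is exactly the content of the paper's Proposition~\ref{prop-varphi-n-ratio-convergence}; the treatment of arm~2 via the second sandwich bound likewise matches the paper's argument.
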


The proof of Theorem \ref{thm:stable} above is provided in Section \ref{sec:proof of main}, relying on Proposition \ref{prop-perturbed-equation} and Lemma \ref{lem:perturb_lem}. Lemma \ref{lem:perturb_lem} shows that the boundedness condition in \eqref{ineq:bound_assume} is essential for ensuring the stability of the arm-pulling process, as defined in Definition \ref{def:stable}. Specifically, when $\sigma_1 = \sigma_2 = 1$, condition \eqref{ineq:bound_assume} is always satisfied, and the asymptotic equation \eqref{eq:fpe_star} reduces to the canonical UCB setting studied in \citet{kalvit2021closer} and \citet{khamaru2024inference}, where stability is guaranteed. The key insight behind such stability in an even more general homogeneous setting $\sigma_1 \asymp \sigma_2$ is that the optimal arm's pulling number grows linearly with $T$, as noted in \citet[Eqn. (22)]{khamaru2024inference}.  In the inhomogeneous case where $\sigma_1 = \mathfrak{o}(\sigma_2)$, the boundedness of $\abs{\overline{\sigma}_2^2/(T \overline{\Delta}_2^2) - 1 }^{-1}$ becomes crucial for stability and appears to be \textit{novel}, ensuring stability even when $n_{1,T}$ grows sub-linearly in $T$. The instability result in the next subsection complements Theorem \ref{thm:stable} by presenting a counterexample where the boundedness condition in \eqref{ineq:bound_assume} fails. An extension to the $K$-armed setting is discussed in Section \ref{app:Karm}.

\subsubsection{Asymptotic behavior of arm-pulling numbers}
Let us write $\varphi^\star \equiv \varphi(n_{1,T}^\star; \overline{\sigma}_{1})$ for simplicity. From Theorem~\ref{thm:stable}, the asymptotic behavior of $n_{1,T}$ can be derived through $n_{1,T}^\star$, which, in turn, can be fully determined by $\varphi^\star$, provided that the boundedness condition in (\ref{ineq:bound_assume}) holds. Thus, understanding the analytical properties of $\varphi^\star$ is sufficient to determine the asymptotic behavior of $n_{1,T}$.

The function $f(\varphi^\star)$ is known to be monotonic increasing in $\varphi^\star$, and the solution to $f(\varphi^\star) = 1$ lies within the interval $[0, 1]$. This allows us to efficiently compute the numerical behavior of $\varphi^\star$ to predict both asymptotic and finite-time arm-pulling behavior, as illustrated in Figure~\ref{subfig:asymptotic-equation and high probability bound}. However, due to the presence of a maximum operation and the complexity of the underlying equation, obtaining a closed-form solution for $\varphi^\star$ remains difficult. Below, we present several extreme cases that highlight \textit{new} phenomena arising from the incorporation of variance information, which differs from the classical UCB algorithm. A full analytical characterization of $\varphi^\star$ is left for future research. 

\begin{example}\label{example-homogeneous}
When $\sigma_1, \sigma_2 = \Omega(1)$, Eqn.~(\ref{eq:fpe_star}) simplifies to 
\begin{align*}
    \frac{n_{1,T}^\star}{T} + \bigg( \frac{\sigma_1}{\sigma_2} \cdot \sqrt{\frac{T}{n_{1,T}^\star}}  + \frac{\Delta_2}{\sigma_2}\sqrt{\frac{T}{\rho \log T}} \bigg)^{-2} = 1.
\end{align*}
In the moderate gap regime $\Delta_2  \sim \sigma_2 \sqrt{\theta \log T / T}$ for some fixed $\theta \in \R_{\geq 0} $, we have $n_{1,T}^\star /T = \lambda^\star(\theta)\big(1 + \mathfrak{o}(1)\big)$, with $\lambda^\star(\theta)$ being the unique solution to equation 
\begin{align*}
    \lambda + \bigg( \frac{\sigma_1}{\sigma_2} \cdot \frac{1}{\sqrt{\lambda}}  + \sqrt{\theta/{\rho}} \bigg)^{-2} = 1.
\end{align*}
Moreover, we can compute the limits
\[
\lim_{\theta \to +\infty} \lambda^\star(\theta) = 1, \quad \lim_{\theta \to 0} \lambda^\star(\theta) = \frac{\sigma_1^2}{\sigma_{1}^2 + \sigma_2^2}.
\]
This result recovers the asymptotic equation for the canonical UCB in \citet{kalvit2021closer} and \citet{khamaru2024inference} when $\sigma_1 = \sigma_2 = 1$. The $\lambda^\star$ equation derived here can be viewed as an extension of those in \citet{kalvit2021closer}. Notably, in the small-gap limit $\theta \to 0$, the arm-pulling allocation for UCB-V becomes proportional to the variance instead of being equally divided. 
\end{example}

\begin{example}\label{example-phase-transition}
When $\sigma_1 \leq \sqrt{\rho \log T /T}, \sigma_2 = \Omega(1)$, Eqn.~(\ref{eq:fpe_star}) simplifies to 
\begin{align*}
    \frac{n_{1,T}^\star}{T} + \bigg( \frac{1}{\sigma_2}  \sqrt{\frac{\rho\log T}{T}} \cdot \frac{T}{n_{1,T}^\star} + \frac{\Delta_2}{\sigma_2}\sqrt{\frac{T}{\rho \log T}} \bigg)^{-2} = 1.
\end{align*}
In the moderate gap regime where $\Delta \sim \sigma_2 \sqrt{\theta \log T/T}$ for some fixed $\theta \in \R_{\geq 0} \setminus \{ \rho\}$, we have $n_{1,T}^\star /T = \lambda^\star(\theta)\big(1 + \mathfrak{o}(1)\big)$, with $\lambda^\star(\theta)$ being the unique solution to equation 
\begin{align*}
\lambda+  \bigg( \frac{1}{\sigma_{2}} \sqrt{\frac{\rho \log T}{T}} \cdot \frac{1}{\lambda} + \sqrt{\theta/\rho} \bigg)^{-2} = 1.
\end{align*}
Moreover, we can compute the limits 
\[
\lim_{\theta \to +\infty} \lambda^\star(\theta) = 1, \quad \lim_{\theta \to 0} \lambda^\star(\theta) = \frac{2}{\sqrt{1 + 4\overline{\sigma}_2^2 T} + 1}.
\]
This limit indicates a distinct behavior between UCB-V and UCB: as $\theta \to 0$, the number of pulls for the optimal arm becomes sub-linear in $T$, specifically $\mathcal{O}(\sqrt{T}/\overline{\sigma}_2)$, while for UCB, it approaches $T/2$. Additionally, we have a more detailed description of the transition between sub-linear and linear pulling times:
\begin{enumerate}
    \item When $\theta < \rho$, $\lambda^\star(\theta) \leq 1/(\overline{\sigma}_2 (1- \sqrt{\theta/\rho})\sqrt{T})$.
    \item When $\theta > \rho$, $\lambda^\star(\theta) \geq 1- \rho/\theta$.
\end{enumerate}
This result indicates that the transition from $\mathcal{O}(\sqrt{T})$ pulling time to $\Omega(T)$ pulling time for the optimal arm occurs at $\theta = \rho$.
\end{example}

\subsubsection{Inference with UCB-V}
Another key implication of Theorem \ref{thm:stable} is its relevance to the post-policy inference in the UCB-V algorithm. To begin, we recall the notion of arm stability as defined in \citet[Definition~2.1]{khamaru2024inference}.
\begin{definition}\label{def:stable}
An arm $a \in [K]$ is stable if there exists a deterministic sequence $n_{a,T}^{\star}$ such that
\begin{equation}
\frac{n_{a, T}}{n_{a,T}^{\star}} \xrightarrow{p} 1 \quad \text{with} \quad n_{a,T}^{\star} \rightarrow \infty,
\end{equation}
where $n_{a,T}^{\star}$ may depend on $T, \{\mu_a\}_{a \in [K]}$, and $\{\sigma_a^2\}_{a \in [K]}$.
\end{definition}

This notion of stability guarantees that, as $T \to \infty$, the number of times that an arm is pulled becomes predictable, thus enabling valid statistical inference for each arm’s reward distribution. Specifically, under the stability condition, the following $Z$-statistic converges in distribution to a standard normal
\begin{align}\label{eq:Zstat}
\frac{\sqrt{n_{a, T}}}{\hat{\sigma}_{a,T}} \left( \bar{X}_{a, T} - \mu_a \right) \Rightarrow \mathcal{N}(0,1).
\end{align}
This result is crucial for constructing valid hypothesis tests and confidence intervals in the adaptive sampling settings \citep{nie2018adaptively, zhang2020inference}.

The asymptotic normality of the $Z$-statistic in (\ref{eq:Zstat}) is established via the martingale CLT. Due to the adaptive nature of data collection in the bandit algorithms, traditional inference techniques based on independent and identically distributed (i.i.d.) data cannot be applied directly. However, for stable arms, let us consider the filtration $ \mathcal{F}_s $ generated by $\{ X_1, \dots, X_s \}$. The sequence
\[
\Bigg\{ \frac{\bm{1}\{a_s = a \} (X_s - \mu_a)}{\sigma_a \sqrt{n_{a,T}^{\star}}} \Bigg\}_{s \in [T]}
\]
forms a martingale difference sequence. Moreover, the Lindeberg condition is satisfied, and thus it holds that 
\begin{align*}
\sum_{s \in [T]} \E \bigg( \frac{\bm{1}\{a_s = a \} (X_s - \mu_a)^2}{\sigma_a^2 n_{a,T}^\star} \,\bigg|\, \mathcal{F}_{s-1} \bigg) \xrightarrow{p} 1,
\end{align*}
which allows for the application of the martingale CLT to establish the asymptotic normality of the \( Z \)-statistic. For a detailed derivation, see \citet[Section 2.1]{khamaru2024inference}. 

To illustrate the implication of our results on the reward inference, we conduct a simulation study on the $Z$-statistic for UCB and UCB-V using the setting from Example~\ref{example-phase-transition}, as shown in Figure~\ref{fig:two_fig_asymp_dis}. In Figure~\ref{subfig:CLT_Lambda_T = 0}, the empirical distributions of the $Z$-statistic for both UCB and UCB-V approximate the standard Gaussian, matching the theoretical predictions of our result in Example~\ref{example-phase-transition}, where UCB-V is stable in this regime. In Figure~\ref{subfig:CLT_Lambda_T = 1}, the empirical distribution of the $Z$-statistic for UCB-V shows a noticeable bias compared to UCB. This suggests that the previously mentioned martingale CLT result no longer holds for UCB-V when $\Lambda_T = 1$. In the subsequent section, we show that the underlying reason for this deviation from the CLT is the \textit{instability} of UCB-V under condition $\Lambda_T = 1$.

\begin{figure}[t]
    \centering
    \begin{subfigure}{0.45\textwidth}
        \centering
        \includegraphics[width=\textwidth]{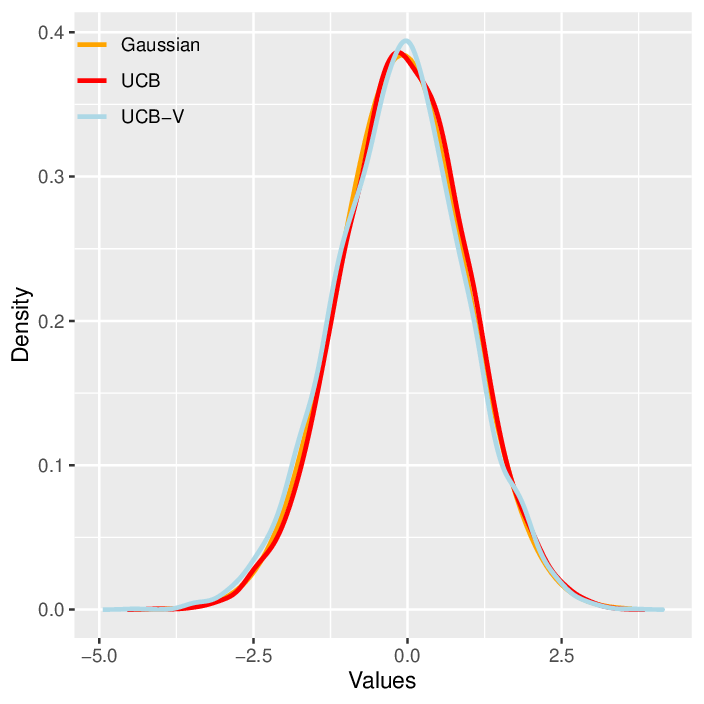}
        \caption{$\Lambda_T = 0$}
        \label{subfig:CLT_Lambda_T = 0}
    \end{subfigure}
    \hfill
    \begin{subfigure}{0.45\textwidth}
        \centering
        \includegraphics[width=\textwidth]{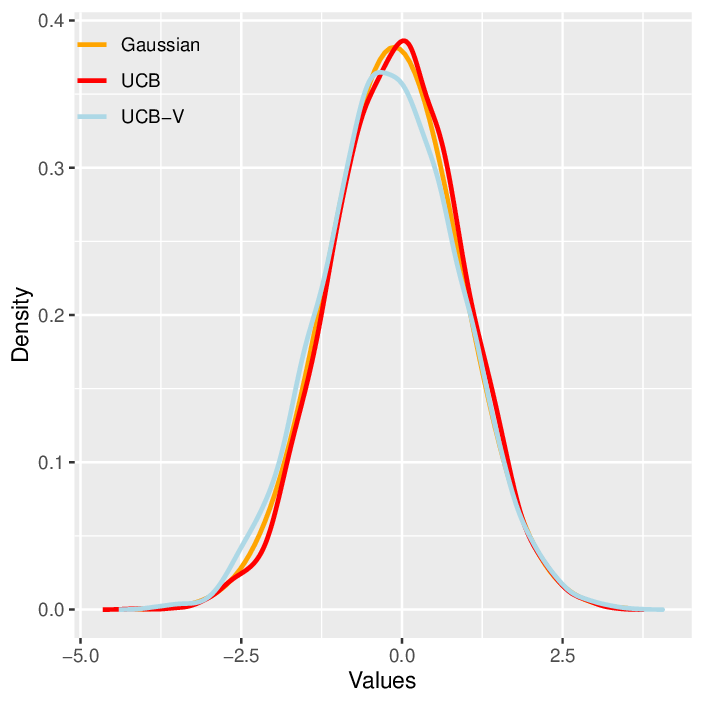}
        \caption{$\Lambda_T = 1$}
        \label{subfig:CLT_Lambda_T = 1}
    \end{subfigure}
    \caption{The empirical distributions of the $Z$-statistic for the sub-optimal arm for UCB and UCB-V with $\sigma_1 = 0, \sigma_2 = 1/4$ under different $\Lambda_T$, both with $T = 50,000$ over $2,000$ repetitions.}
    \label{fig:two_fig_asymp_dis}
\end{figure}

\subsection{Unstable results: closer look and hard instances}
Recall that Theorem~\ref{thm:stable} establishes the stability result, except for the case when $\sigma_1 = \mathfrak{o}(\sigma_2)$ and $\overline{\sigma}_2^2 / (T \overline{\Delta}_2^2) = 1$ hold simultaneously. In this section, we provide a hard instance in this setting and show the instability result. More precisely, let us consider the setting similar to Example~\ref{example-phase-transition}, with $\sigma_2 = 1$, $\sigma_1 = 0$, and gap $\Delta_2 = \sigma_2\sqrt{\theta\log T/T}$. 

In this scenario, the behavior of the solution to the asymptotic equation is described by $n_{1,T}^\star \sim \lambda^\star(\theta) T$, where $\lambda^\star(\theta)$ solves
\begin{align*}
    \lambda+  \bigg( \sqrt{\frac{\rho \log T}{T}} \cdot \frac{1}{\lambda} + \sqrt{\theta/\rho} \bigg)^{-2} = 1
\end{align*}
or equivalently,
\begin{align}\label{eq-explaination-tmp1}
    \frac{1}{\sqrt{1-\lambda}} - \frac{1}{\lambda} \sqrt{\frac{\rho \log T}{T}} = \sqrt{\theta/\rho}.
\end{align}

To heuristically explain why $\theta = \rho$ (or equivalently $\overline{\sigma}_2^2 / (T \overline{\Delta}_2^2) = 1$) acts as a phase transition point and leads to instability, we formally take the first-order expansion $1/\sqrt{1-\lambda} \approx 1 + \lambda / 2$ in \eqref{eq-explaination-tmp1} and multiply both sides by $\lambda$. This yields a quadratic equation in $\lambda$
\begin{align*}
     \frac{1}{2}\lambda^2 + (1-\sqrt{\theta/\rho})\lambda - \sqrt{\frac{\rho \log T}{T}} = 0.
\end{align*}
Notably, the order of the solution in \(T\) depends crucially on the sign of \(1-\sqrt{\theta/\rho}\). For any $\varepsilon > 0$, by observing the analytical solution to the quadratic equation, we have
\begin{align*}
    \lambda = \begin{cases} 
\mathcal{O}(\varepsilon^{-1} \sqrt{\frac{\rho \log T}{T}}) & \text{when } 1-\sqrt{\theta/\rho} \geq \varepsilon,  \\ 
\Omega(\varepsilon) & \text{when } 1-\sqrt{\theta/\rho} \leq -\varepsilon.
\end{cases}
\end{align*} 
In particular, a perturbation in $\theta$ around $\theta = \rho$ of magnitude $\varepsilon$ with different signs can lead to a fluctuation in $n_{1,T}^\star$ from $\mathcal{O}(\varepsilon^{-1}\sqrt{T \log T})$ to $\Omega(\varepsilon T)$.

Based on the above intuition and informal argument, we can rigorously demonstrate the \textit{instability} result by constructing a Bernoulli bandit instance and establishing a time-uniform anti-concentration result for the Bernoulli reward process via Donsker's principle. Intuitively, our anti-concentration result shows that, with constant probability, $\theta$ can be perturbed by a magnitude of $\mathcal{O}((\log T)^{-1})$ with different signs, which then implies instability in $n_{1,T}$ even when we play UCB-V with the oracle information of the variance (i.e., with $\hat{\sigma}_{a,t} = \sigma_a$). We summarize the instability result in the following proposition and provide its proof in Section \ref{sec:proof of unstable}. 

\begin{proposition}\label{prop:instable}
    Consider the following two-armed Bernoulli bandit instance: for $\mu = 1/2$ and $\Delta^2 = \mu(1-\mu)\rho\log T / T$, 
    \begin{itemize}
        \item Arm 1 has reward $\mu + \Delta$ and variance 0, and 
        \item Arm 2 has reward $\mu$ and variance $\mu(1-\mu)$.
    \end{itemize}
    Assume that $\sigma_a$'s are known and let \(n_{a,T}\) be computed by Algorithm \ref{alg:var-ucb} with $\hat{\sigma}_{a,t} = \sigma_a$. Then for sufficiently large $T$, there exist some constants $c_0, c_1 > 0$ such that 
    \begin{align*}
            \Prob(n_{1,T} \leq c_1\sqrt{T}\log T) \wedge  \Prob(n_{1,T} \geq c_1^{-1}T/\log^{1/2} T) > c_0.
    \end{align*}
\end{proposition}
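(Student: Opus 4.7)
The plan is to reduce the UCB-V dynamics to a random-walk comparison and then to use Donsker's functional CLT to construct two events of constant probability that push $n_{1,T}$ to opposite scales.

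Since $\sigma_1 = 0$ is given to the algorithm, arm 1 is deterministic with $\mathrm{UCB}(1,t) = \mu + \Delta + \rho\log T / n_{1,t}$. Writing $S_m \equiv \sum_{s \leq m}(X_{2,s}-\mu)$ for the centered partial sum of arm-2 rewards and using the oracle $\sigma_2$ together with $\Delta = \sigma_2\sqrt{\rho\log T/T}$, one verifies that arm 2 is selected at step $t$ if and only if
\begin{align*}
\frac{S_{n_{2,t}}}{n_{2,t}} + \sigma_2\Big(\sqrt{\tfrac{\rho\log T}{n_{2,t}}} - \sqrt{\tfrac{\rho\log T}{T}}\Big) \geq \frac{\rho\log T}{n_{1,t}},
\end{align*}
once $n_{2,t}$ is past the poly-log threshold at which the variance branch of the bonus becomes active (the short initial regime contributes only $\mathcal{O}(\log T)$ pulls and can be folded into the constants). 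This turns the arm-pulling rule into a race between a random term $S_{n_{2,t}}/n_{2,t}$ and the deterministic bonus $\rho\log T/n_{1,t}$, up to a small second-order correction.

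Next, by Donsker's invariance principle the rescaled trajectory $W_T(u) \equiv S_{\lfloor uT\rfloor}/(\sigma_2\sqrt{T})$ converges weakly in $C[0,1]$ to a standard Brownian motion $W$. As the informal quadratic $\tfrac12\lambda^2 + (1-\sqrt{\theta/\rho})\lambda - \sqrt{\rho\log T/T}=0$ from the excerpt indicates, a shift of $W_T$ by an amount of order $1/\sqrt{\log T}$ moves the solution $\lambda$ across the phase transition. I would therefore fix constants $c,u_0 > 0$ and consider the two Brownian tube events
\begin{align*}
E^+ : W(u) > c/\sqrt{\log T}\ \forall\, u \in [u_0,1],\qquad E^- : W(u) < -c/\sqrt{\log T}\ \forall\, u \in [u_0,1],
\end{align*}
each of which has BM-probability bounded below by some absolute $c_0' > 0$ (via the Markov property at $u_0$ and standard tube/small-ball estimates, together with a mild interior relaxation to make each event open in sup-norm). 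The portmanteau theorem then transfers this to a lower bound of $c_0'/2$ for the corresponding pre-limit events on $W_T$ for all sufficiently large $T$. On the pre-limit of $E^+$, $S_{n_{2,t}}/n_{2,t} \geq c\sigma_2/\sqrt{T\log T}$ for all $t$ with $n_{2,t}\geq u_0T$, so a direct comparison shows the displayed inequality is satisfied whenever $n_{1,t} \geq c_1\sqrt{T}\log T$; hence arm 1 stops being chosen and $n_{1,T}\leq c_1\sqrt{T}\log T$. Symmetrically, on $E^-$ the bonus is below $-c\sigma_2/\sqrt{T\log T}$, so the displayed inequality fails whenever $n_{1,t}\leq c_1^{-1}T/\sqrt{\log T}$, forcing $n_{1,T}\geq c_1^{-1}T/\sqrt{\log T}$. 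Both implications are finalized via a stopping-time argument: take the first step that contradicts the target pattern and use the uniform bound on $W_T$ to contradict the sign of the UCB comparison.

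The main obstacle is that $S_{n_{2,t}}/n_{2,t}$ is evaluated at a data-dependent time $n_{2,t}$, while Donsker only controls the deterministic-time path $S_m/\sqrt{m}$. The cleanest resolution is either a KMT strong approximation of $(S_m)$ by a Brownian path (with an $\mathcal{O}(\log T)$ additive error), or an application of weak convergence to the continuous functional $u\mapsto W_T(u)$ combined with its uniform continuity, to substitute the random $u = n_{2,t}/T$. A second subtlety is to verify that the second-order correction $\sigma_2(\sqrt{\rho\log T/n_{2,t}}-\sqrt{\rho\log T/T})$, of order $\sigma_2\sqrt{\rho\log T/T}\cdot n_{1,t}/(2T)$, remains negligible against $c\sigma_2/\sqrt{T\log T}$ on both sides of the target scales for $n_{1,t}$; a direct check confirms this for $n_{1,t}$ up to $o(T/\sqrt{\log T})$, which suffices to drive the stopping-time arguments above.
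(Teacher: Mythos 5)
Your overall strategy coincides with the paper's: reduce the UCB comparison to a race between the centered arm-2 random walk and the deterministic bonus $\rho\log T/n_{1,t}$ (arm 1 being noiseless), use Donsker's principle to produce constant-probability, time-uniform deviation events for the walk, and then solve the resulting approximately quadratic inequality at the last pull of each arm. The paper packages the Donsker step as a time-uniform anti-concentration lemma for the Bernoulli walk over $n\in[\lfloor N/\mathsf{c}\rfloor,N]$, which also sidesteps your random-time-substitution obstacle, and it separately establishes $n_{2,t}\gtrsim T$ at the relevant comparison times; your restriction to ``$t$ with $n_{2,t}\geq u_0T$'' needs exactly this verification, which you do not supply (it does follow from the concentration events, but it is a required ingredient, not a footnote).

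The genuine gap is quantitative: your tube threshold $c/\sqrt{\log T}$ is too small by a factor of $\sqrt{\log T}$, and with it the claimed conclusions do not follow. On the pre-limit of $E^{+}$ you only get $S_{n_{2,t}}/n_{2,t}\geq c\sigma_2/\sqrt{T\log T}$, whereas at $n_{1,t}=c_1\sqrt{T}\log T$ the arm-1 bonus equals $\rho\log T/n_{1,t}=\rho/(c_1\sqrt{T})$, which exceeds $c\sigma_2/\sqrt{T\log T}$ for every fixed $c,c_1$ once $T$ is large; the second-order bonus correction is of order $(\log T)^{3/2}/T$ at that scale and cannot close the gap. Hence arm 1 is not forced to stop at $c_1\sqrt{T}\log T$: your argument yields only $n_{1,T}\lesssim\sqrt{T}(\log T)^{3/2}$, and symmetrically only $n_{1,T}\gtrsim T/\log T$ on $E^{-}$, both strictly weaker than the statement. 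The correct choice is an order-one tube, $W(u)>c$ (resp.\ $W(u)<-c$) on $[u_0,1]$ for a fixed constant $c$ --- equivalently $\abs{S_m}\geq c\sqrt{T}$, i.e.\ an empirical-mean deviation of order $1/\sqrt{T}$, which is precisely the $\pm 1/\sqrt{n_{2,\cdot}}$ deviation used in the paper's events $\mathcal{U},\mathcal{L}$ --- and this still has constant Brownian probability while restoring the stated scales. Your heuristic that ``a shift of $W_T$ by $1/\sqrt{\log T}$ crosses the transition'' conflates a shift of $\sqrt{\theta/\rho}$ (indeed of order $1/\sqrt{\log T}$) with a shift of $W_T$: perturbing $\sqrt{\theta/\rho}$ by $1/\sqrt{\log T}$ amounts to perturbing the empirical mean by $\sigma_2\sqrt{\rho/T}$, i.e.\ perturbing $W_T$ by a constant.
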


The existence of an unstable instance at $\Lambda_T = 1$ \textit{complements} our previous finding, where UCB-V's stability was shown for $\Lambda_T \neq 1$. This result highlights a \textit{contrast} between the UCB-V and canonical UCB: as shown in \citet{kalvit2021closer} and \citet{khamaru2024inference}, the canonical UCB is stable for all $\Delta$, which corresponds to the case in our setting when $\sigma_1 = \sigma_2 = 1$. Our result demonstrates that, in a \textit{heterogeneous variance} environment, UCB-V may exhibit significant fluctuations. Another implication of this instability is that, unlike the canonical UCB, the CLT for the $Z$-statistic may \textit{not} hold for data collected by UCB-V, as shown in Figure~\ref{subfig:CLT_Lambda_T = 1}. This necessitates the developments of new statistical inference methods for UCB-V collected data or new variance-aware decision-making algorithms with stronger stability guarantees than UCB-V, which we leave as a valuable future direction.

\section{Refined regret for variance-aware decision making}\label{sec:regret}
In this section, we show that a refined regret can be achieved by UCB-V based on our arm-pulling number bounds presented in Section~\ref{sec:asymp_ucbv_implicit}.

Previously, the best-known regret for UCB-V, shown in \citet{audibert2009exploration}, is given by
\footnote{Rather than presenting an equation of the form~\eqref{eq-worst-combined-regret}, \citet{audibert2009exploration} established the following gap-dependent upper bound on the number of times a suboptimal arm is pulled:
\begin{equation}\label{eq-arm-pulling-gap-dependent}
    \mathbb{E}(n_{2,T}) \leq C \log T \bigg( \frac{\sigma_2^2}{\Delta_2^2} + \frac{2}{\Delta_2} \bigg).
\end{equation}
However, it is straightforward to show \eqref{eq-worst-combined-regret} by combining~\eqref{eq-arm-pulling-gap-dependent} with the trivial bound $\mathrm{Reg}(T) \leq  \Delta_2 \mathbb{E}(n_{2,T}).$ 
}
\begin{equation}\label{eq-worst-combined-regret}
\mathrm{Reg}(T)\leq 
\max_{\Delta_2 \geq 0} C \log T \bigg( \frac{\sigma_2^2}{\Delta_2} + 2 \bigg) \wedge \Delta_2 T \leq C'\sigma_2 \sqrt{T\log T}.
\end{equation}
As mentioned earlier, the regret in (\ref{eq-worst-combined-regret}) does not account for the effect of $\sigma_1$, which contradicts the empirical performance of UCB-V and is conservative in the large $\sigma_1$, small $\sigma_2$ regime, as shown in Figure~\ref{subfig:regret-diff-sigma}. On the other hand, the derived asymptotic equation for the arm-pulling times naturally indicates a dependency of $n_{2,T}$ on $\sigma_1$, opening the possibility for refined worst-case regret bounds. 
More precisely, by adopting the high-probability bound in \eqref{ineq:hpb_apn}, we can show the following proposition.

\begin{proposition}\label{prop-n2T-non-asymptotic}
There exists some universal constant $C_0$ such that with $\rho \geq C_0$ and $T \geq 3$, 
\begin{align*}
   \Prob \bigg(   n_{2,T} \leq \frac{16}{\overline{\Delta}_{2} + \varphi(n_{1,T};\overline{\sigma}_{1}) } \vee \bigg(\frac{16\sigma_2}{\overline{\Delta}_{2} + \varphi(n_{1,T};\overline{\sigma}_{1}) } \bigg)^2 \bigg) \geq 1 - \frac{6}{T^2}.
\end{align*}
\end{proposition}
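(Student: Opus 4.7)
The plan is to derive the proposition directly from the second sandwich inequality of Proposition~\ref{prop-perturbed-equation}. I would apply that proposition with $\delta = 6/T^2$ and with $\rho \geq C_0$ for a sufficiently large universal constant $C_0$, chosen so that the error term $\epsilon \equiv \epsilon(\delta;\rho;T)$ is small enough that $\bigl((1-\epsilon)/(1+\epsilon)\bigr)^{4} \geq 1/4$. This is feasible because, with $\delta = 6/T^2$, we have $\log(1/\delta) \asymp \log T$ and both terms inside the square root defining $\epsilon$ are of order $1/\sqrt{\rho}$, which can be driven below any fixed threshold by taking $\rho$ large. On the resulting event, which has probability at least $1 - 6/T^2$, the second sandwich inequality of Proposition~\ref{prop-perturbed-equation} yields the one-sided lower bound
\begin{align*}
\varphi_{2,T} \;=\; \varphi(n_{2,T};\overline{\sigma}_2) \;\geq\; \tfrac{1}{4}\bigl(\varphi(n_{1,T};\overline{\sigma}_1) + \overline{\Delta}_2\bigr).
\end{align*}

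Next, I would invert this lower bound on $\varphi_{2,T}$ into an upper bound on $n_{2,T}$ using the monotonicity of $n(\cdot;\overline{\sigma}_2)$. Since $n(\varphi;\sigma) = (\sigma^2 \vee \varphi)/\varphi^2$ equals $\sigma^2/\varphi^2$ for $\varphi \leq \sigma^2$ and $1/\varphi$ for $\varphi > \sigma^2$---both branches decreasing in $\varphi$ and agreeing at the transition---the function $n(\cdot;\sigma)$ is strictly decreasing for every fixed $\sigma \geq 0$. Applying this monotonicity together with the identity $n_{2,T} = n(\varphi_{2,T};\overline{\sigma}_2)$ to the lower bound above gives
\begin{align*}
n_{2,T} \;\leq\; \frac{\overline{\sigma}_2^2 \;\vee\; \tfrac{1}{4}\bigl(\varphi(n_{1,T};\overline{\sigma}_1) + \overline{\Delta}_2\bigr)}{\tfrac{1}{16}\bigl(\varphi(n_{1,T};\overline{\sigma}_1) + \overline{\Delta}_2\bigr)^2}.
\end{align*}
Splitting the maximum in the numerator and using $\overline{\sigma}_2 = \sigma_2/\sqrt{\rho \log T} \leq \sigma_2$ (valid once $\rho \log T \geq 1$) bounds the two resulting pieces respectively by $16/(\overline{\Delta}_2 + \varphi(n_{1,T};\overline{\sigma}_1))$ and $\bigl(16\sigma_2/(\overline{\Delta}_2 + \varphi(n_{1,T};\overline{\sigma}_1))\bigr)^2$, and taking their maximum recovers exactly the stated bound.

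I expect the main obstacle to be purely \emph{bookkeeping} of constants: one must verify that $\rho \geq C_0$ simultaneously guarantees the hypothesis $\epsilon \leq 1/2$ of Proposition~\ref{prop-perturbed-equation} and the strengthened inequality $\bigl((1-\epsilon)/(1+\epsilon)\bigr)^{4} \geq 1/4$ that produces the precise factor of $16$ in the final bound, and that the substitution $\delta = 6/T^2$ is consistent with the claimed probability $1 - 6/T^2$. Beyond this, the argument is elementary: no new concentration inequalities are needed, only the monotonicity of $n(\cdot;\overline{\sigma}_2)$ and an elementary split of the maximum operation inherent in the definition of $\varphi$.
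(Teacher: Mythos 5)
Your route is close in spirit to the paper's, but it has a genuine gap coming from where you source the key inequality. You invoke Proposition~\ref{prop-perturbed-equation}, whose error term is
$\epsilon(\delta;\rho;T) = 5\big(\sqrt{48\log(\log(T/\delta)/\delta)/(\rho\log T)} + 128\log(1/\delta)/(\rho\log T)\big)^{1/2} + 200/\sqrt{T}$.
You argue that with $\delta\asymp T^{-2}$ both contributions are $\mathcal{O}(\rho^{-1/2})$ and can be driven below any threshold by enlarging $\rho$; this is true of the first summand but not of the additive $200/\sqrt{T}$ term, which is completely insensitive to $\rho$. The hypothesis $\epsilon\leq 1/2$ of Proposition~\ref{prop-perturbed-equation} already forces $T\geq 1.6\times 10^5$, and your strengthened requirement $\big((1-\epsilon)/(1+\epsilon)\big)^4\geq 1/4$ (i.e.\ $\epsilon\leq 3-2\sqrt{2}\approx 0.17$) forces $T\gtrsim 1.4\times 10^6$. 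Since the claimed bound is not trivially true for moderate $T$ (it only trivializes when $n_{2,T}\leq 8$), your argument leaves the range $3\leq T\lesssim 10^6$ uncovered, whereas the proposition is asserted for all $T\geq 3$. This is not constant bookkeeping; it is a structural limitation of deriving the result from the two-sided sandwich of Proposition~\ref{prop-perturbed-equation}, whose $200/\sqrt{T}$ term originates from the $n_{a,T}/(n_{a,T}-1)$ corrections and the crude lower bounds of Lemma~\ref{lem-n1T-lb-rough}.

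The paper avoids this by not going through Proposition~\ref{prop-perturbed-equation} at all. It works directly on the events $\tilde{\mathcal M}(\rho;T)\cap\tilde{\mathcal V}(\rho;T)$, where the concentration error is bounded by $\varphi_{a,t}\cdot(\rho/4)\log T$ once $\rho\geq C_0$ (Lemma~\ref{lem:high-probability-mean-variance}, giving probability $1-3K/T^2 = 1-6/T^2$ for $K=2$), re-derives only the one-sided inequality from the UCB rule at the last pull time $T_a$, and disposes of the $n_{a,T}/(n_{a,T}-1)$ factor via the dichotomy ``either $n_{a,T}\leq 1$ or $\varphi_{a,T}\geq(\overline{\Delta}_2+\varphi_{1,T})/16$.'' That is what makes the constant $16$ and the range $T\geq 3$ come out cleanly. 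Your second step --- inverting the lower bound on $\varphi_{2,T}$ through the decreasing map $n(\varphi;\overline{\sigma}_2)=(\overline{\sigma}_2^2\vee\varphi)/\varphi^2$ and splitting the maximum --- is correct and matches the paper. To repair your proof, replace the appeal to Proposition~\ref{prop-perturbed-equation} with a direct application of the UCB comparison at time $T_2$ on $\tilde{\mathcal M}\cap\tilde{\mathcal V}$, as the paper does.
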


By applying the upper bound on the number of pulls for sub-optimal arms, we are able to derive the refined worst-case regret for UCB-V. 
\begin{theorem}\label{thm-refined-regret}
    There exists some universal constant $C_0 > 0$ such that for $\rho \geq C_0$ and $T\geq 3$, we have that with probability at least $1 - 6/T^2$, 
    \begin{align*}
        \mathrm{Reg}(T)\leq 16\bigg( \sigma_2  \wedge \frac{16\sigma_2^2}{\sigma_1} \bigg) \sqrt{\rho T \log T} + \sqrt{\rho \log T}.
    \end{align*}
\end{theorem}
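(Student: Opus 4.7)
}
The strategy is to combine the high-probability bound on $n_{2,T}$ from Proposition~\ref{prop-n2T-non-asymptotic} with the trivial bound $n_{2,T}\le T$, and to extract two complementary upper bounds on $\mathrm{Reg}(T)=\Delta_2 n_{2,T}$ (valid for $K=2$ with arm $1$ optimal) whose minimum yields the refined rate $\sigma_2\wedge (16\sigma_2^2/\sigma_1)$. I would first work on the $1-6/T^2$ event of Proposition~\ref{prop-n2T-non-asymptotic}. The first (linear) term in the maximum contributes at most $16\Delta_2/\overline{\Delta}_2=16\rho\log T$ to the regret, a lower-order contribution that can be absorbed into the additive tail of the statement. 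The dominant quadratic term is $256\sigma_2^2 \Delta_2/(\overline{\Delta}_2+\varphi(n_{1,T};\overline{\sigma}_1))^2$, and the central algebraic step is to use the identity $\Delta_2=(\rho\log T)\overline{\Delta}_2\le (\rho\log T)(\overline{\Delta}_2+\varphi(n_{1,T};\overline{\sigma}_1))$ to cancel one copy of $\overline{\Delta}_2+\varphi$ from the denominator, converting the $(\overline{\Delta}_2+\varphi)^{-2}$ factor into a $(\overline{\Delta}_2+\varphi)^{-1}$ factor on $\Delta_2 n_{2,T}$.

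\paragraph{Two complementary lower bounds.}
Next I would lower-bound the surviving factor $\overline{\Delta}_2+\varphi(n_{1,T};\overline{\sigma}_1)$ in two different ways. First, $\overline{\Delta}_2+\varphi\ge \overline{\Delta}_2=\Delta_2/(\rho\log T)$ gives $\Delta_2 n_{2,T}\lesssim \sigma_2^2(\rho\log T)/\Delta_2$; pairing this with $\Delta_2 n_{2,T}\le\Delta_2 T$ and balancing at the worst-case gap $\Delta_2^{\star}\asymp \sigma_2\sqrt{\rho\log T/T}$ recovers the classical rate $\mathrm{Reg}(T)\lesssim \sigma_2\sqrt{\rho T\log T}$. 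Second, the definition $\varphi(n;\sigma)=(\sigma\vee n^{-1/2})/\sqrt n\ge \sigma/\sqrt n$, together with $n_{1,T}\le T$, gives $\overline{\Delta}_2+\varphi\ge \overline{\sigma}_1/\sqrt T$. Substituting this into the $(\overline{\Delta}_2+\varphi)^{-1}$ bound directly yields $\mathrm{Reg}(T)\lesssim (\sigma_2^2/\sigma_1)\sqrt{\rho T\log T}$, with no gap-optimisation required. Taking the minimum of the two bounds, together with the low-order contribution from the linear part of the maximum, produces the stated inequality.

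\paragraph{Main obstacle.}
The crux is the algebraic cancellation in the first paragraph: without trading one $\overline{\Delta}_2+\varphi$ factor against $\Delta_2/(\rho\log T)$, an extra $(\rho\log T)$ factor would survive in the first bound and one would recover only the previously known $\sigma_2\sqrt{T}\log T$ rate, losing a $\sqrt{\log T}$. Once this cancellation is made, the two complementary lower bounds correspond to the two regimes isolated in the asymptotic analysis of Section~\ref{sec:asymp_ucbv}: when the gap dominates the variance-induced bonus we recover the classical $\sigma_2\sqrt{T\log T}$ rate, whereas when the optimal arm's variance $\sigma_1$ inflates $\varphi(n_{1,T};\overline{\sigma}_1)$ enough to dominate $\overline{\Delta}_2$, the refined $(\sigma_2^2/\sigma_1)\sqrt{T\log T}$ rate emerges, making visible the effect of $\sigma_1$ that is absent from the previous worst-case analyses of variance-aware algorithms.
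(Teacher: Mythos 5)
Your plan is essentially the paper's own proof: it starts from the high-probability bound of Proposition~\ref{prop-n2T-non-asymptotic}, treats the linear and quadratic branches of the maximum separately (the paper's sets $\mathcal{A}_2$ and $\mathcal{A}_1$), and derives the same two complementary bounds --- the classical $\sigma_2\sqrt{\rho T\log T}$ rate and the $(\sigma_2^2/\sigma_1)\sqrt{\rho T\log T}$ rate via $\varphi(n_{1,T};\overline{\sigma}_1)\ge \overline{\sigma}_1/\sqrt{T}\,$ --- whose minimum is taken at the end; the paper reaches the first via Cauchy--Schwarz and the second via a min/geometric-mean step, but for $K=2$ these coincide with your gap-balancing and cancellation arguments. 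The one point to watch is that your arithmetic (e.g.\ $\Delta_2 n_{2,T}\lesssim \sigma_2^2(\rho\log T)/\Delta_2$) requires the quadratic branch to carry the normalized variance $\overline{\sigma}_2^2=\sigma_2^2/(\rho\log T)$, which is what the proof of Proposition~\ref{prop-n2T-non-asymptotic} actually yields (from $\varphi_{2,T}\ge(\overline{\Delta}_2+\varphi_{1,T})/16$ with $\varphi_{2,T}=(\overline{\sigma}_2\vee n_{2,T}^{-1/2})/\sqrt{n_{2,T}}$) rather than the unnormalized $\sigma_2$ printed in its statement; with the latter taken literally you would lose a factor of $\rho\log T$ in the first bound.
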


The proof of the above results can be found in Section \ref{sec:proof sec3}. The regret bound in Theorem \ref{thm-refined-regret} above improves upon the bound in (\ref{eq-worst-combined-regret}) for the large $\sigma_1$, small $\sigma_2$ regime, while recovering (\ref{eq-worst-combined-regret}) in the small $\sigma_1$ regime. 

\paragraph{Optimality of Theorem~\ref{thm-refined-regret}.}Now we establish a matching lower bound to show the optimality of Theorem~\ref{thm-refined-regret}. To describe the lower bound result, consider any given distributions $\mathbf{P}_1$ and $\mathbf{P}_2$. Let $\bm{v} = (\mathbf{P}_1, \mathbf{P}_2)$ denote a 2-armed bandit instance, where $\mathbf{P}_i$ represents the distribution of the $i$-th arm. To emphasize the dependency on the means $\mu_i \equiv \mathbb{E}_{X\sim \mathbf{P}_i}[X]$ and variances $\sigma_i^2 \equiv \var_{X\sim \mathbf{P}_i}[X]$ of the arms, we redundantly express this instance as $\bm{v} = (\mathbf{P}_1(\mu_1, \sigma_1), \mathbf{P}_2(\mu_2, \sigma_2))$.

Given any instance $\bm v,$ we use the notation $\sigmaopt^2(\bm v),\sigmasub^2(\bm v)$ to denote variances of optimal and sub-optimal arms under $\bm v$ for clarity. For any policy $\pi$ and any instance $\bm v$, we denote $\E_{\pi\bm{v}}$ and $\mathbb{P}_{\pi \bm{v}}$ as the expectation and probability with respect to the reward distribution induced by $\pi$ under $\bm v$.
More precisely, we establish the following regret lower bounds.

\begin{theorem}\label{thm-lower-bound-main}
  Given any sufficiently large $T>0$ and $0 < \sigma^2 < 1/3$, consider the following class of problems with $\sigma$-bounded variances:
    \begin{align*}
        \mathcal{V}_\sigma &\equiv \{ (\mathbf{P}_1(\mu_1, \sigma_1), \mathbf{P}_2(\mu_2, \sigma_2)) : \sigma_1 \vee \sigma_2 \leq \sigma, \mathrm{supp}( \mathbf{P}_i(\mu_i, \sigma_i) ) \subset [0,1], i \in \{1,2 \}  \}.
    \end{align*}
    Then for every policy $\pi$, there exists some $\bm v \in \mathcal{V}$ such that,
    \begin{align}\label{ineq:lb_1-main}
        \E_{\pi\bm v} \mathrm{Reg}(T) = \Omega(\sigmasub(\bm v) \sqrt{T}).
    \end{align}
Moreover, the following trade-off lower bound holds:  Given any $0<\beta<1/2$ and  $c_{T} = \mathcal{O}(\mathsf{poly}(\log T) )$, consider 
$$
        \mathcal{V}'_\beta \equiv \{ (\mathbf{P}_1(\mu_1, \sigma_1), \mathbf{P}_2(\mu_2, \sigma_2)) \in  \mathcal{V}_\sigma : \mu_1 >  \mu_2, \sigma_1= T^{\beta} \sigma_2^2, \sigma_1 \geq T^{-\beta}/{256} \}.
    $$   
and the \textit{good policy class} \begin{align*}
\Pigood\equiv \bigg\{ \pi : \max_{\bm v \in \mathcal{V} } \frac{\E_{\pi\bm v} \mathrm{Reg}(T)}{\sigmasub(\bm v)} \leq c_T \sqrt{T}
  \bigg\}. \end{align*} 
Then for any $\pi \in \Pigood$, there exists some $\bm v'\in \mathcal{V}_{\beta}'$ such that \begin{align}\label{ineq:lb_2-main}
\E_{\pi\bm v'} \mathrm{Reg}(T) = \Omega\bigg(\frac{\sigmasub^2(\bm v')}{\sigmaopt(\bm v')}\sqrt{\frac{T}{c_T}} \bigg).
\end{align}
\end{theorem}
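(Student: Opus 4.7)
\textbf{Part 1: the minimax $\Omega(\sigmasub\sqrt{T})$ bound.}
The plan is to use the classical two-point / Le Cam method. Construct a pair of instances $\bm v_0,\bm v_1\in\mathcal{V}_\sigma$ sharing the arm~1 distribution, with arm~2's means differing by $\Delta$: arm~1 is optimal (gap $\Delta$) under $\bm v_0$ and arm~2 is optimal (gap $\Delta$) under $\bm v_1$. Using two-point distributions on $[0,1]$ with variance $\sigma^2$, the per-sample KL divergence along arm~2 is $\mathcal{O}(\Delta^2/\sigma^2)$, so $\KL{\Prob_{\pi\bm v_0}}{\Prob_{\pi\bm v_1}}\le T\cdot\mathcal O(\Delta^2/\sigma^2)$. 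Applying the Bretagnolle--Huber inequality to $A=\{n_{2,T}\le T/2\}$ and choosing $\Delta\asymp\sigma/\sqrt T$ yields $\Prob_{\pi\bm v_0}(A)+\Prob_{\pi\bm v_1}(A^c)=\Omega(1)$; hence on at least one of the two instances the sub-optimal arm is pulled $\Omega(T)$ times in expectation, so regret is $\Omega(\Delta T)=\Omega(\sigma\sqrt T)=\Omega(\sigmasub\sqrt T)$.

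\textbf{Part 2: the trade-off lower bound.}
Fix any $\pi\in\Pigood$. First pick an instance $\bm v'\in\mathcal{V}'_\beta$ with arm~1 optimal (variance $\sigma_1$), arm~2 suboptimal (variance $\sigma_2$), and gap $\Delta$ to be tuned; then construct a companion instance $\bm u$ identical to $\bm v'$ except that arm~2's mean is shifted upward by some amount $\delta>\Delta$, so that arm~2 becomes optimal in $\bm u$ with gap $\delta-\Delta$. Both $\bm v',\bm u\in\mathcal{V}_\sigma$, and $\sigmasub(\bm u)=\sigma_1$; the good-policy hypothesis then gives $\E_{\pi\bm u}[n_{1,T}]\le c_T\sigma_1\sqrt T/(\delta-\Delta)$. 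The trajectory-level KL decomposes as $\KL{\Prob_{\pi\bm v'}}{\Prob_{\pi\bm u}}=\E_{\pi\bm v'}[n_{2,T}]\cdot\KL{\mathbf P_{\bm v',2}}{\mathbf P_{\bm u,2}}$ with the per-sample KL of order $\delta^2/\sigma_2^2$ by construction. Applying Bretagnolle--Huber to the event $\{n_{2,T}\ge T/2\}$ together with Markov's inequality on $\Prob_{\pi\bm u}(n_{1,T}>T/2)$ yields an implicit inequality linking $\E_{\pi\bm v'}[n_{2,T}]$ to $\Delta,\delta,c_T$; optimizing over the two free parameters $(\Delta,\delta)$ extracts the desired lower bound $\E_{\pi\bm v'}[n_{2,T}]\cdot\Delta\gtrsim\sigma_2^2/\sigma_1\cdot\sqrt{T/c_T}$ on the regret.

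\textbf{Main obstacles.}
The most delicate step is the distributional construction: one needs $[0,1]$-supported measures with prescribed mean and variance whose single-sample KL under a small mean shift is bounded by a universal constant times $(\text{shift})^2/(\text{variance})$. The lower bound $\sigma_1\ge T^{-\beta}/256$ in $\mathcal{V}'_\beta$ is precisely what keeps this construction feasible for arm~1 near the boundary of parameter space, and it also ensures $\bm u$ stays in $\mathcal{V}_\sigma$ after shifting arm~2. A second subtlety is that $\E_{\pi\bm v'}[n_{2,T}]$ appears on both sides of the Bretagnolle--Huber inequality, so a case analysis (small- vs.\ large-$\E_{\pi\bm v'}[n_{2,T}]$ regime) will be needed to close the loop. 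Finally, the sharper $\sqrt{c_T}$ scaling (rather than the $c_T$ scaling that a naive symmetric-shift argument $\delta=2\Delta$ produces) is the main non-trivial aspect of Part~2; obtaining it requires the asymmetric choice $\delta\gg\Delta$ (or an equivalent multi-hypothesis Fano-type variant), effectively trading the tightness of the good-policy bound on $\bm u$ against the information budget consumed per arm~2 sample.
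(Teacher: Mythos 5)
Your Part 1 is essentially the paper's own argument. The paper also runs a Le Cam two-point scheme built from two-point-supported laws $Q_{\mu,\sigma}$ on $\{0,(\mu^2+\sigma^2)/\mu\}$ with prescribed mean and variance; its Lemma~\ref{lem-bounded-construction} supplies exactly the ingredient you flag as the delicate step, namely a $\Delta$-shifted companion $Q_{\mu,\sigma}^\Delta$ on the same support with $\KL{Q_{\mu,\sigma}}{Q_{\mu,\sigma}^\Delta}\le 4\Delta^2/\sigma^2$. Divergence decomposition, Bretagnolle--Huber on $\{n_{2,T}\ge T/2\}$, and $\Delta\asymp\sigma_2/\sqrt T$ then give \eqref{ineq:lb_1-main} exactly as you describe.

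For Part 2 your scheme is the same two-instance Bretagnolle--Huber argument exploiting the good-policy constraint, organized slightly differently: you bound the companion instance $\bm u$ directly via $\pi\in\Pigood$ and Markov, whereas the paper argues by contradiction, assuming the regret on $\bm v'$ is below target, converting that into $\E_{\pi\bm v'}[n_{2,T}]\le T^{1-\beta}/(4096\sigma_1c_T^2)$, using this to certify $\KL{\Prob_{\pi\bm v'}}{\Prob_{\pi\bm u}}\le 1/8$, and concluding that $\bm u$'s regret exceeds $2c_T\sigmasub(\bm u)\sqrt T$, contradicting $\pi\in\Pigood$. Either organization closes the loop you worry about. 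The genuine gap is your final claim that the asymmetric choice $\delta\gg\Delta$ upgrades the rate from $\sqrt T/c_T$ to $\sqrt{T/c_T}$. It does not: in your own framework the Markov term $2c_T\sigma_1/((\delta-\Delta)\sqrt T)$ forces $\delta-\Delta\ge a$ with $a\asymp c_T\sigma_1/\sqrt T$, the Bretagnolle--Huber dichotomy gives (in the binding branch) $\E_{\pi\bm v'}[n_{2,T}]\gtrsim\sigma_2^2/\delta^2$ and hence regret $\gtrsim\Delta\sigma_2^2/\delta^2$, and
\begin{align*}
\max_{\delta\ \ge\ \Delta+a}\ \frac{\Delta}{\delta^2}\ =\ \max_{\delta\ge a}\frac{\delta-a}{\delta^2}\ =\ \frac{1}{4a}\quad\text{attained at }\delta=2a,\ \Delta=a,
\end{align*}
i.e.\ the optimum is the \emph{symmetric} shift $\delta=2\Delta$ and the bound tops out at $\asymp\sigma_2^2\sqrt T/(c_T\sigma_1)$. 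So asymmetry buys nothing, and no choice of $(\Delta,\delta)$ in this two-point argument yields the $1/\sqrt{c_T}$ scaling. (For what it is worth, the paper's own intermediate target \eqref{ineq:target_lb_2} is $T^{1/2-\beta}/(256c_T)$, i.e.\ also the $1/c_T$ scaling; the $\sqrt{T/c_T}$ in the theorem statement is not what either argument delivers, the discrepancy being a $\sqrt{c_T}=\mathcal O(\mathrm{poly}\log T)$ factor. Your diagnosis of the issue is correct, but your proposed mechanism for repairing it is not.)
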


Theorem~\ref{thm-lower-bound-main} states that, in the general scenario with $\sigma$-bounded variances, no algorithm can achieve a regret better than $\sigmasub(\bm{v}) \sqrt{T}$. This reveals the optimality of Theorem~\ref{thm-refined-regret} in the regime where $\sigmaopt(\bm{v}) \leq \sigmasub(\bm{v})$. 
Furthermore, we show that in $\sigmaopt > \sigmasub$ regime, any \textit{reasonably good} algorithm that performs nearly optimal in worst-case over $\mathcal{V}$ (i.e., the algorithms lie within $\Pigood$) cannot achieve a regret better than $\frac{\sigmasub^2}{\sigmaopt} \sqrt{T}$.
In particular, the regret upper bound in Theorem~\ref{thm-refined-regret} demonstrates that UCB-V matches such trade-off lower bound in the regime where $\sigmaopt > \sigmasub$, illustrating its optimality.

\section{Discussions on $K$-armed setting}\label{app:Karm}

In this section, we extend the results in Theorem \ref{thm:stable}, which addresses the $2$-armed setting, to the general $K$-armed case. Recall that $\overline{\sigma}_a \equiv \overline{\sigma}_a(\rho;T)$ and $\overline{\Delta}_a \equiv \overline{\Delta}_a(\rho;T)$ for $a \in [K]$. The formal statement of the result is given in the following theorem.

\begin{theorem}\label{thm:stable_K}
Consider Algorithm \ref{alg:var-ucb}. Assume that for any $a \in [2 :K]$, 
    \begin{align}\label{ineq:bound_assume_K}
        \limsup_{T \to \infty} \bigg\{ \max_{\mathfrak{a} \in [2:K]}\bigg(\frac{1}{K-1}- \frac{\overline{\sigma}_{\mathfrak{a}}^2}{T\overline{\Delta}_{\mathfrak{a}}^2} \bigg)_+^{-1 } \wedge  \bigg(\frac{\overline{\sigma}_{a}^2}{T\overline{\Delta}_{a}^2} - 1 \bigg)_+^{-1 } \wedge \frac{\overline{\sigma}_{a}}{\overline{\sigma}_{1} \vee T^{-1/2}} \bigg\} < \infty.
    \end{align}
    Then for $a \in [K]$, we have
    \begin{align*}
       \frac{n_{a,T}}{n_{a,T}^\star} \xrightarrow{p} 1,
    \end{align*}
    where $n_{1,T}^\star$ is the unique solution to equation 
    \begin{align}\label{eq:fpe_star_K}
        \sum_{a \in [K]}\frac{\overline{\sigma}_{a}^2 \vee (\varphi(n_{1,T}^\star;\overline{\sigma}_{1}) + \overline{\Delta}_{a} )}{T(\varphi(n_{1,T}^\star;\overline{\sigma}_{1}) + \overline{\Delta}_{a})^2} = 1
    \end{align}
    and for $a \in [2:K]$,
    \begin{align*}
        n_{a,T}^\star \equiv \frac{\overline{\sigma}_{a}^2 \vee (\varphi(n_{1,T}^\star;\overline{\sigma}_{1}) + \overline{\Delta}_{a} )}{(\varphi(n_{1,T}^\star;\overline{\sigma}_{1}) + \overline{\Delta}_{a} )^2}.
    \end{align*}
\end{theorem}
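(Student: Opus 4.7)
The plan is to follow the same three-step strategy used to establish Theorem~\ref{thm:stable}: (i) extend the sandwich concentration of Proposition~\ref{prop-perturbed-equation} to $K$ arms; (ii) prove existence and uniqueness of a solution to \eqref{eq:fpe_star_K} and establish a perturbation bound in the spirit of Lemma~\ref{lem:perturb_lem}; and (iii) combine (i) and (ii) with $\delta \asymp (\log T)^{-1}$ to transfer the deterministic perturbation bound to a probabilistic stability statement. Step (i) should be a fairly mechanical extension: the Bernstein-type non-asymptotic LIL (Lemma~\ref{lem-variance-lil}) is applied arm-by-arm and absorbed into a single event via a union bound over $a\in[K]$, inflating $\epsilon(\delta;\rho;T)$ only by a harmless $\log K$ factor. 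The optimistic-index comparison that pins down $\varphi_{a,T}\asymp \varphi_{1,T} + \overline{\Delta}_a$ transports unchanged from $K=2$, since the selection rule $A_t=\arg\max_a \mathrm{UCB}(a,t)$ is structurally identical, and summing over sub-optimal arms yields the $K$-term sandwich
\[
\bigg(\frac{1-\epsilon}{1+\epsilon}\bigg)^{c} \le \sum_{a\in[K]}\frac{\overline{\sigma}_{a}^2 \vee (\varphi_{1,T}+\overline{\Delta}_a)}{T(\varphi_{1,T}+\overline{\Delta}_a)^2}\le \bigg(\frac{1+\epsilon}{1-\epsilon}\bigg)^{c}.
\]

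Step (ii) is where the real work lies. Let $f_K(\varphi)\equiv \sum_{a\in[K]}\frac{\overline{\sigma}_{a}^2 \vee (\varphi+\overline{\Delta}_a)}{T(\varphi+\overline{\Delta}_a)^2}$, with $\overline{\Delta}_1=0$. Each summand is continuous and non-increasing in $\varphi$; the $a=1$ term diverges as $\varphi\to 0^+$ and the full sum vanishes as $\varphi\to\infty$, so $f_K(\varphi)=1$ has a unique root $\varphi^\star\in(1/T,1)$. For the perturbation bound, I would partition the arms $a\in[2:K]$ according to where $\varphi^\star+\overline{\Delta}_a$ sits relative to $\overline{\sigma}_a^2$, differentiate term-by-term, and lower bound $|f_K'(\varphi^\star)|$ by isolating the dominant contribution. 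The three conditions inside the $\wedge$ and $\max$ of \eqref{ineq:bound_assume_K} each rule out one failure mode: the factor $\bigl(\tfrac{1}{K-1}-\overline{\sigma}_\mathfrak{a}^2/(T\overline{\Delta}_\mathfrak{a}^2)\bigr)_+^{-1}$ prevents any single sub-optimal arm from absorbing too large a fraction of the sum near a flat region; $\bigl(\overline{\sigma}_a^2/(T\overline{\Delta}_a^2)-1\bigr)_+^{-1}$ keeps $\varphi^\star$ bounded away from the phase-transition point identified in Example~\ref{example-phase-transition}; and $\overline{\sigma}_a/(\overline{\sigma}_1\vee T^{-1/2})$ covers the homogeneous regime exactly as in the two-armed proof. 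Assembling these ingredients yields $|\varphi_\delta/\varphi^\star-1|\le C_K|\delta|$ whenever \eqref{ineq:bound_assume_K} holds.

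Step (iii) then sets $\delta=(\log T)^{-1}$ so that the $\epsilon$ from the generalized Proposition~\ref{prop-perturbed-equation} tends to $0$, and combines with the perturbation bound to give $\varphi_{1,T}/\varphi(n_{1,T}^\star;\overline{\sigma}_1)\xrightarrow{p}1$. Inverting the monotone map $\varphi(\cdot;\overline{\sigma}_1)$ gives $n_{1,T}/n_{1,T}^\star\xrightarrow{p}1$, and the second half of the sandwich together with the definition of $n_{a,T}^\star$ yields $n_{a,T}/n_{a,T}^\star\xrightarrow{p}1$ for $a\in[2:K]$. The principal obstacle I anticipate is the case analysis inside step (ii): unlike the two-armed setting, where a single arm drives the potential instability, the $K$-armed sum can have several near-critical terms simultaneously, and the factor $1/(K-1)$ in \eqref{ineq:bound_assume_K} is precisely the pigeonhole margin ensuring that not too many arms can conspire to flatten $f_K$ near $\varphi^\star$. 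Making this quantitative requires carefully grouping arms by regime (critical versus phase-transition versus variance-dominated), bounding $|f_K'(\varphi^\star)|$ from below on each group, and taking the worst case---a bookkeeping-heavy extension of the monotonicity argument underlying Lemma~\ref{lem:perturb_lem}.
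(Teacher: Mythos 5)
Your proposal takes essentially the same route as the paper: the sandwich concentration of Proposition~\ref{prop-perturbed-equation} is already proved arm-by-arm for general $K$ (so step (i) is indeed free), and the only new ingredient is the $K$-armed analogue of Lemma~\ref{lem:perturb_lem}, whose two-case finite-difference analysis --- with the pigeonhole bound $\frac{\overline{\sigma}_{1}^2\vee \varphi^\star}{T(\varphi^\star)^2} \geq (K-1)\big(\tfrac{1}{K-1}-\tfrac{\overline{\sigma}_{a_\star}^2}{T\overline{\Delta}_{a_\star}^2}\big)$ supplying the first factor in \eqref{ineq:bound_assume_K} --- is exactly the case analysis you describe in step (ii), followed by the same $\delta=1/\log T$ transfer in step (iii). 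The proposal is correct; your derivative-based phrasing of the perturbation bound is just the infinitesimal version of the paper's finite-difference argument.
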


When $K = 2$, the condition in (\ref{thm:stable_K}) simplifies to the stability condition for the $2$-armed setting as stated in Theorem \ref{thm:stable}. While instability occurs when (\ref{thm:stable_K}) is violated in the $2$-armed case (cf. Proposition \ref{prop:instable}), there is no clear reason to expect that (\ref{thm:stable_K}) is optimal for the $K$-armed setting. Indeed, as indicated in the proof, the fixed-point equation (\ref{eq:fpe_star_K}) may remain stable if there exists \emph{some} $a \in [K] \setminus \{1\}$ that satisfies the boundedness condition in (\ref{ineq:bound_assume_K}). However, formally characterizing such an $a$ poses a significant challenge in itself. As such, a full generalization to the $K$-armed setting is left for future work.

Besides the stability results, the high-probability bounds for $n_{a,T}$ in the $K$-armed setting, as in \eqref{ineq:hpb_apn}, still hold when \eqref{eq:fpe_star} is replaced with \eqref{eq:fpe_star_K}, as shown in Appendix~\ref{sec:proof fro section 2}. The following corollary provides a refined regret bound in the $K$-armed setting.

\begin{theorem}\label{thm-refined-regret_K}
    There exists some universal constant $C_0 > 0$ such that for $\rho \geq C_0$ and $T \geq 3$, we have that with probability at least $1 - 3K/T^2$, 
    \begin{align*}
        \mathrm{Reg}(T)\leq 16\bigg(\sqrt{\sum_{a \in [2:K]} \sigma_a^2}  \wedge \sum_{a \in [2:K]} \frac{16\sigma_a^2}{\sigma_1} \bigg) \sqrt{\rho T \log T} + (K-1) \sqrt{\rho \log T}.
    \end{align*}
\end{theorem}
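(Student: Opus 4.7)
The plan is to mirror the proof of Theorem~\ref{thm-refined-regret}, replacing the single suboptimal arm's contribution with a sum over $a \in [2:K]$. Two ingredients are needed.

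First I would establish a $K$-armed version of Proposition~\ref{prop-n2T-non-asymptotic}. The excerpt already asserts that the high-probability sandwich bound \eqref{ineq:hpb_apn} extends to the $K$-armed case when \eqref{eq:fpe_star} is replaced with \eqref{eq:fpe_star_K}, with failure probability $\mathcal{O}(1/T^2)$. Inverting the map $\varphi(\cdot;\overline{\sigma}_a)$ on this event, exactly as in the two-armed case, should give for each suboptimal arm $a \in [2:K]$, with probability at least $1 - 3/T^2$,
\[
n_{a,T} \leq \frac{16}{\overline{\Delta}_a + V} \vee \bigg(\frac{16\sigma_a}{\overline{\Delta}_a + V}\bigg)^2, \quad \text{where } V \equiv \varphi(n_{1,T};\overline{\sigma}_1).
\]
A union bound over $a \in [2:K]$ then yields the $1 - 3K/T^2$ probability claimed in the theorem.

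On this good event, I would decompose $\mathrm{Reg}(T) = \sum_{a=2}^K \Delta_a n_{a,T}$ and control the sum via two complementary routes, taking their minimum at the end. Route 1 uses $\varphi_a := \overline{\Delta}_a + V \geq \overline{\Delta}_a = \Delta_a/(\rho\log T)$ to conclude $\Delta_a^2 n_{a,T} \lesssim \sigma_a^2 \rho\log T + \rho\log T$, and then Cauchy--Schwarz with the capacity constraint $\sum_a n_{a,T} \leq T$ yields
\[
\sum_{a=2}^K \Delta_a n_{a,T} \leq \sqrt{T}\cdot \sqrt{\sum_{a=2}^K \Delta_a^2 n_{a,T}} \leq 16\sqrt{\sum_{a=2}^K \sigma_a^2}\,\sqrt{\rho T\log T} + (K-1)\sqrt{\rho\log T}.
\]
Route 2 uses the alternative lower bound $V \geq \overline{\sigma}_1/\sqrt{T} = \sigma_1/\sqrt{\rho T \log T}$, which comes from the optimal arm's summand in the balance equation \eqref{eq:fpe_star_K} being at most $1+\epsilon$ by positivity. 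Substituting this into the per-arm bound yields $\Delta_a n_{a,T} \lesssim \sigma_a^2\sqrt{\rho T\log T}/\sigma_1$, and summing over $a \in [2:K]$ produces the second bound with $\sum_{a=2}^K 16\sigma_a^2/\sigma_1$ in place of $\sqrt{\sum_{a=2}^K \sigma_a^2}$.

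The main technical obstacle is verifying that the $K$-armed concentration result still supports the \emph{same} clean per-arm bound on $n_{a,T}$ as in Proposition~\ref{prop-n2T-non-asymptotic}. With $K$ summands rather than two, one needs (i) each summand in \eqref{eq:fpe_star_K} to be at most $1+\epsilon$, which is automatic from the positivity of the terms; and (ii) a stability-type inversion of $\varphi$ for each $a$ individually whose constant does not degrade as $K$ grows. Once this per-arm bound is in hand, the remaining steps are essentially the two-armed proof applied coordinatewise, followed by a single Cauchy--Schwarz to aggregate the per-arm estimates into the final regret bound.
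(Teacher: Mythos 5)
Your proposal matches the paper's proof essentially verbatim: the paper proves Theorem~\ref{thm-refined-regret} directly for general $K$ (splitting the suboptimal arms according to whether the variance term dominates the per-arm count bound, then taking the minimum of a Cauchy--Schwarz aggregate bound and a per-arm $\sigma_a^2/\sigma_1$ bound obtained from $\varphi_{1,T} \geq \overline{\sigma}_1/\sqrt{T}$, with the residual arms contributing $(K-1)\sqrt{\rho\log T}$), and the proof of Theorem~\ref{thm-refined-regret_K} simply points to that argument. One minor caveat: the per-arm count bound should carry $\overline{\sigma}_a = \sigma_a/\sqrt{\rho\log T}$ rather than $\sigma_a$ in the squared term (this is what the paper's own regret computation in \eqref{eq-appendix-C2-tmp1} uses, and it is exactly what makes your claimed intermediate estimate $\Delta_a^2 n_{a,T} \lesssim \sigma_a^2\rho\log T$ come out with the right power of $\log T$).
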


We conclude this section by comparing our regret analysis to related works in variance-aware decision making. First, the worst-case regret of UCB-V was previously known as $\mathcal{O}(\sqrt{KT\log T})$, as discussed in \citet{mukherjee2018efficient}. In comparison, our result shows that UCB-V can achieve regret adaptive to both the variances of sub-optimal and optimal arms. Beyond UCB-V, for Thompson sampling algorithms, the $\mathcal{O}(\sqrt{\sum_{a \in [2:K]} \sigma_a^2 T \log T})$ regret was proved in a Bayesian setting, where each arm's reward distribution follows a known prior distribution. Especially, even when working with the Bayesian regret, the dependency on $\sigma_1$ is \textit{previously unrevealed}. We believe that our results for UCB-V can be extended to these posterior sampling algorithms. It is also worth mentioning that another branch of work considers the variance-dependent regret for linear bandits, but with different assumptions on the variance information. Instead of assuming that each arm \(a\) has a fixed variance $\sigma_a$, they assumed that at each round $t$, all arms have the same variance depending on the time-step $t$. Despite this difference, their algorithms can work in our setting with a homogeneous variance assumption: $\sigma_a = \sigma, \forall a \in [K]$. In this setting, their result achieves  $\mathcal{O}(\sqrt{ \sigma^2 K T\log T})$ regret, matching the general form $\sqrt{\sum_{a \in [2:K]} \sigma_a^2 T\log T}$, but still lacking information about $\sigma_1$.

\bibliography{main}

\newpage
\appendix

\section{Additional Related Works} 

\subsection*{Variance-aware decision making}  
In the multi-armed bandit setting, leveraging the variance information to balance the exploration-exploitation trade-off was first studied by \citet{auer2002using} and \citet{audibert2006use,audibert2009exploration}, where the UCB-V algorithm was proposed and analyzed. Beyond the optimistic approach, \citet{mukherjee2018efficient} examined elimination-based algorithms that utilize variance information, while \citet{honda2014optimality} and \citet{saha2024only} focused on analyzing the performance of Thompson sampling in variance-aware contexts.

Beyond the MAB model, variance information has been utilized in contextual bandit and Markov Decision Process settings \citep{talebi2018variance,zhang2021improved,dai2022variance,zhao2023variance,di2023variance,xu2024noise}. Among these works, \citet{zhang2021improved} and \citet{zhao2023variance} are most relevant to our study, especially regarding the linear contextual bandits. 
 Instead of assuming that each arm \(a\) has a fixed variance $\sigma_a$, they assumed that at each round $t$, all arms have the same variance $\sigma_t$ depending on the time-step $t$. In the homogeneous case -- i.e. when $\sigma_a \equiv \sigma,\sigma_t\equiv \sigma$ for some $\sigma >0$ -- their settings coincides with ours, and both algorithms achieve a regret of $\sqrt{\sigma^2 KT}$ regret. However, in the general case, the regret results are not directly comparable.

\subsection*{Asymptotic behavior analysis in multi-armed bandits}  
Our investigation into the asymptotic behaviors of UCB-V is inspired by recent advancements in the precise characterization of arm-pulling behavior \citep{kalvit2021closer,khamaru2024inference}, which focused on the canonical UCB algorithm. Beyond the canonical UCB algorithm, another line of research explores the asymptotic properties of bandit algorithms within the Bayesian frameworks, particularly under diffusion scaling \citep{fan2021diffusion,araman2022diffusion,kuang2023weak}, where reward gaps scale as $1/\sqrt{T}$. Additionally, a noteworthy body of work \citep{fan2021fragility,fan2022typical,simchi2023regret} conducts asymptotic analyses in the regime described by \citet{lai1985asymptotically}, where the reward gaps $\Delta$ remain constant as $T$ increases.

\subsection*{Inference with adaptively collected data}  
In addition to the regret minimization, there is a growing amount of interest in statistical inference for the bandit problems. A significant body of research addresses the online debiasing and adaptive inference methods \citep{dimakopoulou2021online,chen2021statistical,chen2022online,duan2024online}, but our findings are more closely related to studies focused on the post-policy inference \citep{nie2018adaptively,zhang2020inference,hadad2021confidence,zhan2021off}. These works aim to provide valid confidence intervals for reward-related quantities based on data collected from pre-specified adaptive policies. 
Among them, the most relevant works are \citet{kalvit2021closer} and \citet{khamaru2024inference}. The former addresses the inference guarantee of $Z$-estimators collected by UCB in the two-armed setting, while the latter extends this to the $K$-armed setting. In particular, these two works assert the stability of the pulling time for the optimal arm under arbitrary gap conditions $\Delta$, enabling the application of the martingale Central Limit Theorem (CLT) to obtain an asymptotically normal estimator of $\mu^\star$. However, our results point out a critical regime where such stability \textit{breaks down}, as shown in Figure~\ref{subfig:CLT_Lambda_T = 1}, highlighting the price of utilizing variance information to enhance the regret minimization performance.

\section{Proofs for Section \ref{sec:asymp_ucbv_implicit}}\label{sec:proof fro section 2}
For any $T \in \mathbb{N}_+$ and $\delta > 0$, we define events
\begin{align*}
    \mathcal{M}(\delta;\rho;T) \equiv&  \bigg\{ \abs{ \bar{X}_{a,t} - \mu_a } \leq \varphi_{a,t}\cdot \mathsf{e}(\delta;\rho;T),\; \forall t \in [T], a \in [K] \bigg\},\\
     \mathcal{V}(\delta;\rho;T) \equiv& \bigg\{ \abs{  \hat{\sigma}_a^2- \sigma_a^2   }  \leq \varphi_{a,t}\cdot \mathsf{e}(\delta;\rho;T),\; \forall t \in [T], a \in [K]\bigg\},
\end{align*}
where $\mathsf{e}(\delta;\rho;T) \equiv \sqrt{48 \rho \log T \log (\log(T/\delta) /\delta)}+ 128\log(1/\delta)$. The majority of the technical analysis in this paper will be done on event $\mathcal{M}(\delta;\rho;T) \cap \mathcal{V}(\delta;\rho;T)$. 
It is worth noting that, as shown in Proposition \ref{prop:mean_var_conc} below, we have $\Prob (\mathcal{M}(\delta;\rho;T) \cap \mathcal{V}(\delta;\rho;T)) \geq 1 - 3K\delta$ for any $\delta < 1/2$. Although the statements in Section \ref{sec:asymp_ucbv_implicit} are presented for the $2$-armed setting, the proof in this section applies to the $K$-armed setting as well.

\subsection{Proof of Proposition \ref{prop-perturbed-equation}}\label{sec:proof of sandwich}
Recall that  $\varphi_{a,t} \equiv \varphi(n_{a,t};\overline{\sigma}_{a})$. For simplicity, denote by $\hat{\varphi}_{a,t} \equiv \varphi(n_{a,t};\hat{\sigma}_{a,t}/\sqrt{\rho \log T})$ for $a \in [K]$ and $t \in [T]$. The following lemma provides a quantified error between $\hat{\varphi}_{a,t}$ and ${\varphi}_{a,t}$.
\begin{lemma}\label{lem:varphi_est}
    On event $\mathcal{M}(\delta;\rho;T)\cap \mathcal{V}(\delta;\rho;T)$, we have that for any $a \in [K]$ and $t \in [T]$,
    \begin{align*}
        \abs{\hat{\varphi}_{a,t} - {\varphi}_{a,t} } \leq  {\varphi}_{a,t} \cdot  \sqrt{\frac{\mathsf{e}(\delta;\rho;T)}{\rho \log T} }. 
    \end{align*}
\end{lemma}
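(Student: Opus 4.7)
My proof plan is straightforward: reduce the bound on the $\varphi$'s to a bound on standard deviations, use the $1$-Lipschitz nature of $x \mapsto x \vee c$, and exploit the trivial lower bound $\varphi_{a,t} \geq 1/n_{a,t}$ that follows immediately from the definition $\varphi(n;\sigma) = (\sigma \vee n^{-1/2})/\sqrt{n}$.

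First I would translate the variance-scale hypothesis into a standard-deviation-scale bound. The event $\mathcal{V}(\delta;\rho;T)$ controls $\abs{\hat{\sigma}_{a,t}^2 - \sigma_a^2}$, but $\hat{\varphi}_{a,t}$ and $\varphi_{a,t}$ compare $\hat{\sigma}_{a,t}$ and $\sigma_a$ directly (not squared). The elementary inequality $\abs{\sqrt{x} - \sqrt{y}} \leq \sqrt{\abs{x - y}}$ for $x, y \geq 0$ handles this cleanly, producing a bound on $\abs{\hat{\sigma}_{a,t}/\sqrt{\rho \log T} - \overline{\sigma}_a}$ of order $\sqrt{\varphi_{a,t} \cdot \mathsf{e}(\delta;\rho;T)/(\rho \log T)}$.

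Next, since $x \mapsto x \vee c$ is $1$-Lipschitz and both $\hat{\varphi}_{a,t}$ and $\varphi_{a,t}$ share the same outer factor $1/\sqrt{n_{a,t}}$, I get $\abs{\hat{\varphi}_{a,t} - \varphi_{a,t}} \leq n_{a,t}^{-1/2} \cdot \abs{\hat{\sigma}_{a,t}/\sqrt{\rho \log T} - \overline{\sigma}_a}$. Finally, the definition of $\varphi$ gives $\varphi_{a,t} \geq 1/n_{a,t}$, equivalently $1/\sqrt{n_{a,t}} \leq \sqrt{\varphi_{a,t}}$, so I can replace $n_{a,t}^{-1/2}$ with $\sqrt{\varphi_{a,t}}$ and the factor $\sqrt{\varphi_{a,t}}$ from the previous step combines with it to form one full factor $\varphi_{a,t}$, yielding exactly the claimed bound $\varphi_{a,t}\sqrt{\mathsf{e}(\delta;\rho;T)/(\rho \log T)}$.

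There is no serious obstacle; the only bookkeeping point is tracking how the two $\sqrt{\varphi_{a,t}}$ factors (one from the square-root passage in Step 1, one from the lower bound in Step 3) combine to reproduce a single clean factor of $\varphi_{a,t}$ in the end. Note that the argument only uses the variance half of the assumed event $\mathcal{M} \cap \mathcal{V}$; the mean bound is not needed here.
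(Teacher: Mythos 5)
Your proposal is correct and follows essentially the same route as the paper: the square-root inequality $\abs{\sqrt{x}-\sqrt{y}}\leq\sqrt{\abs{x-y}}$ applied to the variance event, the $1$-Lipschitz property of $x\mapsto x\vee c$, and the trivial bound $n_{a,t}^{-1/2}\leq \overline{\sigma}_a\vee n_{a,t}^{-1/2}$ (which is exactly what your step $\varphi_{a,t}\geq 1/n_{a,t}$ amounts to, and what the paper uses when it absorbs $\sqrt{\varphi_{a,t}}/(\overline{\sigma}_a\vee n_{a,t}^{-1/2})\leq 1$). Your closing observation that only the variance half of the event is used also matches the paper's actual argument.
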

\begin{proof}
    By the elementary inequality that $\abs{\sqrt{x} - \sqrt{y}}\leq \sqrt{\abs{x-y}}$ for any $x,y \geq 0$, we can derive on event $\mathcal{M}(\delta;\rho;T)\cap \mathcal{V}(\delta;\rho;T)$ that 
    \begin{align*}
        \abs{ \varphi_{a,t} -  \hat{\varphi}_{a,t} } &\leq \frac{  \abs{\sigma_a - \hat{\sigma}_{a,t} }}{\sqrt{ \rho \log T \cdot n_{a,t}}} = \varphi_{a,t} \cdot \bigg( \frac{\abs{\sigma_a - \hat{\sigma}_{a,t}} }{\sqrt{\rho \log T} (\overline{\sigma}_{a} \vee n_{a,t}^{-1/2}) } \bigg) \\
        &\leq \varphi_{a,t}\cdot \frac{ \sqrt{\varphi(n_{a,t};\overline{\sigma}_{a} )\cdot \mathsf{e}(\delta;\rho;T)}  }{\sqrt{\rho\log T} (\overline{\sigma}_{a} \vee n_{a,t}^{-1/2}) } \leq \varphi_{a,t} \cdot \sqrt{\frac{\mathsf{e}(\delta;\rho;T)}{\rho\log T}},
  \end{align*}
which completes the proof.
\end{proof}

For any $\delta, \rho > 0$, let $\mathsf{I}_{\pm}(\delta;\rho;T) \equiv 1 \pm \err^{1/2}(\delta;\rho;T) \pm \err(\delta;\rho;T)$ with $\err(\delta;\rho;T) \equiv \mathsf{e}(\delta;\rho;T)/(\rho \log T)$. Recall that $\varphi_{a,T} = \varphi(n_{a,T} ;\overline{\sigma}_{a})$. The following proposition provides (tight) upper and lower bounds for $\varphi_{a,T}$.

\begin{proposition}\label{prop-ratio-sandwich-inequality}
Fix any $T \in \mathbb{N}_+$ and any $\delta, \rho \in \R_+$ such that $\mathsf{I}_{\pm}(\delta;\rho;T) > 0$. Then on event $\mathcal{M}(\delta;\rho;T)\cap \mathcal{V}(\delta;\rho;T)$, we have that for any $a \in [K]$,
\begin{align}\label{eq-ratio-sandwich-inequality}
 \frac{ \mathsf{I}_{+}(\delta;\rho;T)}{  \mathsf{I}_{-}(\delta;\rho;T) }  \cdot \frac{n_{a,T}}{n_{a,T}-1} \geq   \frac{ \overline{\Delta}_{a}+ \varphi_{1,T}}{\varphi_{a,T} } \geq \frac{ \mathsf{I}_{-}(\delta;\rho;T)}{  \mathsf{I}_{+}(\delta;\rho;T)}  \cdot \frac{n_{1,T} - 1}{n_{1,T}}.
\end{align}
\end{proposition}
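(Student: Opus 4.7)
The plan is to apply the UCB selection rule at two strategic rounds: $t_a^\star \equiv \max\{s \leq T-1 : A_s = a\}$, the last round at which arm $a$ is pulled (used for the upper bound when $a \neq 1$), and $t_1^\star \equiv \max\{s \leq T-1 : A_s = 1\}$ (used for the lower bound). At each of these rounds, the chosen arm has the largest UCB, which yields a direct inequality between the bonus terms $\hat{\varphi}$ and the empirical-mean terms of the competing arms. After clearing perturbations via Lemma~\ref{lem:varphi_est} and the concentration event $\mathcal{M}(\delta;\rho;T)$, the two inequalities translate into the desired sandwich bounds on $(\overline{\Delta}_a + \varphi_{1,T})/\varphi_{a,T}$.

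For the upper bound, the optimality $\mathrm{UCB}(a, t_a^\star) \geq \mathrm{UCB}(1, t_a^\star)$ rearranges to $(\hat{\varphi}_{a,t_a^\star} - \hat{\varphi}_{1,t_a^\star})\rho\log T \geq \bar{X}_{1,t_a^\star} - \bar{X}_{a,t_a^\star}$. On $\mathcal{M}(\delta;\rho;T)$, the right-hand side is at least $\Delta_a - (\varphi_{1,t_a^\star} + \varphi_{a,t_a^\star})\mathsf{e}(\delta;\rho;T)$, so dividing by $\rho\log T$ and substituting $\hat{\varphi} \in \varphi(1 \pm \sqrt{\err})$ from Lemma~\ref{lem:varphi_est} gives, after collecting terms, the deterministic inequality
\[
\varphi_{a,t_a^\star}\mathsf{I}_+(\delta;\rho;T) \geq \overline{\Delta}_a + \varphi_{1,t_a^\star}\mathsf{I}_-(\delta;\rho;T).
\]
Since $n_{a,t_a^\star} = n_{a,T} - 1$ and $n_{1,t_a^\star} \leq n_{1,T}$, monotonicity of $\varphi(\cdot;\sigma)$ together with a short case analysis of its piecewise form yields $\varphi_{a,t_a^\star} \leq \varphi_{a,T}\cdot n_{a,T}/(n_{a,T}-1)$ and $\varphi_{1,t_a^\star} \geq \varphi_{1,T}$. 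Substituting these bounds, using $\overline{\Delta}_a \geq \overline{\Delta}_a\mathsf{I}_-$ (as $\mathsf{I}_-\leq 1$), and dividing by $\mathsf{I}_-\varphi_{a,T}$ produces the upper half of~\eqref{eq-ratio-sandwich-inequality}.

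The lower bound is symmetric. Starting from $\mathrm{UCB}(1, t_1^\star) \geq \mathrm{UCB}(a, t_1^\star)$ and performing the identical concentration/Lemma~\ref{lem:varphi_est} rearrangement yields $\varphi_{a,t_1^\star}\mathsf{I}_- \leq \overline{\Delta}_a + \varphi_{1,t_1^\star}\mathsf{I}_+$. The roles of the two arms swap in the time-to-$T$ conversion: $n_{1,t_1^\star} = n_{1,T}-1$ and $n_{a,t_1^\star} \leq n_{a,T}$, so $\varphi_{1,t_1^\star} \leq \varphi_{1,T}\cdot n_{1,T}/(n_{1,T}-1)$ and $\varphi_{a,t_1^\star} \geq \varphi_{a,T}$. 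Plugging in and dividing by $\mathsf{I}_+\varphi_{a,T}$ delivers the lower half. The case $a=1$ is trivial since $\overline{\Delta}_1 = 0$ and both claimed bounds reduce to $\mathsf{I}_+/\mathsf{I}_- \geq 1 \geq \mathsf{I}_-/\mathsf{I}_+$.

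I expect the main obstacle to be careful bookkeeping of three distinct perturbation scales---$\sqrt{\err}$ from Lemma~\ref{lem:varphi_est}, $\err$ from the mean concentration via $\mathsf{e}/(\rho\log T)$, and the $(n-1)/n$ factor arising from shifting the time index back from $t^\star_{\cdot}$ to $T$---so that all three collapse cleanly into the pair $\mathsf{I}_\pm$ and the combinatorial factor $n_{\cdot,T}/(n_{\cdot,T}-1)$ without leaving residual cross terms. The technical lemma enabling the last step is the elementary comparison $\varphi(n;\sigma)/\varphi(n+1;\sigma) \leq (n+1)/n$, which needs to be verified in each of the three regimes determined by where $1/\sigma^2$ lies relative to $\{n,n+1\}$. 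Edge cases in which arm $a$ is never pulled beyond the initial round-robin (so $n_{a,T}=1$) render the upper bound vacuous via $n_{a,T}/(n_{a,T}-1) = \infty$ and require no separate treatment.
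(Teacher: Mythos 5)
Your proposal is correct and follows essentially the same route as the paper's proof: apply the UCB selection rule at the last round arm $a$ is pulled (for the upper bound) and the last round arm $1$ is pulled (for the lower bound), absorb the mean-concentration and variance-estimation errors into $\mathsf{I}_\pm$ via Lemma~\ref{lem:varphi_est}, and convert the time indices back to $T$ using monotonicity of $\varphi(\cdot;\sigma)$ and the comparison $\varphi(n-1;\sigma)\leq\varphi(n;\sigma)\cdot n/(n-1)$. The bookkeeping you flag (the $\sqrt{\err}$, $\err$, and $(n-1)/n$ factors) collapses exactly as you anticipate, matching the paper's displays \eqref{eq-nat-upper-bound} and \eqref{eq-nat-lower-bound}.
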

\begin{proof}
The proof primarily follows the framework in \citep{lattimore2018refining,khamaru2024inference}. In the proof, we write $\mathsf{I}_\pm \equiv \mathsf{I}_{\pm}(\delta;\rho;T)$, $\mathcal{M} \equiv \mathcal{M}(\delta;\rho;T)$, and $\mathcal{V} \equiv \mathcal{V}(\delta;\rho;T)$ for simplicity. For any $ a\in [K],$ denote by $T_a$ the last time-step such that arm $a$ was pulled.

\medskip
\noindent
(\textbf{Step 1}). In this step, we provide an upper bound for $\varphi_{a,T}^{-1}$.
By the UCB rule, we have 
\begin{align*}
    \bar{X}_{a,T_a} + \hat{\varphi}_{a,T_a} \cdot {\rho \log T}  \geq \bar{X}_{1,T_a} + \hat{\varphi}_{1,T_a}\cdot {\rho \log T}.
\end{align*}
By Lemma \ref{lem:varphi_est}, it holds on event $\mathcal{M}\cap \mathcal{V}$ that 
$$\mathsf{I}_+\cdot{\varphi}_{a,T_a} \geq \overline{\Delta}_{a} +{\varphi}_{1,T_a} \cdot \mathsf{I}_-.$$
Using further $n_{a,T_a} = n_{a,T} - 1$, $n_{1,T_a} \leq n_{1,T}$, and $\overline{\Delta}_{a}\geq 0$, we can show that 
\begin{align*}
   \mathsf{I}_+ \cdot  \varphi(n_{a,T}-1;\overline{\sigma}_{a} )  \geq \big(\overline{\Delta}_{a} + \varphi_{1,T} \big) \cdot \mathsf{I}_-.
\end{align*}
As $\varphi(n_{a,T}-1;\overline{\sigma}_{a} ) \leq \varphi_{a,T} n_{a,T}/(n_{a,T}-1)$, we can obtain an upper bound for $\varphi_{a,T}^{-1}$ 
\begin{align}\label{eq-nat-upper-bound}
\frac{\mathsf{I}_+}{ \mathsf{I}_- } \geq \frac{\overline{\Delta}_{a}+ \varphi_{1,T}}{\varphi_{a,T} }  \cdot \frac{n_{a,T} - 1}{n_{a,T}}.
\end{align}

\noindent
(\textbf{Step 2}). In this step, we provide a lower bound for $\varphi_{a,T}^{-1}$. Consider the last time-step such that arm $1$ was pulled. In view of the UCB rule, it holds that 
\begin{align*}
\bar{X}_{a,T_1} + \hat{\varphi}_{a,T_1}\cdot {\rho \log T}\leq \bar{X}_{1,T_1} +  \hat{\varphi}_{1,T_1}\cdot {\rho \log T}.
\end{align*}
By Lemma \ref{lem:varphi_est}, we have that on event $\mathcal{M}\cap \mathcal{V}$,
\begin{align}\label{ineq:ucb_rule_T1}
    \mathsf{I}_-\cdot \varphi_{a,T_1} \leq \overline{\Delta}_{a} + \varphi_{1,T_1} \cdot\mathsf{I}_+. 
\end{align}
Then by $ n_{a,T_1} \leq n_{a,T} - 1,  n_{1,T_1} \leq n_{1,T} - 1,$
 and $\overline{\Delta}_{a}\geq 0$, it follows that 
 \begin{align}\label{ineq:ucb_rule_Ta}
  \mathsf{I}_-\cdot \varphi_{a,T}\leq \big(\overline{\Delta}_{a} + \varphi(n_{1,T}-1;\overline{\sigma}_{1} ) \big) \cdot \mathsf{I}_+,
\end{align}
As $\varphi(n_{1,T}-1;\overline{\sigma}_{1} ) \leq \varphi_{1,T}n_{1,T}/(n_{1,T}-1)$, we can obtain a lower bound for $\varphi_{a,T}^{-1}$ 
\begin{align}\label{eq-nat-lower-bound}
\frac{\mathsf{I}_-}{ \mathsf{I}_+}  \leq \frac{\overline{\Delta}_{a}+ \varphi_{1,T}}{\varphi_{a,T} }  \cdot \frac{n_{1,T}}{n_{1,T}-1}.
\end{align}
Combining  \eqref{eq-nat-upper-bound} and \eqref{eq-nat-lower-bound} above concludes the proof.
\end{proof}

\begin{proposition}\label{prop-nat-to-varphi}
Fix any $T \in \mathbb{N}_+$ and any $\delta, \rho \in \R_+$ such that $\mathsf{I}_{\pm}(\delta;\rho;T) > 0$. Then on event $\mathcal{M}(\delta;\rho;T)\cap \mathcal{V}(\delta;\rho;T)$, we have that for any $a \in [K]$,
\begin{align*}
  \bigg( \frac{\mathsf{I}_{+}(\delta;\rho;T)}{ \mathsf{I}_{-}(\delta;\rho;T) }  \cdot \frac{n_{a,T}}{n_{a,T}-1} \bigg)^{-1} \leq   \frac{\overline{\sigma}_{a} \vee n_{a,T}^{-1/2}}{\overline{\sigma}_{a} \vee (\varphi_{1,T}+ \overline{\Delta}_{a} )^{1/2} } \leq  \bigg(\frac{\mathsf{I}_{-}(\delta;\rho;T)}{\mathsf{I}_{+}(\delta;\rho;T) }  \cdot \frac{n_{1,T} - 1}{n_{1,T}}\bigg)^{-1}.
\end{align*}
\end{proposition}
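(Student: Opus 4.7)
The plan is to convert the two-sided bounds on $\varphi_{a,T}/(\varphi_{1,T}+\overline{\Delta}_a)$ already supplied by Proposition~\ref{prop-ratio-sandwich-inequality} into the stated two-sided bounds on
\[
\frac{\overline{\sigma}_{a} \vee n_{a,T}^{-1/2}}{\overline{\sigma}_{a} \vee (\varphi_{1,T}+ \overline{\Delta}_{a} )^{1/2}}
\]
via two short deterministic algebraic steps. First, I would establish the pointwise identity $\overline{\sigma}_a \vee n_{a,T}^{-1/2} = \sqrt{\overline{\sigma}_a^2 \vee \varphi_{a,T}}$, which reduces the middle expression to the cleaner ratio $\sqrt{(\overline{\sigma}_a^2 \vee \varphi_{a,T})/(\overline{\sigma}_a^2 \vee (\varphi_{1,T} + \overline{\Delta}_a))}$. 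The identity follows immediately upon squaring and applying the inverse formula $n_{a,T} = (\overline{\sigma}_a^2 \vee \varphi_{a,T})/\varphi_{a,T}^2$, which yields $n_{a,T}^{-1} = \varphi_{a,T}^2/(\overline{\sigma}_a^2 \vee \varphi_{a,T})$ and hence $\overline{\sigma}_a^2 \vee n_{a,T}^{-1} = \overline{\sigma}_a^2 \vee \varphi_{a,T}$ after checking the two cases $\overline{\sigma}_a^2 \geq \varphi_{a,T}$ and $\overline{\sigma}_a^2 < \varphi_{a,T}$ separately.

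Second, on the event $\mathcal{M}(\delta;\rho;T)\cap \mathcal{V}(\delta;\rho;T)$, Proposition~\ref{prop-ratio-sandwich-inequality} provides the two-sided control
\[
c \;\leq\; \frac{\varphi_{a,T}}{\varphi_{1,T} + \overline{\Delta}_a} \;\leq\; C, \qquad c \equiv \frac{\mathsf{I}_{-}}{\mathsf{I}_{+}} \cdot \frac{n_{a,T}-1}{n_{a,T}}, \quad C \equiv \frac{\mathsf{I}_{+}}{\mathsf{I}_{-}} \cdot \frac{n_{1,T}}{n_{1,T}-1},
\]
with $c \leq 1 \leq C$ since $\mathsf{I}_{+} \geq 1 \geq \mathsf{I}_{-} > 0$ under the hypothesis $\mathsf{I}_{\pm} > 0$. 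The next step is an elementary max-preservation lemma: whenever $c \leq x/y \leq C$ with $c \leq 1 \leq C$ and $s \geq 0$, one also has $c \leq (s^2 \vee x)/(s^2 \vee y) \leq C$. The upper bound follows from $s^2 \vee x \leq Cs^2 \vee Cy = C(s^2 \vee y)$ using $C \geq 1$, and the lower bound is symmetric using $c \leq 1$. Applying this lemma with $s = \overline{\sigma}_a$, $x = \varphi_{a,T}$, and $y = \varphi_{1,T}+\overline{\Delta}_a$ lifts the two-sided bound to the ratio that appears inside the square root in the reformulated middle expression.

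Finally, I take square roots. Because $c \in [0,1]$ implies $\sqrt{c} \geq c$ and $C \geq 1$ implies $\sqrt{C} \leq C$, the square-rooted ratio still lies in $[c, C]$, which is exactly the stated inequality once $c$ and $C$ are rewritten as the reciprocals $\big((\mathsf{I}_{+}/\mathsf{I}_{-})\, n_{a,T}/(n_{a,T}-1)\big)^{-1}$ and $\big((\mathsf{I}_{-}/\mathsf{I}_{+})\, (n_{1,T}-1)/n_{1,T}\big)^{-1}$. I do not expect any serious obstacle: all probabilistic content has already been absorbed into Proposition~\ref{prop-ratio-sandwich-inequality}, and what remains is a clean deterministic rearrangement. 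The only mild subtlety is the max-preservation lemma, which crucially relies on the bounds straddling $1$ -- a property automatic from $\mathsf{I}_{+} \geq \mathsf{I}_{-}$ and $n_{a,T} \geq 1$.
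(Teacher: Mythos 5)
Your proof is correct and follows essentially the same route as the paper's: both reduce the claim to a deterministic rearrangement of the sandwich inequality in Proposition~\ref{prop-ratio-sandwich-inequality}, using the correspondence $n_{a,T}=(\overline{\sigma}_a^2\vee\varphi_{a,T})/\varphi_{a,T}^2$ (the paper invokes the inverse map $\psi$ explicitly, whereas you package the same fact as the identity $\overline{\sigma}_a\vee n_{a,T}^{-1/2}=(\overline{\sigma}_a^2\vee\varphi_{a,T})^{1/2}$) together with the max-ratio preservation inequality, which is exactly the paper's \eqref{eq-max-ratio-inequality}. Your final square-root step even yields the marginally sharper interval $[\sqrt{c},\sqrt{C}]\subseteq[c,C]$, so the stated bound follows a fortiori.
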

\begin{proof}
In the proof below, let us write $\mathsf{I}_\pm \equiv \mathsf{I}_{\pm}(\delta;\rho;T)$ for simplicity. Denote by 
\begin{align*}
    {\xi}_u \equiv \frac{\mathsf{I}_+}{ \mathsf{I}_- }  \cdot \frac{n_{a,T}}{n_{a,T}-1},\quad {\xi}_l \equiv \frac{\mathsf{I}_-}{ \mathsf{I}_+ }  \cdot \frac{n_{1,T} - 1}{n_{1,T}}.
\end{align*}
Note that for any $\sigma > 0$, function $\varphi(\cdot ;\sigma ): \R_+ \to \R_+$ is bijective and monotone decreasing, with its inverse map given by  
    \begin{align*}
    \psi(z;\sigma): z \to \frac{ z \vee \sigma^2}{z^2}, \quad z \in \R_+.
    \end{align*}
    Starting from the sandwich inequality (\ref{eq-ratio-sandwich-inequality}), the monotonicity of $\varphi$ yields that
\begin{align*}
    \psi\big({\xi}_u^{-1} (\overline{\Delta}_{a}+\varphi_{1,T});\overline{\sigma}_{a}\big) 
    \geq n_{a,T}  \geq \psi\big( \xi_l^{-1} (\overline{\Delta}_{a}+\varphi_{1,T});\overline{\sigma}_{a}\big).
 \end{align*}
As $\xi_l^{-1} > 1$, we have 
\begin{align*}
    n_{a,T}^{-1} \leq  & \frac{\xi_l^{-2}(\overline{\Delta}_{a}+\varphi_{1,T})^2}{\xi_l^{-1} (\overline{\Delta}_{a}+\varphi_{1,T}) \vee \overline{\sigma}_{a}^2} \leq \frac{\xi_l^{-2}(\overline{\Delta}_{a}+\varphi_{1,T})^2}{(\overline{\Delta}_{a}+\varphi_{1,T}) \vee \overline{\sigma}_{a}^2}.
\end{align*}
Using further the fact that $\psi(z;\sigma)^{-1} \vee \sigma^2 = z \vee \sigma^2$, we arrive at
\begin{align*}
    n_{a,T}^{-1} \vee \overline{\sigma}_{a}^2 \leq \big({(\overline{\Delta}_{a}+\varphi_{1,T}) \vee \overline{\sigma}_{a}^2}\big) \cdot \xi_l^{-2}.
\end{align*}
Similar, from $\xi_u^{-1}<1$ we can derive that
\begin{align*}
    n_{a,T}^{-1} \vee \overline{\sigma}_{a}^2 \geq  \big({(\overline{\Delta}_{a}+\varphi_{1,T}) \vee \overline{\sigma}_{a}^2}\big) \cdot \xi_u^{-2}.
\end{align*}
Combining the above two bounds completes the proof.
\end{proof}

\begin{lemma}\label{lem-n1T-lb-rough}
Fix any $T \in \mathbb{N}_+$ and any $\delta, \rho \in \R_+$ such that $0 < \mathsf{I}_{+}(\delta;\rho;T) / \mathsf{I}_{-}(\delta;\rho;T) \leq 2$ and $T > 2K$. Then on event $\mathcal{M}(\delta;\rho;T) \cap \mathcal{V}(\delta;\rho;T)$, we have that for any $a \in [K]$, 
\begin{align}\label{eq-n1T-lb-rough}
    n_{a,T} \geq \frac{\sigma_1^2 T \vee \sqrt{T}}{100K}\cdot \bm{1}_{a = 1} + \frac{\rho \log T }{8}\cdot \bm{1}_{a \neq 1}.
\end{align}
\end{lemma}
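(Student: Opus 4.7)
I would split the lemma into its two claims and handle them in sequence. The first is $n_{a,T} \geq \rho \log T / 8$ for suboptimal arms $a \neq 1$, which I prove by contradiction. Assume $n_{a,T} < \rho \log T / 8$ and let $T_a$ be the last selection time of arm $a$. Since rewards lie in $[0,1]$ we have $\hat{\sigma}_{a,T_a} \leq 1$, and since $n_{a,T_a} < \rho \log T$, the $1/\sqrt{n_{a,t}}$ term dominates the max inside the UCB-V bonus, so its raw value equals $\rho \log T / n_{a,T_a} > 8$. Because $\bar{X}_{a,\cdot}$, $\hat{\sigma}_{a,\cdot}$, and $n_{a,\cdot}$ are all frozen for $t > T_a$, the UCB value of arm $a$ stays above $8$ throughout $(T_a,T]$; for any arm $b$ selected at such a $t$, its UCB exceeds $8$ as well, which combined with $\bar{X}_{b,t} \leq 1$ forces its bonus above $7$ and hence $n_{b,t} < \rho \log T / 7$ by the same bonus decomposition in reverse. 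Thus every arm ever selected after $T_a$ finishes with $n_{b,T} \leq \rho \log T / 7 + 1$, and summing over arms gives $T = O(K \rho \log T)$, contradicting $T > 2K$ in the regime where $\mathsf{I}_+/\mathsf{I}_- \leq 2$ implicitly forces $\rho \log T$ to satisfy a matching lower bound. The edge case $\rho \log T \leq 8$ is trivial, since then $\rho \log T/8 \leq 1 \leq n_{a,T}$ from the initial round-robin pulls.

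For the optimal arm bound $n_{1,T} \geq (\sigma_1^2 T \vee \sqrt{T})/(100K)$, I invoke step~1 of Proposition~\ref{prop-ratio-sandwich-inequality} for each $a \neq 1$. With $n_{a,T} \geq 2$ from the first part (valid as soon as $\rho \log T \geq 16$), the prefactor $\xi_u \leq 4$, so $\varphi_{a,T} \geq (\overline{\Delta}_a + \varphi_{1,T})/4$. Inverting through the monotone-decreasing map $\psi(z;\overline{\sigma}) = (z \vee \overline{\sigma}^2)/z^2$ yields $n_{a,T} \leq 4/(\overline{\Delta}_a + \varphi_{1,T}) + 16 \overline{\sigma}_a^2/(\overline{\Delta}_a + \varphi_{1,T})^2$. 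Summing over $a \neq 1$, using $\sum_{a} n_{a,T} = T$ and $\overline{\Delta}_a \geq 0$, produces
\begin{align*}
T - n_{1,T} \;\leq\; \frac{4(K-1)}{\varphi_{1,T}} + \frac{16 \sum_{a \neq 1}\overline{\sigma}_a^2}{\varphi_{1,T}^2}.
\end{align*}
Substituting $\varphi_{1,T} = (\overline{\sigma}_1 \vee n_{1,T}^{-1/2})/\sqrt{n_{1,T}}$ turns this into a self-consistent inequality in $n_{1,T}$. I then split on which term attains the max: when $\overline{\sigma}_1 \leq n_{1,T}^{-1/2}$ (equivalently $\sigma_1^2 n_{1,T} \leq \rho \log T$), the inequality becomes a quadratic in $n_{1,T}$ whose dominant root gives $n_{1,T} \gtrsim \sqrt{T}/K$; when $\overline{\sigma}_1 > n_{1,T}^{-1/2}$, it is linear-plus-square-root in $\sqrt{n_{1,T}}$, with dominant term yielding $n_{1,T} \gtrsim \sigma_1^2 T/K$. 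The two branches meet at $\sigma_1^2 T \asymp \sqrt{T}$, so combining them and loosening constants delivers the stated $\max(\sigma_1^2 T, \sqrt{T})/(100K)$.

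The main technical burden is tracking constants through the case analysis: the factor $4$ from $\xi_u$, the factor $16$ from squaring, and the factor $K-1$ from summing all accumulate, so landing at exactly $1/(100K)$ requires care. A secondary subtlety is arms $a \neq 1$ with $n_{a,T} = 1$, for which step~1 of Proposition~\ref{prop-ratio-sandwich-inequality} is vacuous; such arms contribute only an additive $O(K)$ to the sum and can be absorbed into the constant, once the first part of the lemma has ruled them out in the nontrivial regime $\rho \log T \geq 16$.
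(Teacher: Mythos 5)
Your proof runs in the opposite order from the paper's (which first lower-bounds $n_{1,T}$ by contradiction plus pigeonhole --- some suboptimal arm must have $n_{a,T}\geq T/K$, and feeding that into \eqref{eq-nat-upper-bounded} forces $\varphi_{1,T}\leq 10\sqrt{K}/\sqrt{T}$ --- and only then derives $n_{a,T}\geq\rho\log T/8$ from \eqref{eq-nat-lower-bound} together with the already-established bound on $\varphi_{1,T}$). Your route is legitimate in outline, and your second part is essentially a "sum over all suboptimal arms" version of the same use of Proposition~\ref{prop-ratio-sandwich-inequality}. But the first part has two genuine problems. First, by restricting attention to rounds $t>T_a$ you only control the arms selected \emph{after} $T_a$: summing their counts bounds $T-T_a$, not $T$, because an arm never selected after $T_a$ (e.g.\ arm $1$) may carry arbitrarily many pulls accumulated before $T_a$. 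The fix is to note that $n_{a,t}\leq n_{a,T}<\rho\log T/8$ for \emph{every} $t$, so $\mathrm{UCB}(a,t)>8$ at every round $t>K$, hence every selection at every round lands on an arm whose current count is below $\rho\log T/7$; only then does summing over all $K$ arms give $T\leq K(\rho\log T/7+O(1))$. Second, and more seriously, the concluding contradiction does not follow from the stated hypotheses: $T>2K$ together with $\mathsf{I}_+/\mathsf{I}_-\leq 2$ does \emph{not} imply $T\gg K\rho\log T$ (the latter condition pushes $\rho\log T$ to be large relative to $\log\log$ terms, not small relative to $T/K$). Your sentence "contradicting $T>2K$ in the regime where $\mathsf{I}_+/\mathsf{I}_-\leq 2$ implicitly forces $\rho\log T$ to satisfy a matching lower bound" is not a valid deduction; you need an explicit assumption of the form $\rho\log T\lesssim T/K$ to close the argument. (The paper's own last step quietly needs $10\sqrt{K}/T^{1/4}\leq 1/(\rho\log T)$, so an implicit scale condition is present either way, but you should state the one you use rather than attribute it to $T>2K$.)

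In the second part, the self-consistent inequality $T-n_{1,T}\leq 4(K-1)/\varphi_{1,T}+16\sum_{a\neq 1}\overline{\sigma}_a^2/\varphi_{1,T}^2$ is correct on the event and does yield the stated rates after splitting on which term of $\overline{\sigma}_1\vee n_{1,T}^{-1/2}$ is active; note however that the branch $\overline{\sigma}_1\leq n_{1,T}^{-1/2}$ must still be checked against the $\sigma_1^2T/(100K)$ target (it follows from the branch condition $\sigma_1^2 n_{1,T}\leq\rho\log T$ combined with $n_{1,T}\gtrsim\sqrt{T\rho\log T/K}$, again under a mild $\rho\log T\lesssim T/K$ condition), and the constants you accumulate ($4$, $16$, $K-1$) do land within the generous $1/(100K)$ budget. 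What your approach buys, once repaired, is that the suboptimal-arm bound becomes a purely deterministic consequence of the UCB rule and the boundedness of rewards, with no appeal to the concentration event; what it costs is an extra explicit scale assumption and a more delicate two-branch inversion where the paper gets away with a single pigeonhole arm.
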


\begin{proof}
In the proof below, we write $\mathsf{I}_\pm \equiv \mathsf{I}_{\pm}(\delta;\rho;T)$ for simplicity.  
Let us first consider the lower bound for $n_{1,T}$. Suppose by contradiction that (\ref{eq-n1T-lb-rough}) does not hold and we have
    $$n_{1,T}^{-1/2} > \frac{10\sqrt{K}}{(\overline{\sigma}_{1} \sqrt{T}) \vee T^{1/4}},$$
    which implies $\overline{\sigma}_{1} \vee n_{1,T}^{-1/2} \geq \overline{\sigma}_{1} \vee  (\overline{\sigma}_{1}^{-1} T^{-1/2} \wedge T^{-1/4}) = \overline{\sigma}_{1} \vee T^{-1/4}$.
Thus, it follows that 
\begin{align*}
   \varphi_{1,T} =  (\overline{\sigma}_{1} \vee n_{1,T}^{-1/2}) n_{1,T}^{-1/2} \geq \frac{10 \sqrt{K}}{\sqrt{T}}.
\end{align*}

\noindent On the other hand, as $\sum_{a\in [K]} n_{a,T} = T$, there must exist some $a\in [K], a\neq 1$ such that $n_{a,T} \geq T/K$. For such $a$, we have by \eqref{eq-nat-upper-bound} that 
\begin{align*}
    &\frac{\mathsf{I}_+}{\mathsf{I}_-} \cdot \frac{n_{a,T}}{n_{a,T}-1} \geq \frac{\sqrt{n_{a,T}}\cdot \varphi_{1,T}}{\overline{\sigma}_{1} \vee n_{a,T}^{-1/2}}\Rightarrow \frac{2T}{T-K} \geq \frac{\sqrt{n_{a,T}}\cdot \varphi_T}{\overline{\sigma}_{1} \vee n_{a,T}^{-1/2}} \\
    &\Rightarrow \frac{2T}{T-K} \cdot \frac{\overline{\sigma}_{1} \vee n_{a,T}^{-1/2}}{\sqrt{n_{a,T}}} \geq  \frac{10\sqrt{K}}{\sqrt{T}} \Rightarrow \frac{2T\sqrt{K}}{T-K} \cdot\frac{1}{\sqrt{T}} \geq  \frac{10\sqrt{K}}{\sqrt{T}},
\end{align*}
which then leads to a contradiction. The lower bound for $n_{1,T}$ follows.

Next, we prove the lower bound for $n_{a,T}, a \in [2:K]$. Starting from (\ref{eq-nat-lower-bound}), we have by the lower bound for $n_{1,T}$ in the first part that 
\begin{align*}
    n_{a,T}\geq \frac{ 1}{ 2(\overline{\Delta}_{a}+ \varphi_{1,T}) }  \cdot \frac{n_{1,T} - 1}{n_{1,T}} \geq \frac{ 1}{4}\bigg(\frac{1}{\rho \log T} + \frac{10\sqrt{K}}{T^{1/4}} \bigg)^{-1} \geq \frac{\rho \log T}{8},
\end{align*}
which proves the claim.
\end{proof}

We are now ready to prove Proposition~\ref{prop-perturbed-equation}.

\medskip

\begin{proof}[Proof Proposition \ref{prop-perturbed-equation}] 
In the proof below, let us write $\mathsf{I}_\pm \equiv \mathsf{I}_{\pm}(\delta;\rho;T)$ and $\err \equiv \err(\delta;\rho;T)$ for simplicity.  It follows from Propositions \ref{prop-ratio-sandwich-inequality} and \ref{prop-nat-to-varphi} that on  event $\mathcal{M}(\delta;\rho;T) \cap \mathcal{V}(\delta;\rho;T)$,
\begin{align*}
    \frac{\mathsf{I}_-^2}{ \mathsf{I}_+^2 }  \cdot \frac{(n_{a,T}-1)(n_{1,T}-1)}{n_{a,T} n_{1,T}} &\leq   \frac{ \sqrt{n_{a,T}} \cdot( \overline{\Delta}_{a}+ \varphi_{1,T})}{\overline{\sigma}_{a} \vee (\varphi_{1,T} + \overline{\Delta}_{a} )^{1/2} } \leq  \frac{\mathsf{I}_+^2}{ \mathsf{I}_-^2 }  \cdot \frac{n_{a,T} n_{1,T}}{(n_{a,T}-1)(n_{1,T}-1)}.
\end{align*}
Applying the lower bounds for $n_{a,T}$ in Lemma \ref{lem-n1T-lb-rough}, we can further bound the right-hand side above as 
\begin{align*}
&\frac{\mathsf{I}_+^2}{ \mathsf{I}_-^2 }\cdot \bigg( 1 - \frac{8}{\rho \log T} \bigg)^{-1} \cdot \bigg( 1 - \frac{100 K}{\sqrt{T}} \bigg)^{-1}\\
&\leq \bigg(\frac{1 + 2\sqrt{\err}}{1 - 2\sqrt{\err}}\bigg)^2 \bigg( 1 - \frac{8}{\rho \log T} \bigg)^{-1}  \bigg( 1 - \frac{100 K}{\sqrt{T}} \bigg)^{-1}\leq \frac{(1 + 2\sqrt{\err})^2}{1 - 5\sqrt{\err} - 100KT^{-1/2}} ,
\end{align*}
where we have used $\sqrt{\err} \geq 8/(\rho \log T)$ and the elementary inequality $(1-x)(1-y)(1-z) \geq 1 - x - y - z$ for $\forall x, y, z > 0$.

 Using the same argument for the left-hand side, we can show that 
\begin{align*}
\frac{1 - 5\sqrt{\err} - 100KT^{-1/2}}{(1 + 2\sqrt{\err})^2} &\leq   \frac{ \sqrt{n_{a,T}} \cdot( \overline{\Delta}_{a}+ \varphi_{1,T})}{\overline{\sigma}_{a} \vee (\varphi_{1,T} + \overline{\Delta}_{a} )^{1/2} } \leq  \frac{(1 + 2\sqrt{\err})^2}{1 - 5\sqrt{\err} - 100KT^{-1/2}}.
\end{align*}
Summing over $a$ and using the fact that $\sum_{a \in [K]} n_{a,T}  = T,$ we can obtain that 
\begin{align*}
\frac{(1 - 5\sqrt{\err} - 100KT^{-1/2})^2}{(1 + 2\sqrt{\err})^4} \leq    \sum_{a = 1}^K\frac{ \overline{\sigma}_{a}^2 \vee (\varphi_{1,T} + \overline{\Delta}_{a} ) }{ T( \overline{\Delta}_{a}+ \varphi_{1,T})^2} \leq \frac{(1 + 2\sqrt{\err})^4}{(1 - 5\sqrt{\err} - 100KT^{-1/2})^2},
\end{align*}
as desired.
\end{proof}

\section{Proofs for Section \ref{sec:asymp_ucbv}}\label{Proofs for section 3}
\subsection{Proof of Lemma \ref{lem:perturb_lem}}\label{sec:proof of pertub}

\begin{proof}[Proof of Lemma \ref{lem:perturb_lem}]
  (1). First, we prove the existence and uniqueness of $\varphi^\star$. Note that $f(\varphi)$ is continuously monotone decreasing on $(0,\infty)$. On the other hand, as 
  \begin{align*}
      &f(1/T) = \frac{\overline{\sigma}_{1}^2 \vee T^{-1}}{T^{-1} } + \frac{\overline{\sigma}_{2}^2 \vee (T^{-1} + \overline{\Delta}_{2} )}{T(T^{-1} + \overline{\Delta}_{2})^2} > \frac{\overline{\sigma}_{1}^2 \vee T^{-1}}{T^{-1} } \geq 1
\end{align*}
and
\begin{align*}
      f(1) =  \frac{\overline{\sigma}_{1}^2 \vee 1}{T } + \frac{\overline{\sigma}_{2}^2 \vee (1 + \overline{\Delta}_{2} )}{T(1 + \overline{\Delta}_{2})^2} \leq \frac{2}{T} < 1,
\end{align*}
there must exist a unique $\varphi^\star \in (1/T,1)$ such that $f(\varphi^\star) = 1$, proving the claim. 

\noindent (2). We only show the proof for the case of $\delta > 0$, while the case of $\delta < 0$ can be handled similarly. 
Note by the monotonicity, we have $\varphi_{\delta} \leq  \varphi^\star$. Let $\varphi_{\delta}/\varphi^\star = 1 - \bar{\delta}$ for some $\bar{\delta} = \bar{\delta}(\delta) \geq 0$. Our aim is to derive an upper bound for $\bar{\delta}$. 

It follows from (\ref{eq:fpe}) that at least one of the following holds:
\begin{align*}
    \textbf{Case 1}: ~ \frac{\overline{\sigma}_{1}^2 \vee \varphi^\star}{T(\varphi^\star)^2 } \geq \frac{1}{2}, \quad  \textbf{Case 2}:~ \frac{\overline{\sigma}_{2}^2 \vee (\varphi^\star + \overline{\Delta}_{2})}{T(\varphi^\star + \overline{\Delta}_{2})^2 } \geq \frac{1}{2}.
\end{align*}

\noindent
\textbf{Case 1}: If $\overline{\sigma}_{1}^2 = \overline{\sigma}_{1}^2 \vee \varphi^\star$, we have $\overline{\sigma}_{1}^2 = \overline{\sigma}_{1}^2 \vee \varphi_{\delta}$. Hence, it holds that 
\begin{align*}
   \delta &= f(\varphi_{\delta}) - f(\varphi^\star) \geq   \frac{\overline{\sigma}_{1}^2 \vee \varphi_{\delta}}{T\varphi_{\delta}^2 }- \frac{\overline{\sigma}_{1}^2 \vee \varphi^\star}{T(\varphi^\star)^2 }= \frac{\overline{\sigma}_{1}^2 }{T(\varphi^\star)^2 } \bigg(  \frac{1-(1-\bar{\delta})^2}{(1-\bar{\delta})^2} \bigg) \geq \frac{1}{2} \bar{\delta}.
\end{align*}
For the case when  $\varphi^\star = \overline{\sigma}_{1}^2 \vee \varphi^\star$, we can compute that
\begin{align*}
    \delta &= f(\varphi_{\delta}) - f(\varphi^\star) \geq \frac{\overline{\sigma}_{1}^2 \vee \varphi_{\delta}}{T\varphi_{\delta}^2 }- \frac{\overline{\sigma}_{1}^2 \vee \varphi^\star}{T(\varphi^\star)^2 } \geq \frac{1}{T\varphi^\star } \bigg( 1- \frac{1}{1-\bar{\delta}} \bigg) \geq \frac{1}{2} \bar{\delta}.
\end{align*}
By combining the above results, in Case 1, we can conclude that $\bar{\delta} \leq 2\delta$.

\noindent
\textbf{Case 2}: Note that in this case, we have 
\begin{align}\label{ineq:pertu_case2_1}
    \frac{1}{T\varphi^\star} \vee \frac{\overline{\sigma}_{1}^2}{T(\varphi^\star)^2} \leq \frac{\overline{\sigma}_{1}^2 \vee \varphi^\star}{T(\varphi^\star)^2} \leq \frac{1}{2}.
\end{align}
On the other hand, if $\varphi^\star + \overline{\Delta}_{2}= \overline{\sigma}_{2}^2 \vee (\varphi^\star + \overline{\Delta}_{2})$, we can show that $1/(T\varphi^\star) \geq 1/T(\varphi^\star + \overline{\Delta}_{2}) \geq 1/2$. Thus, it must hold that $\varphi^\star = T/2$ and $\overline{\Delta}_{2} = 0$. This implies that when $\varphi^\star + \overline{\Delta}_{2}= \overline{\sigma}_{2}^2 \vee (\varphi^\star + \overline{\Delta}_{2})$,
\begin{align*}
    \delta = f(\varphi_{\delta}) - f(\varphi^\star) &\geq \frac{1}{T(\varphi_{\delta} + \overline{\Delta}_{2} )} - \frac{1}{T(\varphi^\star + \overline{\Delta}_{2})} \\
    &=\frac{1}{T\varphi^\star } \bigg( 1- \frac{1}{1-\bar{\delta}} \bigg) \geq \frac{1}{2} \bar{\delta}.
\end{align*}

For the case when $\overline{\sigma}_{2}^2= \overline{\sigma}_{2}^2 \vee (\varphi^\star + \overline{\Delta}_{2})$, using (\ref{ineq:pertu_case2_1}) and the fact that $\varphi^\star \geq T^{-1}$, we can deduce that 
\begin{align*}
    \frac{\overline{\sigma}_{2}^2}{T(\varphi^\star + \overline{\Delta}_{2})^2 } \geq \frac{1}{2} \geq \frac{\overline{\sigma}_{1}^2 \vee \varphi^\star}{T(\varphi^\star)^2} \geq \frac{\overline{\sigma}_{1}^2 \vee T^{-1}}{T(\varphi^\star)^2},
\end{align*}
which implies that $\varphi^\star/(\varphi^\star + \overline{\Delta}_{2}) \geq (\overline{\sigma}_{1} \vee T^{-1/2})/\overline{\sigma}_{2}$. Hence, it follows that 
\begin{align}\label{ineq:pertu_case2_2}
    \delta &= f(\varphi_{\delta}) - f(\varphi^\star) \geq \frac{\overline{\sigma}_{2}^2}{T(\varphi_{\delta} + \overline{\Delta}_{2})^2 } -  \frac{\overline{\sigma}_{2}^2}{T(\varphi^\star + \overline{\Delta}_{2})^2 } \\
    &=  \frac{\overline{\sigma}_{2}^2(\varphi^\star -\varphi_{\delta} ) }{T(\varphi^\star + \overline{\Delta}_{2})^2 } \cdot \bigg( \frac{\varphi_{\delta} + \varphi^\star + 2\overline{\Delta}_{2} }{(\varphi_{\delta} + \overline{\Delta}_{2})^2} \bigg) \geq \frac{\varphi^\star\bar{\delta} }{2(\varphi^\star + \overline{\Delta}_{2})} \geq \frac{\overline{\sigma}_{1}\vee T^{-1/2}}{2\overline{\sigma}_{2}}\bar{\delta}. \nonumber
\end{align}
This gives the bound corresponding to ${\overline{\sigma}_{2}}/(\overline{\sigma}_{1} \vee T^{-1/2} )$ in (\ref{eq:pertub}).

Next, we will derive the bound that corresponds to  $\abs{\overline{\sigma}_{2}^2/(T\overline{\Delta}_{2}^2) -1  }^{-1} $ in (\ref{eq:pertub}). The analysis will be further separated into two cases: (i) $1- \overline{\sigma}_{2}^2/(T\overline{\Delta}_{2}^2) \geq 0$ and (ii) $1- \overline{\sigma}_{2}^2/(T\overline{\Delta}_{2}^2) < 0$. 

For case (i), we can show that 
\begin{align*}
    \delta &= f(\varphi_{\delta}) - f(\varphi^\star) \geq \frac{\bar{\delta}}{1-\bar{\delta}} \cdot \frac{\overline{\sigma}_{1}^2\vee \varphi^\star }{T(\varphi^\star)^2 } +  \bar{\delta}\cdot  \frac{\overline{\sigma}_{2}^2 \varphi^\star}{T(\varphi^\star + \overline{\Delta}_{2})^3 }\\
    & \geq  \bigg( \frac{\overline{\sigma}_{1}^2\vee \varphi^\star }{T(\varphi^\star)^2 } +   \frac{\overline{\sigma}_{2}^2 }{T(\varphi^\star + \overline{\Delta}_{2})^2 } - \frac{\overline{\sigma}_{2}^2 \overline{\Delta}_{2}}{T(\varphi^\star + \overline{\Delta}_{2})^3 } \bigg)\cdot \bar{\delta} \geq \bigg(1 - \frac{\overline{\sigma}_{2}^2}{T\overline{\Delta}_{2}^2} \bigg)\cdot \bar{\delta}.
\end{align*}

For case (ii), let us recall from (\ref{ineq:pertu_case2_2}) that
\begin{align}\label{ineq:pertu_case2_3}
    \delta &\geq \frac{\varphi^\star\bar{\delta} }{2(\varphi^\star + \overline{\Delta}_{2})} 
    \geq \frac{\bar{\delta} }{6} \cdot \bm{1}_{\overline{\Delta}_{2}/\varphi^\star \leq 2}+  \frac{\varphi^\star\bar{\delta} }{2(\varphi^\star + \overline{\Delta}_{2})} \cdot\bm{1}_{\overline{\Delta}_{2}/\varphi^\star > 2}.
\end{align}
Note that when $\overline{\Delta}_{2}/\varphi^\star > 2$ (or equivalently $\varphi^\star/\overline{\Delta}_{2} < 1/2$), 
\begin{align*}
    &\frac{\overline{\sigma}_{2}^2}{T(\varphi^\star + \overline{\Delta}_{2})^2}\geq \frac{\overline{\sigma}_{2}^2}{T\overline{\Delta}_{2}^2}\bigg( 1 - 3\frac{\varphi^\star}{\overline{\Delta}_{2}} \bigg)\; \Rightarrow \;\bigg(\frac{\overline{\sigma}_{2}^2}{T\overline{\Delta}_{2}^2}- 1\bigg) \frac{1}{\varphi^\star} - \frac{3\overline{\sigma}_{2}^2}{T\overline{\Delta}_{2}^3}  \leq 0.
\end{align*}
Using the condition that $1- \overline{\sigma}_{2}^2/(T\overline{\Delta}_{2}^2) < 0$, we can derive that 
\begin{align*}
    (\varphi^\star)^{-1}\overline{\Delta}_{2} \leq \frac{3\overline{\sigma}_{2}^2}{T\overline{\Delta}_{2}^2} \cdot \bigg(\frac{\overline{\sigma}_{2}^2}{T \overline{\Delta}_{2}^2} -1 \bigg)^{-1}.
\end{align*}
As $\overline{\sigma}_{2}^2/(\varphi^\star + \overline{\Delta}_{2})^2 \leq T$ and $\varphi^\star/\overline{\Delta}_{2} < 1/2$, we can obtain that $\overline{\sigma}_{2}^2/(T\overline{\Delta}_{2}^2) \leq 9/4$, which then entails that  $(\varphi^\star)^{-1}\overline{\Delta}_{2} \leq 27/(2\overline{\sigma}_{2}^2/(T\overline{\Delta}_{2}^2) -2 )$.  Consequently, it holds that 
\begin{align*}
    \frac{\varphi^\star\bar{\delta} }{2(\varphi^\star + \overline{\Delta}_{2})} \cdot\bm{1}_{\overline{\Delta}_{2}/\varphi^\star > 2}  \geq  \frac{\bar{\delta}}{2 + 27 \Big(\frac{\overline{\sigma}_{2}^2}{T\overline{\Delta}_{2}^2} -1 \Big)^{-1} }\cdot\bm{1}_{\overline{\Delta}_{2}/\varphi^\star > 2}.
\end{align*}
Plugging this back to (\ref{ineq:pertu_case2_3}) yields that 
\begin{align*}
    \delta \geq \frac{\bar{\delta}}{6 + 27 \Big(\frac{\overline{\sigma}_{2}^2}{T\overline{\Delta}_{2}^2} -1 \Big)^{-1}}.
\end{align*}
By combining the above results, in Case 2, we can conclude that
\begin{align*}
    \bar{\delta} \leq 27\delta \cdot \bigg(1 +  \biggabs{\frac{\overline{\sigma}_{2}^2}{T\overline{\Delta}_{2}^2} -1 }^{-1} \wedge \frac{\overline{\sigma}_{2}}{\overline{\sigma}_{1} \vee T^{-1/2}} \bigg).
\end{align*}
The claim then follows by combining the estimates in Cases 1 and 2 above.
\end{proof}

\subsection{Proof of Theorem \ref{thm:stable}}\label{sec:proof of main}
\begin{proposition}\label{prop-varphi-n-ratio-convergence}
For any positive sequences $\{\sigma_T\}_{T \geq 1},\{n_T\}_{T \geq 1}$, and $\{\bar{n}_T\}_{T \geq 1}$, the following are equivalent:
\begin{enumerate}
    \item $\lim_{T\to \infty} \varphi(n_T;\sigma_T) /  \varphi(\bar{n}_T;\sigma_T) = 1$. 
    \item $\lim_{T\to\infty} n_T/\bar{n}_T = 1$.
\end{enumerate}
\end{proposition}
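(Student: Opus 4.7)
The plan is to reduce both implications to a clean one-variable sandwich bound that isolates the dependence on the ratio $n_T/\bar n_T$ from the dependence on $\sigma_T$. Writing the definition as $\varphi(n;\sigma)=\max\{\sigma/\sqrt n,\,1/n\}$, the identities $\sigma/\sqrt{\alpha n}=\alpha^{-1/2}\cdot\sigma/\sqrt n$ and $1/(\alpha n)=\alpha^{-1}\cdot 1/n$ together with the elementary fact $\min(c_1,c_2)\max(a,b)\le\max(c_1 a,c_2 b)\le\max(c_1,c_2)\max(a,b)$ yield, for every $\sigma,n>0$ and every $\alpha>0$,
\[
\min\{\alpha^{-1},\alpha^{-1/2}\}\,\varphi(n;\sigma)\;\le\;\varphi(\alpha n;\sigma)\;\le\;\max\{\alpha^{-1},\alpha^{-1/2}\}\,\varphi(n;\sigma).
\]
Crucially, the sandwich constants depend only on $\alpha$ and not on $\sigma$, so the bound is uniform in the $\sigma_T$ sequence.

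For the direction $(2)\Rightarrow(1)$, I would set $\alpha_T\equiv n_T/\bar n_T\to 1$ and apply the sandwich with $n=\bar n_T$, $\sigma=\sigma_T$. Both $\min\{\alpha_T^{-1},\alpha_T^{-1/2}\}$ and $\max\{\alpha_T^{-1},\alpha_T^{-1/2}\}$ tend to $1$, which pins $\varphi(n_T;\sigma_T)/\varphi(\bar n_T;\sigma_T)$ to $1$.

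For the direction $(1)\Rightarrow(2)$, I would argue by contradiction. If $\alpha_T=n_T/\bar n_T\not\to 1$, pass to a subsequence along which $\alpha_T\to\alpha\in[0,\infty]\setminus\{1\}$ (using compactness of $[0,\infty]$). If $\alpha\in(0,\infty)\setminus\{1\}$, both sides of the sandwich tend to the finite values $\min\{\alpha^{-1},\alpha^{-1/2}\}$ and $\max\{\alpha^{-1},\alpha^{-1/2}\}$, both different from $1$, so every subsequential limit of the $\varphi$-ratio differs from $1$, contradicting (1). If $\alpha=\infty$, the upper sandwich bound $\max\{\alpha_T^{-1},\alpha_T^{-1/2}\}\to 0$ forces the ratio to $0$; if $\alpha=0$, the lower bound $\min\{\alpha_T^{-1},\alpha_T^{-1/2}\}\to\infty$ forces the ratio to $\infty$. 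Both again contradict (1).

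There is no real obstacle: the entire argument reduces to the elementary homogeneity-style sandwich in Step~1, which just unpacks the piecewise definition of $\varphi$. The only small care needed is in the contradiction step, where the roles of $\alpha^{-1}$ and $\alpha^{-1/2}$ as minimum and maximum swap as $\alpha_T$ crosses $1$, and the endpoint cases $\alpha\in\{0,\infty\}$ must be treated separately — but in each case exactly one of the two bounds in the sandwich delivers the contradiction.
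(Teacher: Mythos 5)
Your proof is correct: the sandwich $\min\{\alpha^{-1},\alpha^{-1/2}\}\,\varphi(n;\sigma)\le\varphi(\alpha n;\sigma)\le\max\{\alpha^{-1},\alpha^{-1/2}\}\,\varphi(n;\sigma)$ follows exactly as you say from comparing the two branches of the maximum termwise, it is uniform in $\sigma$, and with $\alpha_T=n_T/\bar n_T$ it immediately gives $(2)\Rightarrow(1)$ and, via the subsequence argument on $[0,\infty]$, also $(1)\Rightarrow(2)$ (in the finite case $\alpha\neq 1$ both endpoints $\alpha^{-1}$ and $\alpha^{-1/2}$ lie strictly on the same side of $1$, so the ratio is indeed bounded away from $1$). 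The paper reaches the same conclusion by a closely related but structurally different route: it relies on the termwise max-ratio inequality $1\wedge(a/b)\le(a\vee c)/(b\vee c)\le 1\vee(a/b)$ and on the explicit inverse identity $n=(\sigma^2\vee\varphi(n;\sigma))/\varphi(n;\sigma)^2$, factoring each ratio of interest into two factors that each converge to $1$ --- for $(1)\Rightarrow(2)$ it writes $n_T/\bar n_T$ as a product involving $\varphi(n_T;\sigma_T)\vee\sigma_T^2$ over $\varphi(\bar n_T;\sigma_T)\vee\sigma_T^2$, and for $(2)\Rightarrow(1)$ it factors $\varphi(n_T;\sigma_T)/\varphi(\bar n_T;\sigma_T)$ through $\sigma_T\vee n_T^{-1/2}$; both directions are therefore direct, with no contradiction or compactness step. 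What your version buys is a single, quantitative, $\sigma$-free two-sided bound in the ratio variable $\alpha$ that serves both implications at once and would even yield a non-asymptotic statement; what the paper's version buys is a direct proof of the harder implication and the inverse identity $n=(\sigma^2\vee\varphi)/\varphi^2$, which it reuses elsewhere (e.g.\ in Proposition~\ref{prop-nat-to-varphi}). Both arguments are elementary and of comparable length.
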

\begin{proof} (1) $\Rightarrow$ (2). First, noticing that by the elementary inequality, 
\begin{align}\label{eq-max-ratio-inequality}
       1 \wedge \frac{a}{b} \leq \frac{a \vee c}{b\vee c } \leq 1 \vee \frac{a}{b},\quad \forall a,b,c>0,
    \end{align}
    we can deduce from (1) that $\lim_{T \to \infty}(\varphi(n_T;\sigma_T) \vee \sigma_T^2 )/( \varphi(\bar{n}_T;\sigma_T) \vee \sigma_T^2) = 1$. Using further the identity $n = (\sigma^2 \vee \varphi(n;\sigma)
 )/\varphi^2(n;\sigma)$ for any $n,\sigma > 0$, we have
\begin{align*}
 \lim_{T\to\infty} \frac{n_T}{\bar{n}_T} =  \bigg(\lim_{T\to\infty}  \frac{\varphi^2(n_T;\sigma_T)}{\varphi^2(\bar{n}_T;\sigma_T)}  \bigg) \cdot \bigg(\lim_{T\to\infty}  \frac{\varphi({n}_T;\sigma_T)\vee \sigma_T^2}{\varphi(\bar{n}_T;\sigma_T) \vee \sigma_T^2} \bigg) = 1.
\end{align*}

\noindent
(2) $\Rightarrow$ (1). In light of (\ref{eq-max-ratio-inequality}), we see that (2) implies $\lim_{T \to \infty}(\sigma_T \vee n_T^{-1/2})/(\sigma_T \vee \bar{n}_T^{-1/2} ) = 1$. Consequently, it holds that 
\begin{align*}
\lim_{T\to\infty} \frac{\varphi(n_T;\sigma_T) }{\varphi(\bar{n}_T;\sigma_T)} =  \bigg(\lim_{T\to\infty}  \sqrt{\frac{{\bar{n}_T}}{{n_T}}}  \bigg) \cdot \bigg( \lim_{T\to\infty} \frac{\sigma_T\vee n_T^{-1/2}}{\sigma_T\vee \bar{n}_T^{-1/2}} \bigg) = 1.
\end{align*}
The claim follows.
\end{proof}

We are now ready to prove Theorem \ref{thm:stable}.

\medskip

\begin{proof}[Proof of Theorem \ref{thm:stable}]
Let us select $\delta = 1/\log T$. In the proof, we will write $\mathsf{I}_\pm \equiv \mathsf{I}_{\pm}(\delta;\rho;T)$, $\mathcal{M} \equiv \mathcal{M}(\delta;\rho;T)$, and $\mathcal{V} \equiv \mathcal{V}_T(\delta;\rho;T)$ for simplicity. We also write $\varphi^\star \equiv \varphi(n_{1,T}^\star;\overline{\sigma}_{1} )$. From Proposition \ref{prop:mean_var_conc}, we have $\Prob (\mathcal{M}\cap \mathcal{V}) \geq 1 - 3K/\log T$.

The uniqueness of $n_{1,\star}$ follows directly from Lemma \ref{lem:perturb_lem}(1) and the fact that $\varphi(\cdot;\overline{\sigma}_{1})$ is bijective. 
It follows from Propositions \ref{prop-ratio-sandwich-inequality} and \ref{prop-nat-to-varphi} that on event $\mathcal{M}\cap \mathcal{V}$, 
\begin{align*}
    \frac{\mathsf{I}_-^2}{ \mathsf{I}_+^2 }  \cdot \frac{(n_{a,T}-1)(n_{1,T}-1)}{n_{a,T} n_{1,T}} &\leq   \frac{ \sqrt{n_{a,T}} \cdot( \overline{\Delta}_{a}+ \varphi_{1,T})}{\overline{\sigma}_{a} \vee (\varphi_{1,T} + \overline{\Delta}_{a} )^{1/2} } \leq  \frac{\mathsf{I}_+^2}{ \mathsf{I}_-^2 }  \cdot \frac{n_{a,T} n_{1,T}}{(n_{a,T}-1)(n_{1,T}-1)}.
\end{align*}
Applying Lemma \ref{lem-n1T-lb-rough} and using the facts that $\varphi_{1,T} = \mathfrak{o}(1)$ on  $\mathcal{M}\cap \mathcal{V}$ and $\overline{\Delta}_{a} = \mathfrak{o}(1)$, we can show that for any $a \in [K]$,  $\lim_{T \to \infty}n_{a,T} =+\infty$ on $\mathcal{M}\cap \mathcal{V}$. Hence, the inequality in the above display implies that for any $a \in [K]$,
\begin{align}\label{eq:naT_asym}
    n_{a,T} \cdot \frac{  (\varphi_{1,T} + \overline{\Delta}_{a})^2}{\overline{\sigma}_{a}^2 \vee (\varphi_{1,T} + \overline{\Delta}_{a} ) }  = 1 + \mathfrak{o}_{\mathbf{P}}(1).
\end{align}
Using further  $\sum_{a \in [K]} n_{a,T}/T = 1$, we can arrive at 
\begin{align}\label{eq:pertube_K}
    \sum_{a \in [K]} \frac{ \overline{\sigma}_{a}^2 \vee (\varphi_{1,T} + \overline{\Delta}_{a} ) }{T (\varphi_{1,T} + \overline{\Delta}_{a})^2}  = 1 + \mathfrak{o}_{\mathbf{P}}(1).
\end{align}
Let us consider the case when $K = 2$. By resorting to Lemma \ref{lem:perturb_lem}(2), we have that $\varphi_{1,T}/\varphi^\star= 1 + \mathfrak{o}_{\mathbf{P}}(1)$, which together with Proposition \ref{prop-varphi-n-ratio-convergence} implies that $n_{1,T}/n_{1,T}^\star = 1 + \mathfrak{o}_{\mathbf{P}}(1)$. 

For arm $2$, with (\ref{eq:naT_asym}) at hand, by Proposition \ref{prop-varphi-n-ratio-convergence} it suffices to show that
\begin{align*}
    \frac{\varphi_{1,T}}{\varphi^\star} = 1 + \mathfrak{o}_{\mathbf{P}}(1) \;\Rightarrow \;\frac{\overline{\sigma}_{2}^2 \vee (\varphi^\star + \overline{\Delta}_{2})}{\overline{\sigma}_{2}^2 \vee (\varphi_{1,T} + \overline{\Delta}_{2}) }\cdot \frac{(\varphi_{1,T} + \overline{\Delta}_{2})^{2}}{(\varphi^\star + \overline{\Delta}_{2})^{2}}  = 1+ \mathfrak{o}_{\mathbf{P}}(1),
\end{align*}
which is straightforward by (\ref{eq-max-ratio-inequality}).
\end{proof}

\subsection{Proof of Proposition \ref{prop:instable}}\label{sec:proof of unstable}
We need the following lemma to prove Proposition \ref{prop:instable}.

\begin{lemma}\label{lem:probability_est_UL}
Consider the Two-armed Bernoulli Bandit instance in Proposition \ref{prop:instable}. Recall that $T_i$ is the last time that arm $i$ was pulled. Then for sufficiently large $T$, there exists some constant $c_0 > 0$ such that 
    \begin{align*}
        &\Prob \bigg( \mathcal{U} \equiv \Big\{ \bar{X}_{2,T_1} > \frac{1}{2} + \frac{1}{\sqrt{n_{2,T_1}}}\Big\} \bigg) \wedge \Prob \bigg( \mathcal{L} \equiv \Big\{ \bar{X}_{2,T_2} < \frac{1}{2} - \frac{1}{\sqrt{n_{2,T_2}}}\Big\} \bigg)  \geq c_0.
    \end{align*}
\end{lemma}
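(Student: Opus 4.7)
The plan is to exploit the independence of arm~2's reward stream from the algorithm's other inputs, and then invoke a time-uniform anti-concentration argument based on Donsker's invariance principle.

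First, I would decouple the randomness. Since arm~1 is deterministic in this instance, all randomness comes from an i.i.d.\ $\mathrm{Bernoulli}(1/2)$ sequence $\{Y_i\}_{i \geq 1}$ whose $i$-th element is the $i$-th reward observed from arm~2. Setting $S_n \equiv \sum_{i=1}^n (Y_i - 1/2)$, one has $\bar{X}_{2,t} - 1/2 = S_{n_{2,t}}/n_{2,t}$, so the target events become $\mathcal{U} = \{S_{n_{2,T_1}} > \sqrt{n_{2,T_1}}\}$ and $\mathcal{L} = \{S_{n_{2,T_2}} < -\sqrt{n_{2,T_2}}\}$, both measurable with respect to $(Y_i)$.

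Second, I would localize the stopping counts. Using the high-probability implicit bound \eqref{ineq:hpb_apn} --- applied at the interior stopping times $T_1,T_2$ via the UCB-V inequalities that hold at those moments, and leveraging the fact that after $T_1$ only arm~2 is pulled (so $n_{2,T_1} = n_{2,T} - (T - T_1)$, with a mirrored identity for $T_2$) --- I would show that with probability at least $1 - \bigo(1/T)$ both $n_{2,T_1}$ and $n_{2,T_2}$ lie in a deterministic window $[N_-,N_+]$ of bounded aspect ratio $N_+/N_- = \bigo(1)$, sitting at the critical scale implied by $\Lambda_T = 1$.

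Third, for the anti-concentration step, I would apply Donsker's invariance principle. Picking any $N^\star \in [N_-, N_+]$, the rescaled walk $W_{N^\star}(t) \equiv 2 S_{\lfloor N^\star t\rfloor}/\sqrt{N^\star}$ converges weakly in $\mathcal{C}[N_-/N^\star, N_+/N^\star]$ to a standard Brownian motion $B$ on a compact interval bounded away from $0$. The Wiener-measure event $\{B(t) > 2\sqrt{t},\ \forall t\}$ is open in the uniform topology and carries strictly positive probability (by the reflection principle combined with Gaussian anti-concentration at the left endpoint). Weak convergence then yields $\liminf_{T \to \infty} \Prob(S_n > \sqrt{n},\ \forall n \in [N_-, N_+]) \geq c_1 > 0$, with a symmetric statement for the mirror event $\{S_n < -\sqrt{n},\ \forall n \in [N_-, N_+]\}$. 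Intersecting each trajectory event with the Step~2 confinement event forces $\mathcal{U}$ (resp.\ $\mathcal{L}$) pointwise, delivering $\Prob(\mathcal{U}) \wedge \Prob(\mathcal{L}) \geq c_1 - \bigo(1/T) \geq c_0$ for large $T$.

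The main obstacle is the second step. Proposition~\ref{prop-perturbed-equation} only directly bounds the arm-pull counts at the \emph{terminal} time $T$, whereas $T_1, T_2$ are interior random times whose fluctuations are themselves driven by the random walk $(S_n)$. One must re-derive a sandwich-type inequality at these stopping times while maintaining a \emph{bounded} aspect ratio $N_+/N_-$; if this ratio were allowed to grow with $T$, the compact-interval Donsker argument in Step~3 would collapse. It is precisely the critical-regime choice $\Delta^2 = \mu(1-\mu)\rho\log T / T$ that pins the window to a single scale, permitting the trade-off and turning the anti-concentration into a $T$-independent lower bound.
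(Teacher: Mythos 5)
Your proposal is correct and follows essentially the same route as the paper: the paper's proof likewise first confines $n_{2,T_1\wedge T_2}$ to a linear window $[cT,\,T]$ of constant aspect ratio (using the UCB-rule sandwich inequalities at the last-pull times together with the concentration events and Lemma~\ref{lem-n1T-lb-rough}), and then applies a Donsker-based, time-uniform anti-concentration bound (Lemma~\ref{lemma-anti-concentration}) over that window, intersecting the trajectory event with the confinement event exactly as you describe. The only cosmetic difference is that the paper obtains the window's upper endpoint from the trivial bound $n_{2,\cdot}\leq T$ rather than from a separate upper estimate, so the "bounded aspect ratio" you flag as the main obstacle comes for free once the linear lower bound $n_{2,T_1\wedge T_2}\geq cT$ is established.
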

\begin{proof}
(\textbf{Step 1}). We first prove the following: On event $\mathcal{M}_T\cap \mathcal{V}_T$, there exists some constant $0< c < 1$ such that 
\begin{align}\label{ineq:n_2_lowerbound}
    n_{2,T_1 \wedge T_2 } \geq c T.
\end{align}
To this end, recalling from (\ref{ineq:ucb_rule_T1}) that 
\begin{align*}
        \frac{\mathsf{I}_-}{2\sqrt{n_{2,T_1}}} \leq \frac{1}{2\sqrt{T}} + \frac{\mathsf{I}_+ }{n_{1,T_1}}
\end{align*}
holds on event $\mathcal{M}(1/\log T;\rho;T)\cap \mathcal{V}(1/\log T;\rho;T)$ and using Lemma \ref{lem-n1T-lb-rough} with the fact that $n_{1,T_1} = n_{1,T} - 1$, we can deduce that for sufficiently large $T$, 
\begin{align*}
    n_{2,T_1} \gtrsim  \bigg( \frac{1}{\sqrt{T}} + \frac{1 }{n_{1,T_1}} \bigg)^{-2} \gtrsim  T.
\end{align*}
Given that $n_{2,T_2} = n_{2,T} - 1$, a similar estimate applies to $n_{2,T_2}$ if we start from (\ref{ineq:ucb_rule_Ta}), which proves (\ref{ineq:n_2_lowerbound}).

\medskip
\noindent
(\textbf{Step 2}). Let $c$ be the constant in (\ref{ineq:n_2_lowerbound}).
It follows from Step 1 and Lemma \ref{lem-variance-lil} that for any $\epsilon > 0$, there exists some $T_0 = T_0(\epsilon) > 0$ such that for $T \geq T_0$, $\Prob(n_{2,T_1} \geq cT ) \geq 1 - \epsilon$. Using Lemma \ref{lemma-anti-concentration} and choosing $\epsilon = c_{\text{anti}}/2$, we can obtain that 
 \begin{align*}
    &\Prob(\mathcal{U})  \geq \Prob \bigg( \Big\{ \bar{X}_{2,T_1} > \frac{1}{2} + \frac{1}{\sqrt{n_{2,T_1}}}\Big\} \cap \{n_{2,T_1} \geq \lfloor cT \rfloor  \} \bigg) \\
    & \geq \Prob \Bigg( \bigg\{ \sum_{i \in [n] } X_i > \frac{n}{2}+ \sqrt{n} \text{ for all } n\geq \lfloor cT \rfloor \bigg\} \cap  \{n_{2,T_1} \geq \lfloor cT \rfloor  \}  \Bigg)\geq c_{\text{anti}} - \epsilon \geq \frac{c_{\text{anti}}}{2}.
\end{align*}
This concludes the proof for the probability estimate for $\mathcal{U}$. The estimate for $\Prob(\mathcal{L})$ follows a similar argument, so will be omitted here to avoid repetitive details.
\end{proof}

We are now ready to prove Proposition \ref{prop:instable}.

\medskip

\begin{proof}[Proof of Proposition \ref{prop:instable}]
(1). We will first prove the upper bound for $n_{1,T}$ on event $\mathcal{U}\cap \mathcal{M}(1/\log T;\rho;T)\cap \mathcal{V}(1/\log T;\rho;T)$.   Consider time $T_1$ that arm $1$ was last pulled. Using (i) the definition of $\mathcal{U}$ and (ii) the UCB rule, we can show that on event $\mathcal{U}\cap \mathcal{M}(1/\log T;\rho;T)\cap \mathcal{V}(1/\log T;\rho;T)$,
    \begin{align*}
    \mu + \frac{1}{\sqrt{n_{2,T_1}}}+{\varphi}_{2,T_1} \cdot \rho \log T \overset{\mathrm{(i)}}{\leq} \bar{X}_{2,T_1} + {\varphi}_{2,T_1} \cdot \rho\log T\overset{\mathrm{(ii)}}{\leq} \mu + \Delta +   \frac{{\rho \log T}}{n_{1,T_1}}.
    \end{align*}
    Rearranging the terms, it holds on event $\mathcal{U}\cap \mathcal{M}(1/\log T;\rho;T)\cap \mathcal{V}(1/\log T;\rho;T)$ that 
    \begin{align*}
        {\varphi}_{2,T_1} \cdot \rho \log T  \leq  \Delta - \frac{1}{\sqrt{n_{2,T_1}}} + \frac{{\rho\log T}}{n_{1,T_1}}.
    \end{align*}
    By the facts that $\sigma_{2}^2 =\mu (1-\mu)$, $\Delta^2 = \mu (1-\mu)\rho \log T/T$, $n_{2,T} \geq n_{2,T_1}$, $n_{1,T_1} = n_{1,T_1} -1$, and $n_{1,T}+n_{2,T} = T$, we can arrive at
    \begin{align*}
         \frac{1}{n_{1,T}-1}  -  \bigg(1+ \frac{1}{\rho\overline{\sigma}_{2} {\log T}}\bigg)\cdot  \frac{\overline{\sigma}_{2}}{\sqrt{T-n_{1,T}}} \geq -\frac{\overline{\sigma}_{2}}{\sqrt{T}}.
    \end{align*}
    Further using $1/ \sqrt{T-n_{1,T}} \geq (1+ n_{1,T}/(2T))/\sqrt{T}$, the above inequality becomes
    \begin{align*}
            \frac{1}{n_{1,T} - 1} - \bigg(1+ \frac{1}{\rho\overline{\sigma}_{2} {\log T}}\bigg)\cdot \frac{\overline{\sigma}_{2} n_{1,T}}{2T^{3/2}} \geq   \frac{1}{\rho\overline{\sigma}_{2} {\log T}}\cdot \frac{\overline{\sigma}_{2}}{\sqrt{T}}.
    \end{align*}
    Note that the above inequality is quadratic, we can then solve it to obtain that 
    \begin{align*}
        n_{1,T} \leq \frac{\sqrt{(\overline{B}-\overline{A})^2 + 4\overline{A}(\overline{B}+1)} - (\overline{B}-\overline{A})}{2\overline{A}},
    \end{align*}
    where for sufficiently large $T$,
    \begin{align*}
        \overline{A} &= \bigg(1 + \frac{1}{\rho\overline{\sigma}_{2} {\log T}}\bigg)\cdot \frac{\overline{\sigma}_{2}}{2T^{3/2}} \asymp \frac{1}{T\sqrt{T\log T}} ,\\
        {\overline{B}} &=  { \frac{1}{\rho\overline{\sigma}_{2} {\log T}}}\cdot \frac{\overline{\sigma}_{2}}{\sqrt{T}}\asymp \frac{1}{\sqrt{T} \log T}.
    \end{align*}
    Thus, we can conclude on event $\mathcal{U}\cap \mathcal{M}(1/\log T;\rho;T)\cap \mathcal{V}(1/\log T;\rho;T)$ that 
    \begin{align*}
         n_{1,T} \leq  \frac{2(\overline{B}+1)}{\sqrt{(\overline{B}-\overline{A})^2 + 4\overline{A}(\overline{B}+1)} + (\overline{B}-\overline{A})} \leq \frac{2({\overline{B}}+1)}{{\overline{B}-\overline{A}}} \lesssim \sqrt{T} \log T.
    \end{align*}

    \noindent
    (2). The lower bound for $n_{1,T}$ on event $\mathcal{L} \cap \mathcal{M}(1/\log T;\rho;T)\cap \mathcal{V}(1/\log T;\rho;T)$ can be established using a similar strategy. Let us consider time $T_2$ that arm $2$ was last pulled. Using (i) the definition of $\mathcal{L}$ and (ii) the UCB rule, we have that on event $\mathcal{L} \cap \mathcal{M}(1/\log T;\rho;T)\cap \mathcal{V}(1/\log T;\rho;T)$,
     \begin{align*}
    \mu - \frac{1}{\sqrt{n_{2,T_2}}}+ \varphi_{2,T_2}\cdot \rho \log T   \overset{\mathrm{(i)}}{\geq} \bar{X}_{2,T_2} + {\varphi}_{2,T_2}\cdot \rho \log T \overset{\mathrm{(ii)}}{\geq} \mu + \Delta +   \frac{{\rho \log T}}{n_{1,T_2}}.
    \end{align*}
    Rearranging the terms yields that on event $\mathcal{L} \cap \mathcal{M}(1/\log T;\rho;T)\cap \mathcal{V}(1/\log T;\rho;T)$,
    \begin{align*}
        \varphi_{2,T_2}\cdot \rho \log T  \geq \Delta + \frac{1}{\sqrt{n_{2,T_2}}} + \frac{{\rho\log T}}{n_{1,T_2}}.
    \end{align*}
    It follows from the facts that $\overline{\Delta}_{2}^2 = \overline{\sigma}_{2}^2 / T$, $n_{1,T} \geq n_{1,T_2}$, $n_{2,T_2} = n_{2,T} -1$, and $n_{1,T}+n_{2,T} = T$ that 
    \begin{align*}
            \frac{1}{n_{1,T}} - \bigg(1  - \frac{1}{{\rho\overline{\sigma}_{2}\log T}}  \bigg)\cdot \frac{\overline{\sigma}_{2}}{\sqrt{T-n_{1,T} - 1}} \leq -\frac{\overline{\sigma}_{2}}{\sqrt{ T}} .
    \end{align*}
    Observe that on event $\mathcal{L} \cap \mathcal{M}(1/\log T;\rho;T)\cap \mathcal{V}(1/\log T;\rho;T)$, it holds that $c_1\sqrt{T} \leq  n_{1,T} +1 \leq c_2T$ for some constants $c_1 > 0$ and $0 < c_2 < 1$, cf. Lemme \ref{lem-n1T-lb-rough} and (\ref{ineq:n_2_lowerbound}). Then we can derive that for sufficiently large $T$,
    \begin{align}\label{ineq:lowerbound_on_L}
        \frac{1}{n_{1,T}} -\bigg(1 - \frac{1}{{\rho\overline{\sigma}_{2}\log T}}  \bigg)\cdot\frac{\overline{\sigma}_{2}}{\sqrt{T}}\cdot \bigg(1 + \frac{2n_{1,T}}{(1-c_2)T} \bigg)\leq -\frac{\overline{\sigma}_{2}}{\sqrt{ T}}, 
    \end{align}
    where we have also used $1/ \sqrt{1-(n_{1,T}+1)/T} \leq 1 + 2n_{1,T}/(1-c_2)T$. Therefore, with 
    \begin{align}
        \underline{A} &= \frac{2\overline{\sigma}_{2}}{(1-c_2)T^{3/2}}\bigg(1  - \frac{1}{{\rho\overline{\sigma}_{2}\log T}}  \bigg) \asymp \frac{1}{T\sqrt{T\log T}},\label{eq:underA}\\
        \underline{B} &=\frac{1}{{\rho\overline{\sigma}_{2}\log T}}\cdot \frac{\overline{\sigma}_{2}}{\sqrt{T}} \asymp \frac{1}{\sqrt{T} \log T},\label{eq:underB}
    \end{align}
    we can solve the inequality (\ref{ineq:lowerbound_on_L}) to obtain that 
    \begin{align*}
        n_{1,T} \geq \frac{\underline{B} + \sqrt{\underline{B}^2 + 4\underline{A} } }{2 \underline{A}} \geq 
        \frac{\underline{B}}{2\underline{A}} \gtrsim \frac{T}{\sqrt{\log T }}.
    \end{align*}
    The claim now follows from Lemma \ref{lem:probability_est_UL}.
\end{proof}

\section{Proofs for Section~\ref{sec:regret}}\label{sec:proof sec3}
We would like to emphasize that, although the statements in Section \ref{sec:regret} are presented for the $2$-armed setting, the proof in this section applies to the $K$-armed setting as well.

\subsection{Proof of Proposition~\ref{prop-n2T-non-asymptotic}} 
We will work on the following events
\begin{align*}
        \tilde{\mathcal M}(\rho;T) \equiv \bigg\{ \lvert \bar{X}_{a,t} - \mu_a \rvert &\leq \varphi_{a,t} \cdot \frac{\rho}{4} \log T,\quad \forall t \in [T], \forall a \in [K] \bigg\},\\
        \tilde{\mathcal V}(\rho;T)  \equiv \bigg\{ \lvert \hat{\sigma}_{a}^2 - \sigma_a^2 \rvert &\leq \varphi_{a,t} \cdot \frac{\rho}{4} \log T,\quad  \forall t \in [T], \forall a \in [K] \bigg\}.
\end{align*}

The following lemma provides a probability estimate for event $\tilde{M}_T \cap \tilde{V}_T$ when $\rho$ is large.

\begin{lemma}\label{lem:high-probability-mean-variance}
There exists some universal constant $C_0 > 0$ such that for $\rho \geq C_0$ and $T\geq 3$, 
    \begin{align*}
        \Prob\big(\tilde{M}(\rho;T)  \cap \tilde{V}(\rho;T) \big)  \geq 1- 3K/T^2.
    \end{align*}
\end{lemma}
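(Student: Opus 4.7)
The plan is to derive Lemma~\ref{lem:high-probability-mean-variance} as a direct corollary of the earlier Proposition~\ref{prop:mean_var_conc}, which already guarantees $\Prob(\mathcal{M}(\delta;\rho;T) \cap \mathcal{V}(\delta;\rho;T)) \geq 1 - 3K\delta$ with deviations controlled by $\mathsf{e}(\delta;\rho;T) \equiv \sqrt{48 \rho \log T \log(\log(T/\delta)/\delta)} + 128\log(1/\delta)$. The natural choice is $\delta = 1/T^2$, which immediately matches the target probability $3K/T^2$. It then remains to show that, for $\rho$ larger than some universal constant $C_0$, the deviation satisfies $\mathsf{e}(1/T^2;\rho;T) \leq (\rho/4)\log T$; once this is in hand, the containments $\mathcal{M}(1/T^2;\rho;T) \subseteq \tilde{\mathcal{M}}(\rho;T)$ and $\mathcal{V}(1/T^2;\rho;T) \subseteq \tilde{\mathcal{V}}(\rho;T)$ are immediate from the definitions of these events.

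For the deviation estimate, substituting $\delta = 1/T^2$ gives $\log(T/\delta) = 3\log T$, and for $T \geq 3$ one has the crude bound $\log(3T^2 \log T) \leq \log(3T^3) \leq 4\log T$. Hence
\begin{align*}
    \mathsf{e}(1/T^2;\rho;T) \leq \sqrt{48\rho \log T \cdot 4\log T} + 128 \cdot 2\log T = \big(\sqrt{192\rho} + 256\big)\log T.
\end{align*}
Choosing $C_0$ large enough that $\sqrt{192\rho} \leq \rho/8$ and $256 \leq \rho/8$ simultaneously hold for all $\rho \geq C_0$, the right-hand side is at most $(\rho/4)\log T$, which completes the reduction and yields the claim.

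There is essentially no serious obstacle here: all the probabilistic content---the Bernstein-type time-uniform concentration for the empirical means and the associated variance concentration from Lemma~\ref{lem-variance-lil}, together with the union bound over arms and a peeling argument over sample sizes---is already packaged in Proposition~\ref{prop:mean_var_conc}. The only work is the routine constant tracking above to rewrite $\mathsf{e}(1/T^2;\rho;T)$ in the cleaner scaling $(\rho/4)\log T$ that is convenient for the high-probability regret analysis of Section~\ref{sec:regret}.
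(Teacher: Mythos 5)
Your proposal is correct and is essentially identical to the paper's own proof: the paper likewise sets $\delta = 1/T^2$ in \eqref{eq-mean-concentration}--\eqref{eq-var-concentration} and picks $C_0$ large enough that $\mathsf{e}(1/T^2;\rho;T) \leq (\rho/4)\log T$ (the paper states the condition as $\sqrt{96 C_0}+256 \leq C_0/4$, while your slightly more conservative bound $\log(3T^2\log T)\le 4\log T$ gives $\sqrt{192\rho}+256\le \rho/4$, which still holds for a universal constant). No gaps.
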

\begin{proof}
By selecting $\delta = 1/T^2$ in equations \eqref{eq-mean-concentration} and \eqref{eq-var-concentration}, we achieve the desired result, provided that $\rho\geq C_0$ with $C_0$ sufficiently large to satisfy $\sqrt{96 C_0} + 256\leq C_0/4$.
\end{proof}

\begin{proof}[Proof of Proposition~\ref{prop-n2T-non-asymptotic}]
First, in view of the proof for Lemma~\ref{lem:varphi_est}, it holds that on event $\tilde{\mathcal M}_T \cap \tilde{\mathcal V}_T$,
\begin{align*}
    \lvert \hat{\varphi}_{a,t}- \varphi_{a,t}\rvert  \leq \varphi_{a,t}/2, \quad \forall a\in [K], \;\forall 1\leq t \leq T.
\end{align*}
At time-step $T_a$, by the UCB rule of $ \bar{X}_{a,T_a} + \hat{\varphi}_{a,T_a} \cdot \rho \log T \geq \bar{X}_{1,T_a} + \hat{\varphi}_{1,T_a} \cdot \rho \log T$, we have that on event $\tilde{\mathcal{M}}_T \cap \tilde{\mathcal{V}}_T$,
\begin{align*}
 \varphi_{a,T} \cdot \frac{\rho \log T}{4} +\frac{3}{2}\varphi_{a,T_a}  \cdot \rho \log T\geq \Delta_{a} + \frac{1}{2}\varphi_{1,T} \cdot \rho \log T - \frac{\rho \log T}{4} \varphi_{1,T} \cdot \rho\log T.
\end{align*}
Dividing $\rho \log T$ on both sides and rearranging the inequality, we can deduce that 
\begin{align*}
   8 \geq \frac{\overline{\Delta}_{a}+ \varphi_{1,T}}{\varphi_{a,T_a}} \geq  \frac{\overline{\Delta}_{a}+ \varphi_{1,T}}{\varphi_{a,T}} \cdot \frac{n_{a,T} - 1}{ n_{a,T}},
\end{align*}
which then implies either $n_{a,T} \leq 1$ or $ \varphi_{a,T} \geq (\overline{\Delta}_{a} + \varphi_{1,T})/16$.
Therefore, we have
\begin{align*}
    n_{a,T} \leq \frac{16}{\overline{\Delta}_{a} + \varphi_{1,T}} \vee \bigg(\frac{16\sigma_2}{\overline{\Delta}_{a} + \varphi_{1,T}}\bigg)^2.
\end{align*}
The claim now follows from Lemma \ref{lem:high-probability-mean-variance}.
\end{proof}

\subsection{Proof of Theorem \ref{thm-refined-regret}}\label{sec: proof of regret_2}
\begin{proof}[Proof of Theorem \ref{thm-refined-regret}]
Observe that $$\mathrm{Reg}(T) = \sum_{a \in \mathcal{A}_1} n_{a,T} \Delta_{a} +  \sum_{a \in \mathcal{A}_2} n_{a,T} \Delta_{a}$$ with 
   $\mathcal{A}_1 \equiv \{a\in [2:K]: 16\overline{\sigma}_{a}^2 > \overline{\Delta}_{a} + \varphi_{1,T}\}$ and $\mathcal{A}_2 \equiv [2:K] \setminus \mathcal{A}_1$.
It follows from Proposition \ref{prop-n2T-non-asymptotic} that 
\begin{align*}
   \Prob \bigg( \mathcal{E}_T \equiv \bigg\{ n_{2,T} \leq \frac{16}{\overline{\Delta}_{2} + \varphi(n_{1,T};\overline{\sigma}_{1}) } \vee \bigg(\frac{16\sigma_2}{\overline{\Delta}_{2} + \varphi(n_{1,T};\overline{\sigma}_{1}) } \bigg)^2 \bigg\} \bigg) \geq 1 - \frac{4}{T^2}.
\end{align*}
Then for the summation on $\mathcal{A}_2$, we can derive on event $\mathcal{E}_T$ that 
    \begin{equation}\label{eq-appendix-C2-tmp0}
        \sum_{a \in \mathcal{A}_2} n_{a,T} \Delta_{a} \leq \sum_{a \in \mathcal{A}_2} \frac{\Delta_{a}}{\varphi_{1,T} + \overline{\Delta}_{a}} \leq (K-1) \sqrt{\rho \log T}.
    \end{equation}
    For the summation on $\mathcal{A}_1$, we have that, on one hand,
    \begin{equation}\label{eq-appendix-C2-tmp1}    
    \begin{aligned}
        \sum_{a \in \mathcal{A}_1} n_{a,T} \Delta_{a} &\leq  \sqrt{ \sum_{a \in \mathcal{A}_1} n_{a,T}} \sqrt{ \sum_{a \in \mathcal{A}_1} n_{a,T} \Delta_a^2} \\
        &\leq 16\sqrt{T} \cdot \sqrt{\sum_{a \in \mathcal{A}_1} \frac{ \overline{\sigma}_{a}^2 }{(\overline{\Delta}_{a} + \varphi_{1,T})^2} \Delta^2_a}\leq 16 \sqrt{\sum_{a \in \mathcal{A}_1} \sigma_{a}^2 \cdot \rho T \log T},
    \end{aligned}
    \end{equation}
    and on the other hand,
    \begin{equation}\label{eq-appendix-C2-tmp2}
    \begin{aligned}
        \sum_{a \in \mathcal{A}_1} n_{a,T} \Delta_{a} &\leq  \sum_{a \in \mathcal{A}_1} \frac{256 \overline{\sigma}_{a}^2 \Delta_{a}}{(\overline{\Delta}_{a} + \varphi_{1,T})^2}\leq   \sum_{a \in \mathcal{A}_1}\frac{ 256\overline{\sigma}_{a}^2 \Delta_{a}}{\varphi_{1,T}^2}\wedge  \frac{\overline{\sigma}_{a}^2 \Delta_{a}}{\overline{\Delta}_{a}^2}\\
        &\leq   256\sum_{a \in \mathcal{A}_1}\frac{ \sigma_{a}^2 \Delta_{a} T}{\sigma_1^2}   \wedge  \bigg(\frac{\sigma_{a}^2}{\Delta_{a}} \cdot \rho\log T \bigg)\leq 256\sum_{a \in \mathcal{A}_1}\frac{\sigma_{a}^2}{ \sigma_1} \sqrt{ \rho T \log T}.
    \end{aligned}
    \end{equation}
    Combining \eqref{eq-appendix-C2-tmp0}--\eqref{eq-appendix-C2-tmp2} above concludes the proof.
\end{proof}
\begin{remark}
Using the high-probability arm-pulling bound from \citet{audibert2009exploration} 
\begin{align*}
        n_{a,T} \lesssim \frac{\sigma_a^2}{\Delta_{a}^2} + \frac{1}{\Delta_a},
    \end{align*}
   inequalities \eqref{eq-appendix-C2-tmp0} and \eqref{eq-appendix-C2-tmp1} remain valid when the definitions of $\mathcal{A}_1$ and $\mathcal{A}_2$ are replaced with
 \begin{align*}
        \mathcal{A}_1' \equiv \{a\in [2:K]: {\sigma}_{a}^2 \gtrsim {\Delta}_{a} \} \ \text{ and } \ \mathcal{A}_2' \equiv [2:K] \setminus \mathcal{A}_1',
    \end{align*}
   respectively. This adjustment leads to a regret bound of ${\mathcal{O}}(\sqrt{\sum_{a\neq 1} \sigma_a^2 T\log T})$.
\end{remark}

\medskip
\section{Proofs for Section \ref{app:Karm}}
 With slight abuse of notation, we define function $f(x)$ for any $x \in \R_{\geq 0}$ as 
\begin{align*}
    f(x) \equiv  \sum_{a \in [K]}\frac{\overline{\sigma}_{a}^2 \vee (x + \overline{\Delta}_{a} )}{T(x + \overline{\Delta}_{a})^2}.
\end{align*}
Let us consider the following fixed-point equation in $\varphi$ 
\begin{align}\label{eq:fpe_K}
    f(\varphi) = \sum_{a \in [K]}\frac{\overline{\sigma}_{a}^2 \vee (\varphi + \overline{\Delta}_{a} )}{T(\varphi + \overline{\Delta}_{a})^2} = 1.
\end{align}
The following lemma extends Lemma \ref{lem:perturb_lem} to the $K$-armed setting.

\begin{lemma}\label{lem:perturb_lem_K}
    It holds that: 
\begin{enumerate}
    \item The fixed-point equation (\ref{eq:fpe_K}) admits a unique solution $\varphi^\star \in (1/T,1)$ for all $T \in \mathbb{N}_+$.
    \item Assume that there exist some $\delta \in (-1/2,1/2)$ and $\varphi_{\delta}$ such that $f(\varphi_{\delta}) = 1 + \delta$. Then there exists some constant $c = c(K) > 0$ such that 
    \begin{align}\label{eq:pertub_K}
         \biggabs{\frac{\varphi_{\delta}}{\varphi^\star} -1} \leq  c\abs{\delta}\cdot \max_{a \in [K]}\bigg\{1 +  \bigg(\frac{1}{K-1}- \frac{\overline{\sigma}_{a}^2}{T\overline{\Delta}_{a}^2} \bigg)_+^{-1 } \wedge \bigg(\frac{\overline{\sigma}_{a}^2}{T\overline{\Delta}_{a}^2} - 1 \bigg)_+^{-1 }\wedge \frac{\overline{\sigma}_{a}}{\overline{\sigma}_{1} \vee T^{-1/2}}  \bigg\} .
    \end{align}
\end{enumerate}
\end{lemma}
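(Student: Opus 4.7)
The plan is to extend the proof of Lemma~\ref{lem:perturb_lem} to the $K$-armed setting by a parallel case analysis, introducing two structural refinements that match the new right-hand side of \eqref{eq:pertub_K}. For part (1), I verify that $f$ is continuous and strictly monotone decreasing on $(0,\infty)$ (each summand is, and the property passes through the sum), and evaluate it at the endpoints: $f(1/T) \geq T(\overline{\sigma}_1^2 \vee T^{-1}) \geq 1$ using $\overline{\Delta}_1 = 0$, while $f(1) \leq 2K/T < 1$ for $T$ large enough, so the intermediate value theorem yields a unique root $\varphi^\star \in (1/T,1)$.

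For part (2), I focus on $\delta > 0$ (the case $\delta < 0$ is symmetric) and write $\varphi_\delta = (1-\bar{\delta})\varphi^\star$ with $\bar{\delta} \in [0,1)$. Because $f(\varphi^\star) = 1$ is a sum of $K$ nonnegative summands, at least one of them (call its index $a^\star$) must be bounded below by $1/K$. Since $a^\star$ depends on the instance, I derive, for each $a \in [K]$ separately, a bound of the form $\bar{\delta} \leq c(K)|\delta|(1 + \mathrm{expr}(a))$ under the hypothesis that arm $a$ contributes at least $1/K$ to $f(\varphi^\star)$, and then take the maximum over $a$ to obtain a bound valid regardless of which arm turns out to be $a^\star$.

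For the $a = 1$ sub-case, I replicate Case~1 of the two-armed proof with $1/2$ replaced by $1/K$: starting from the monotonicity inequality $\delta \geq \mathrm{term}_1(\varphi_\delta) - \mathrm{term}_1(\varphi^\star)$ and splitting on whether $\overline{\sigma}_1^2$ or $\varphi^\star$ wins the $\vee$ in the numerator, one obtains $\delta \geq c(K)\bar{\delta}$ with no additional factor, matching the summand ``$1$'' inside the $\max$ in \eqref{eq:pertub_K}. For the $a \in [2:K]$ sub-case, I emulate Case~2: the direction in which $\varphi^\star + \overline{\Delta}_a$ wins the $\vee$ yields a clean $\delta \geq c(K)\bar{\delta}$, while the direction in which $\overline{\sigma}_a^2$ wins yields, via the same differentiation-in-$\varphi$ manipulation as in Subcase~(ii) of the two-armed proof, $\delta \geq \varphi^\star \bar{\delta}/(2(\varphi^\star + \overline{\Delta}_a))$; bounding the ratio $\varphi^\star/(\varphi^\star + \overline{\Delta}_a)$ from below through comparison with the arm-$1$ summand then recovers the $\overline{\sigma}_a/(\overline{\sigma}_1 \vee T^{-1/2})$ factor.

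The principal technical obstacle is the lower threshold $1/(K-1)$ that appears inside $(1/(K-1) - \overline{\sigma}_a^2/(T\overline{\Delta}_a^2))_+^{-1}$, which has no counterpart in the two-armed case (where it collapses to the upper threshold $1$). It arises because the two-armed bound ``each suboptimal summand is at most $1$''---which the two-armed proof uses to pass between $\overline{\sigma}_a^2/(T(\varphi^\star + \overline{\Delta}_a)^2)$ and $\overline{\sigma}_a^2/(T\overline{\Delta}_a^2)$---must be replaced by the weaker averaged bound $1/(K-1)$, reflecting the fact that if no suboptimal arm is dominant at level $1/K$, the tightest upper bound on a given suboptimal summand obtainable from the constraint $\sum_{a \in [2:K]}\mathrm{term}_a(\varphi^\star) \leq 1$ alone is $1/(K-1)$. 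Consequently, the critical point $\overline{\sigma}_a^2/(T\overline{\Delta}_a^2) = 1$ in the two-armed quadratic-expansion argument broadens into a degenerate band $[1/(K-1), 1]$. I would run the quadratic-expansion argument of the two-armed proof arm-by-arm for $a \in [2:K]$, producing a per-arm bound of the form $\delta \geq c(K)\bar{\delta} / \bigl(1 + (1/(K-1) - \overline{\sigma}_a^2/(T\overline{\Delta}_a^2))_+^{-1} \wedge (\overline{\sigma}_a^2/(T\overline{\Delta}_a^2) - 1)_+^{-1}\bigr)$, whose maximum over $a$ combines with the ratio estimate above to yield \eqref{eq:pertub_K}.
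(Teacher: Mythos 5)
Your proposal is correct and follows essentially the same route as the paper's proof: the same monotonicity-plus-endpoint argument for existence and uniqueness, and the same case analysis for the perturbation bound (a dominant-summand dichotomy at level $1/K$, a sub-split on which term wins each $\vee$, the comparison with the arm-$1$ summand to extract the $\overline{\sigma}_a/(\overline{\sigma}_1\vee T^{-1/2})$ factor, and the quadratic-expansion argument split on the sign conditions for $\overline{\sigma}_a^2/(T\overline{\Delta}_a^2)$, followed by a maximum over $a$). Your diagnosis of where the new $\bigl(\tfrac{1}{K-1}-\overline{\sigma}_a^2/(T\overline{\Delta}_a^2)\bigr)_+^{-1}$ threshold comes from matches the paper's mechanism of lower-bounding the optimal-arm summand by $1-(K-1)\max_{a\geq 2}(\cdot)$.
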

\begin{proof}
    The proof is almost the same as that of Lemma \ref{lem:perturb_lem}. We include the proof here for the convenience of readers.

\noindent
      (1). First we prove the existence and uniqueness of $\varphi^\star$. Note that $f(\varphi)$ is continuously monotone decreasing on $(0,\infty)$. On the other hand, as 
  \begin{align*}
      &f(1/T) = \frac{\sigma_1^2 \vee T^{-1}}{T^{-1} } + \frac{\sigma_2^2 \vee (T^{-1} + \overline{\Delta}_{a} )}{T(T^{-1} + \overline{\Delta}_{a})^2} > \frac{\sigma_1^2 \vee T^{-1}}{T^{-1} } \geq 1
\end{align*}
and
\begin{align*}
      f(1) = \sum_{a \in [K]}\frac{\overline{\sigma}_{a}^2 \vee (1 + \overline{\Delta}_{a} )}{T(1 + \overline{\Delta}_{a})^2} \leq \frac{K}{T} < 1,
\end{align*}
there must exist a unique $\varphi^\star \in (1/T,1)$ such that $f(\varphi^\star) = 1$, proving the claim. 

\noindent (2). 
We only show the proof for the case of $\delta > 0$, while the case of $\delta < 0$ can be handled similarly. It follows from the monotonicity that  $\varphi_{\delta} \leq  \varphi^\star$. Let $\varphi_{\delta}/\varphi^\star = 1 - \bar{\delta}$ for some $\bar{\delta} = \bar{\delta}(\delta) \geq 0$. Our aim is to derive an upper bound for $\bar{\delta}$. 

It follows from (\ref{eq:fpe_K}) that at least one of the following holds:
\begin{align*}
    \textbf{Case 1}: \frac{\overline{\sigma}_{1}^2 \vee \varphi^\star}{T(\varphi^\star)^2 } \geq \frac{1}{K}, \quad  \textbf{Case 2}: \exists a \in [K] \setminus \{ 1\}, \frac{\overline{\sigma}_{a}^2 \vee (\varphi^\star + \overline{\Delta}_{a})}{T(\varphi^\star + \overline{\Delta}_{a})^2 } \geq \frac{1}{K}.
\end{align*}

\noindent
\textbf{Case 1}: If $\overline{\sigma}_{1}^2 = \overline{\sigma}_{1}^2 \vee \varphi^\star$, we have $\overline{\sigma}_{1}^2 = \overline{\sigma}_{1}^2 \vee \varphi_{\delta}$. Then it holds that 
\begin{align*}
   \delta &= f(\varphi_{\delta}) - f(\varphi^\star) \geq   \frac{\overline{\sigma}_{1}^2 \vee \varphi_{\delta}}{T\varphi_{\delta}^2 }- \frac{\overline{\sigma}_{1}^2 \vee \varphi^\star}{T(\varphi^\star)^2 }= \frac{\overline{\sigma}_{1}^2 }{T(\varphi^\star)^2 } \bigg(  \frac{1-(1-\bar{\delta})^2}{(1-\bar{\delta})^2} \bigg) \geq \frac{1}{K} \bar{\delta}.
\end{align*}
For the case when  $\varphi^\star = \overline{\sigma}_{1}^2 \vee \varphi^\star$, we can compute that
\begin{align*}
    \delta &= f(\varphi_{\delta}) - f(\varphi^\star) \geq \frac{\overline{\sigma}_{1}^2 \vee \varphi_{\delta}}{T\varphi_{\delta}^2 }- \frac{\overline{\sigma}_{1}^2 \vee \varphi^\star}{T(\varphi^\star)^2 } \geq \frac{1}{T\varphi^\star } \bigg( 1- \frac{1}{1-\bar{\delta}} \bigg) \geq \frac{1}{K} \bar{\delta}.
\end{align*}
By combining the above results, in Case 1, we can conclude that $\bar{\delta} \leq K\delta$.

\medskip
\noindent
\textbf{Case 2}: Note that in this case, we have 
\begin{align}\label{ineq:pertu_case2_1_K}
  \frac{1}{T\varphi^\star}  \leq  \frac{1}{T\varphi^\star} \vee \frac{\overline{\sigma}_{1}^2}{T(\varphi^\star)^2} \leq \frac{\overline{\sigma}_{1}^2 \vee \varphi^\star}{T(\varphi^\star)^2} \leq \frac{K-1}{K}.
\end{align}
This implies that when $\varphi^\star + \overline{\Delta}_{a}= \overline{\sigma}_{a}^2 \vee (\varphi^\star + \overline{\Delta}_{a})$,
\begin{align*}
    \delta = f(\varphi_{\delta}) - f(\varphi^\star) &\geq \frac{1}{T((1-\bar{\delta} )\varphi^\star  + \overline{\Delta}_{a} )} - \frac{1}{T(\varphi^\star + \overline{\Delta}_{a})} \\
    &=\frac{\bar{\delta}\varphi^\star }{T(\varphi^\star + \overline{\Delta}_{a})((1-\bar{\delta} )\varphi^\star  + \overline{\Delta}_{a} )} \\
    &\geq\frac{\bar{\delta}T\varphi^\star }{T(\varphi^\star + \overline{\Delta}_{a})T(\varphi^\star  + \overline{\Delta}_{a} )} \geq \frac{\bar{\delta}}{K(K-1)}.
\end{align*}

For the case when $\overline{\sigma}_{a}^2= \overline{\sigma}_{a}^2 \vee (\varphi^\star + \overline{\Delta}_{a})$, using (\ref{ineq:pertu_case2_1_K}) and the fact that $\varphi^\star \geq T^{-1}$ leads to 
\begin{align*}
    \frac{\overline{\sigma}_{a}^2}{T(\varphi^\star + \overline{\Delta}_{a})^2 } \geq \frac{1}{K} \geq \frac{1}{K-1} \cdot \frac{\overline{\sigma}_{1}^2 \vee \varphi^\star}{T(\varphi^\star)^2} \geq \frac{1}{K-1} \cdot \frac{\overline{\sigma}_{1}^2 \vee T^{-1}}{T(\varphi^\star)^2},
\end{align*}
which entails that $\varphi^\star/(\varphi^\star + \overline{\Delta}_{a}) \geq (\overline{\sigma}_{1} \vee T^{-1/2})/(\overline{\sigma}_{a}\sqrt{K-1})$. Hence, we have that 
\begin{align}\label{ineq:pertu_case2_2_K}
    \delta &= f(\varphi_{\delta}) - f(\varphi^\star) \geq \frac{\overline{\sigma}_{a}^2}{T(\varphi_{\delta} + \overline{\Delta}_{a})^2 } -  \frac{\overline{\sigma}_{a}^2}{T(\varphi^\star + \overline{\Delta}_{a})^2 } \\
    &=  \frac{\overline{\sigma}_{a}^2(\varphi^\star -\varphi_{\delta} ) }{T(\varphi^\star + \overline{\Delta}_{a})^2 } \cdot \bigg( \frac{\varphi_{\delta} + \varphi^\star + 2\overline{\Delta}_{a} }{(\varphi_{\delta} + \overline{\Delta}_{a})^2} \bigg) \geq \frac{\varphi^\star\bar{\delta} }{K(\varphi^\star + \overline{\Delta}_{a})} \geq \frac{\overline{\sigma}_{1}\vee T^{-1/2}}{K\sqrt{K-1}\overline{\sigma}_{a}}\bar{\delta}. \nonumber
\end{align}
This gives the bound that corresponds to ${\overline{\sigma}_{a}}/(\overline{\sigma}_{1} \vee T^{-1/2} )$ in (\ref{eq:pertub_K}).

Next, we will derive the bound corresponding to  $(\overline{\sigma}_{a}^2/(T\overline{\Delta}_{a}^2) -1  )_+^{-1} $ and $(1/(K-1) - \overline{\sigma}_{a}^2/(T\overline{\Delta}_{a}^2)  )_+^{-1} $  in (\ref{eq:pertub_K}). The analysis will be further separated into two cases: (i) $1/(K-1)- \overline{\sigma}_{a}^2/(T\overline{\Delta}_{a}^2) \geq 0$ and (ii) $1- \overline{\sigma}_{a}^2/(T\overline{\Delta}_{a}^2) < 0$. 

For case (i), notice that
\begin{align*}
    \frac{\overline{\sigma}_{1}^2\vee \varphi^\star }{T(\varphi^\star)^2 } &= 1 - \sum_{a \geq 2} \frac{\overline{\sigma}_{a}^2 \vee (\varphi^\star + \overline{\Delta}_{a}) }{T(\varphi^\star + \overline{\Delta}_{a})^2} \\
    &\geq 1 - (K-1) \cdot  \max_{a \geq 2} \frac{\overline{\sigma}_{a}^2 \vee (\varphi^\star + \overline{\Delta}_{a}) }{T(\varphi^\star + \overline{\Delta}_{a})^2}.
\end{align*}
Let $a_\star \equiv \argmax_{a \geq 2} (\overline{\sigma}_{a}^2 \vee (\varphi^\star + \overline{\Delta}_{a}))/(T(\varphi^\star + \overline{\Delta}_{a})^2)$. Then we can show that 
\begin{align*}
     \frac{\overline{\sigma}_{1}^2\vee \varphi^\star }{T(\varphi^\star)^2 }  \geq (K-1) \cdot \bigg( \frac{1}{K-1} - \frac{\sigma_{a_\star}^2 }{T \Delta_{a_\star,T}^2}\bigg)
\end{align*} 
Thus, we can arrive at
\begin{align*}
    \delta &= f(\varphi_{\delta}) - f(\varphi^\star) \geq \frac{\bar{\delta}}{1-\bar{\delta}} \cdot \frac{\overline{\sigma}_{1}^2\vee \varphi^\star }{T(\varphi^\star)^2 } \geq (K-1) \cdot \bigg( \frac{1}{K-1} - \frac{\sigma_{a_\star}^2 }{T \Delta_{a_\star,T}^2}\bigg)\cdot \bar{\delta}
\end{align*}

For case (ii), let us recall from (\ref{ineq:pertu_case2_2_K}) that
\begin{align}\label{ineq:pertu_case2_3_K}
    \delta &\geq \frac{\varphi^\star\bar{\delta} }{2(\varphi^\star + \overline{\Delta}_{a})} 
    \geq \frac{\bar{\delta} }{3K} \cdot \bm{1}_{\overline{\Delta}_{a}/\varphi^\star \leq 2}+  \frac{\varphi^\star\bar{\delta} }{2(\varphi^\star + \overline{\Delta}_{a})} \cdot\bm{1}_{\overline{\Delta}_{a}/\varphi^\star > 2}.
\end{align}
Observe that when $\overline{\Delta}_{a}/\varphi^\star > 2$ (or equivalently $\varphi^\star/\overline{\Delta}_{a} < 1/2$), 
\begin{align*}
    &\frac{\overline{\sigma}_{a}^2}{T(\varphi^\star + \overline{\Delta}_{a})^2}\geq \frac{\overline{\sigma}_{a}^2}{T\overline{\Delta}_{a}^2}\bigg( 1 - 3\frac{\varphi^\star}{\overline{\Delta}_{a}} \bigg)\; \Rightarrow \;\bigg(\frac{\overline{\sigma}_{a}^2}{T\overline{\Delta}_{a}^2}- 1\bigg) \frac{1}{\varphi^\star} - \frac{3\overline{\sigma}_{a}^2}{T\overline{\Delta}_{a}^3}  \leq 0.
\end{align*}
Using the condition that $1- \overline{\sigma}_{a}^2/(T\overline{\Delta}_{a}^2) < 0$, we can deduce that 
\begin{align*}
    (\varphi^\star)^{-1}\overline{\Delta}_{a} \leq \frac{3\overline{\sigma}_{a}^2}{T\overline{\Delta}_{a}^2} \cdot \bigg(\frac{\overline{\sigma}_{a}^2}{T \overline{\Delta}_{a}^2} -1 \bigg)^{-1}.
\end{align*}
As $\overline{\sigma}_{a}^2/(\varphi^\star + \overline{\Delta}_{a})^2 \leq T$ and $\varphi^\star/\overline{\Delta}_{a} < 1/2$, we can obtain that $\overline{\sigma}_{a}^2/(T\overline{\Delta}_{a}^2) \leq 9/4$, which then leads to  $(\varphi^\star)^{-1}\overline{\Delta}_{a} \leq 27/(2\overline{\sigma}_{a}^2/(T\overline{\Delta}_{a}^2) -2 )$.  Hence, it follows that 
\begin{align*}
    \frac{\varphi^\star\bar{\delta} }{2(\varphi^\star + \overline{\Delta}_{a})} \cdot\bm{1}_{\overline{\Delta}_{a}/\varphi^\star > 2}  \geq  \frac{\bar{\delta}}{2 + 27 \Big(\frac{\overline{\sigma}_{a}^2}{T\overline{\Delta}_{a}^2} -1 \Big)^{-1} }\cdot\bm{1}_{\overline{\Delta}_{a}/\varphi^\star > 2}.
\end{align*}
Plugging this back to (\ref{ineq:pertu_case2_3_K}) yields that 
\begin{align*}
    \delta \geq \frac{\bar{\delta}}{3K + 27 \Big(\frac{\overline{\sigma}_{a}^2}{T\overline{\Delta}_{a}^2} -1 \Big)^{-1}}.
\end{align*}
By combining the above results, in Case 2, we can conclude that
\begin{align*}
    \bar{\delta} \leq (3K + 27)\delta \cdot \max_{a \in [K]}\bigg\{1 +  \bigg(\frac{1}{K-1}- \frac{\overline{\sigma}_{a}^2}{T\overline{\Delta}_{a}^2} \bigg)_+^{-1 } \wedge \bigg(\frac{\overline{\sigma}_{a}^2}{T\overline{\Delta}_{a}^2} - 1 \bigg)_+^{-1 }\wedge \frac{\overline{\sigma}_{a}}{\overline{\sigma}_{1} \vee T^{-1/2}}  \bigg\}.
\end{align*}
The claim then follows by combining the estimates in Cases 1 and 2 above. 
\end{proof}

\begin{proof}[Proof of Theorem \ref{thm:stable_K}] The proof of Theorem \ref{thm:stable_K} closely mirrors that of Theorem \ref{thm:stable}, and we spell out some of the details for the convenience of readers. We start by noticing that most of the derivations in Section \ref{sec:proof of main} hold for the $K$-armed setting. Specifically, from (\ref{eq:pertube_K}) we have that 
    \begin{align}\label{eq:K_arm_pertu}
    \sum_{a \in [K]} \frac{ \overline{\sigma}_{a}^2 \vee (\varphi_{1,T} + \overline{\Delta}_{a} ) }{T (\varphi_{1,T} + \overline{\Delta}_{a})^2}  = 1 + \mathfrak{o}_{\mathbf{P}}(1).
    \end{align}
    Combining (\ref{eq:K_arm_pertu}) above with the stability result in Lemma \ref{lem:perturb_lem_K} concludes the proof.
\end{proof}

\begin{proof}[Proof of Theorem \ref{thm-refined-regret_K}]
See the proof of Theorem \ref{thm-refined-regret} in Section \ref{sec: proof of regret_2}.
\end{proof}

\section{Auxiliary results}

\begin{lemma}\label{lem-variance-lil}
Assume that $ X_1, X_2, \ldots $ are i.i.d. bounded random variables in $[-1,1]$ with zero mean and variance $\sigma^2$. Let $\bar{X}_t \equiv \sum_{s \in [t]}X_s/t$ be the empirical mean. Then we have that for any $\delta < 1/2$, 
\begin{align*}
    \Prob\bigg( \big| \bar{X}_t\big| < \sigma\sqrt{\frac{12\log\frac{\log(T/\delta)}{\delta}}{t}} + \frac{64\log(1/\delta)}{t},\;  \forall 1 \leq t \leq T   \bigg)  \geq 1-\delta.
\end{align*}
\end{lemma}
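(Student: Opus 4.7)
The statement is a time-uniform Bernstein-type (finite-sample law of the iterated logarithm) concentration inequality. My plan is the classical dyadic peeling argument. I would partition $\{1,\ldots,T\}$ into geometric blocks $\mathcal{B}_k \equiv [2^k, 2^{k+1})$ for $k = 0, 1, \ldots, K$ with $K \equiv \lceil \log_2 T \rceil$, and note that $S_t \equiv \sum_{s \leq t} X_s$ is a mean-zero martingale with increments in $[-1,1]$ and predictable quadratic variation $\sigma^2 t$.

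On each block, I would apply the Bernstein/Freedman maximal inequality: for each $k$ and $\delta_k > 0$, with probability at least $1 - \delta_k$,
\[
\max_{t \leq 2^{k+1}} |S_t| \leq \sqrt{2\sigma^2 \cdot 2^{k+1} \log(2/\delta_k)} + \tfrac{2}{3}\log(2/\delta_k).
\]
Since $t \geq 2^{k+1}/2$ for $t \in \mathcal{B}_k$, dividing through by $t$ gives
\[
|\bar{X}_t| \leq 2\sigma\sqrt{\log(2/\delta_k)/t} + \tfrac{4}{3}\log(2/\delta_k)/t
\]
on the corresponding good event. Next, I would choose the summable weights $\delta_k \equiv 6\delta/(\pi^2(k+1)^2)$ and union-bound over all $k$, so that $\sum_k \delta_k = \delta$ and the block inequalities hold simultaneously across all $t \in [1,T]$ with probability at least $1 - \delta$. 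Decomposing $\log(2/\delta_k) = \log(\pi^2/(3\delta)) + 2\log(k+1)$ and using $k+1 \leq \log_2(2t)+1$ for $t \in \mathcal{B}_k$, the $\log(k+1)$ piece produces an $\log\log$ factor that must be routed into the square-root term. Bounding $\log(2/\delta_k) \leq C\log(\log(T/\delta)/\delta)$ inside the square-root and absorbing the $\log(k+1)/t$ portion of the linear piece into either $\log(1/\delta)/t$ (using $\delta < 1/2$, hence $\log(1/\delta) \geq \log 2$) or into the square-root term yields the claimed bound with numerical constants $12$ and $64$ after book-keeping.

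The main obstacle is precisely this bookkeeping: isolating the $\log\log(T/\delta)$ contribution inside the square-root term so that the linear $1/t$ term depends only on $\log(1/\delta)$ is the defining feature of iterated-logarithm-type bounds, and it is what forces the nonuniform choice $\delta_k \propto 1/(k+1)^2$ rather than the naive $\delta_k \equiv \delta/(K+1)$. An equivalent, cleaner route would be to invoke the method-of-mixtures stitched boundary of Howard et al., which packages the same estimate. In either formulation, no probabilistic input beyond the standard Bernstein/Freedman supermartingale is required, and the remainder is an algebraic reconciliation of constants.
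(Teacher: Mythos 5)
Your proposal is sound and reaches the same estimate, but by a genuinely different (if closely related) mechanism than the paper. You peel $[1,T]$ into dyadic blocks, apply a two-sided Freedman/Bernstein maximal inequality on each block with failure probabilities $\delta_k \propto \delta/(k+1)^2$, and union-bound; the paper instead builds a single nonnegative supermartingale $Z_t = \sum_i \gamma_i \exp(\eta_i S_t - t\eta_i^2\sigma^2)$ with weights $\gamma_i = 1/(i(i+1))$ and a geometric grid of tuning parameters $\eta_i \asymp 2^{-i/2}$, and applies Ville's (Doob's) inequality once, optimizing over $i$ for each $t$. These are the two standard routes to a stitched/finite-LIL boundary, and the polynomially decaying block weights in your scheme play exactly the role of the mixture weights $\gamma_i$ in theirs; the underlying probabilistic input (a Bernstein-type MGF bound $\E[e^{\eta X}] \le 1+\eta^2\sigma^2$, which is what Freedman uses internally) is identical. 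What each buys: your block-wise Freedman bound delivers the linear $\log(2/\delta_k)/t$ correction automatically, so you avoid the paper's explicit case split on $\sigma \gtrless 1/\sqrt{T}$ and its somewhat delicate analysis of the crossover index $i_1$ (including the auxiliary bound $i_1 \le 4\log(T/\delta)+1$); the paper's mixture form is the one that generalizes to anytime-valid confidence sequences and avoids the factor-of-two loss from $t \ge 2^{k+1}/2$ inside each block. The one place where your write-up is only a plan rather than a proof is the final bookkeeping: the residual $\log(k+1)/t \asymp \log\log T/t$ in the linear term cannot in general be absorbed into the square-root term (that absorption needs $t\sigma^2 \gtrsim \log\log T$), so it must go into $64\log(1/\delta)/t$, which uses $\log(1/\delta) \ge \log 2$ and holds only for $T$ below an enormous (double-exponential in $1/\delta$) threshold; the paper's own proof incurs exactly the same restriction at the boundary case $t < 2^{i_1}$, so this is a shared looseness rather than a defect of your approach relative to theirs.
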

\begin{proof}
As in previous works, our proof relies on constructing a supermartingale and applying Doob's inequality. Let $\mathcal{F}_t$ be the filtration generated by $X_1,\dots,X_{t}$ and $S_t \equiv \sum_{s \in [t]} X_s$. For any $\eta \in [0,1]$, 
\begin{align}\label{eq-S-induction}
    \E [\exp(\eta S_t)\lvert \mathcal{F}_{t-1}]  &= \exp(\eta S_{t-1})\E[\exp(\eta X_t) \lvert \mathcal{F}_{t-1}] \nonumber\\
          &\leq  \exp(\eta S_{t-1})\left(1+\eta^2 \sigma^2\right)\leq  \exp(\eta S_{t-1}+\eta^2 \sigma^2).
\end{align}
Here, in the first inequality above, we have used the fact that $\E[\exp(\eta X_t) ] \leq 1 + \eta^2\sigma_a^2$; see, e.g., \citep[Lemma 7]{faury2020improved}.

With  $\gamma_i \equiv i^{-1}(i+1)^{-1}$ and $\eta_i\in [0,1]$ to be determined, let us define
\begin{align}
     Z_t \equiv\begin{cases} 
 1, & \text{if } t = 0, \\ 
 \sum_{i=1}^{\infty} \gamma_i \exp \left( \eta_i S_t - {t \eta_i^2\sigma^2} \right), & \text{if } t \geq 1.
 \end{cases}
\end{align}
Then in light of \eqref{eq-S-induction}, it holds that 
 \begin{align*}
 \E[Z_{t+1}\lvert \mathcal{F}_t] &= \sum_{i=1}^{\infty} \gamma_i \left(\E[\exp \left( \eta_i S_{t+1} - {t \eta_i^2\sigma^2} \right)\lvert \mathcal{F}_t] \cdot \exp(- \eta_i^2 \sigma^2) \right)\\
 &\leq \sum_{i=1}^{\infty} \gamma_i \exp \left( \eta_i S_{t} - {t \eta_i^2\sigma^2} \right) = Z_{t},
 \end{align*}
implying that $\{Z_t\}_{t\geq 0}$ is sequence of supermartingale. 
Note that by invoking a similar argument as in \citep[Proof of Lemma 5.1]{khamaru2024inference}, 
\begin{align*}
	 \{ Z_t \geq 1/\delta\} &\supset \big\{ \max_i \gamma_i  \exp \left( \eta_i S_{t} - {t \eta_i^2\sigma^2} \right) \geq 1/\delta \big\}\\
    &= \bigg\{ \bar{X}_t \geq \min_i \bigg( \frac{1}{t\eta_i} \log(\frac{1}{\delta \gamma_i}) + {\eta_i\sigma^2} \bigg) \bigg\}.
\end{align*}
For any fixed $\delta >0$, set
\begin{align*}
	\eta_i = 
 \begin{cases} 
2^{-i/2}\sigma_a^{-1} \log^{1/2}\big(\frac{i(i+1)}{\delta} \big) , & \text{if } i \geq i_1, \\ 
1, & \text{if } i< i_1, 
 \end{cases}
\end{align*}
where $i_1 \equiv \inf \{i \in \mathbb{Z}_{\geq 1}: 2^{\bar{i}} \geq  \sigma_a^{-2} (\log(\bar{i}(\bar{i}+1)) + \log(1/\delta) )  \text{ holds for } \forall \;\bar{i} \geq i  \}$.

Given these parameters, the analysis is further divided into two cases: $\sigma_a \geq 1/\sqrt{T}$ and $\sigma_a < 1/\sqrt{T}$.

\medskip
\noindent
\textbf{Case~1:} For $\sigma_a\geq 1/\sqrt{T}$, it holds that 
\begin{align*}
    \{Z_t \geq 1/\delta \}&\supset \bigg\{ \bar{X}_t \geq \min_i \bigg( \frac{1}{t\eta_i} \log(\frac{1}{\delta \gamma_i}) + {\eta_i\sigma^2} \bigg) \bigg\}\\
    &\supset \bigg\{\bar{X}_t \geq \min_{i \geq i_1} \bigg(\sqrt{\frac{t}{2^i}} + \sqrt{\frac{2^i}{t}} \bigg) \cdot\sigma \frac{\log^{1/2}\big(\frac{i(i+1)}{\delta}\big)}{t^{1/2}} \bigg\}.
\end{align*}
For those $t $ satisfying $\lceil \log_2t \rceil \geq i_1$, we can choose $i = \lceil \log_2 t \rceil$ to obtain 
\begin{align*}
    \{Z_t \geq 1/\delta \} &\supset \bigg\{\bar{X}_t \geq 4  \sigma\frac{\log^{1/2}\big(\frac{2(\log 4 t)^2}{\delta}\big)}{t^{1/2}} \bigg\}.
\end{align*}
For those $t$ satisfying $\log_2t \leq i_1-1$, it follows from the definition of $i_1$ that $t < 2^{i_1}$ and $2^{i_1 - 1} \geq \sigma^{-2} (\log(i_1(i_1-1)) + \log(1/\delta) )$. Hence, we can deduce that 
\begin{equation}\label{eq-tmp-bound1}
\begin{aligned}
 &\min_{i \geq i_1} \bigg(\sqrt{\frac{t}{2^i}} + \sqrt{\frac{2^i}{t}} \bigg) \cdot\sigma\frac{\log^{1/2}\big(\frac{i(i+1)}{\delta}\big)}{t^{1/2}}  \\
 &\leq  \bigg(1 + \sqrt{2}\sqrt{\frac{2^{i_1-1}}{t}} \bigg) \cdot\sigma\frac{\log^{1/2}\big(\frac{i_1(i_1+1)}{\delta}\big)}{t^{1/2}}\\
 &\leq  \sigma \frac{\log^{1/2}\big(\frac{i_1(i_1+1)}{\delta}\big)}{t^{1/2}}+    \sqrt{ 2\log(i_1(i_1-1)) +2\log(1/\delta)} \cdot \frac{\log^{1/2}\big(\frac{i_1(i_1+1)}{\delta}\big)}{t}.
 \end{aligned}    
\end{equation}

Below we will derive an upper bound for $i_1$. Let us consider functions $f(x) = 2^x$ and $g(x) = \sigma^{-2}\log(x(x+1))+\log(1/\delta)$. It is easy to show that 
\begin{align*}
     f'(x) = 2^x\log 2 > 2^{x-1}, \quad g'(x) = \frac{2x+1}{\sigma^2 x(x+1)}< \frac{2 T}{x }.
\end{align*}
Then we have $f'(x) > g'(x)$ as long as $x > \log (2T)+1$. On the other hand, for $x_0 = 4\log (1/\delta)+4\log T+1$, it follows from $\sigma^{-2} \leq T$ that 
\begin{align*}
    f(x_0) = 2(T/\delta)^4 > 2T \log (5\log (2T/\delta))+ \log(1/\delta) > g(x_0),
\end{align*}
which entails that $f(x)>g(x)$ holds for all $x> x_0$. Thus, we have $i_1 \leq x_0 =  4\log (T/\delta)+1$.

Combining this upper bound of $i_1$ with \eqref{eq-tmp-bound1}, we can obtain that 
\begin{align*}
    \{Z_t \geq 1/\delta \} \supset \bigg\{ \bar{X}_t \geq  \sigma\sqrt{\frac{12\log\frac{\log(T/\delta)}{\delta}}{t}} +  \frac{64\log(1/\delta)}{t} \bigg\}.
\end{align*}

\noindent\textbf{Case~2:} For $\sigma< 1/\sqrt{T}$, we can deduce that 
\begin{align*}
    \{Z_t \geq 1/\delta \}&\supset \bigg\{ \bar{X}_t \geq \min_i \bigg( \frac{1}{t\eta_i} \log(\frac{1}{\delta \gamma_i}) + {\eta_i\sigma^2} \bigg) \bigg\}\\
    &\supset \bigg\{\bar{X}_t \geq \min_{i < i_1}\bigg( \frac{1}{t\eta_i} \log(\frac{1}{\delta \gamma_i}) + {\eta_i\sigma^2} \bigg) \bigg\}\\
    &\supset \bigg\{\bar{X}_t \geq  \frac{ \log(1/\delta)  + \log 2 }{t} + \frac{1}{T} \bigg\}\supset \bigg\{\bar{X}_t \geq  \frac{3 \log (1/\delta)  }{t}\bigg\}.
\end{align*}
The claim follows.
\end{proof}

As a corollary of Lemma~\ref{lem-variance-lil}, we have the following results on the concentration of the empirical mean and variance: for any $\delta \geq 0$, let 
\begin{align*}
    \mathsf{e}(\delta;T) \equiv  \sqrt{48 \rho \log T \log\frac{\log(T/\delta)}{\delta}}+ 128\log(1/\delta).
\end{align*}
\begin{proposition}\label{prop:mean_var_conc}
For all $\delta < 1/2$ and $a\in [K]$, it holds that 
\begin{align}\label{eq-mean-concentration}
    &\Prob\bigg( \big| \bar{X}_{a,t} - \mu_a \big| < \varphi_{a,t} \cdot \mathsf{e}(\delta;T)/2,\;  \forall 1 \leq t \leq T   \bigg)  \geq 1-\delta,\\
    &\Prob\bigg( \big| \hat{\sigma}_{a,t}^2 - \sigma_{a}^2 \big| < \varphi_{a,t} \cdot\mathsf{e}(\delta;T),\;  \forall 1 \leq t \leq T   \bigg)  \geq 1-2\delta.\label{eq-var-concentration}
\end{align}
\end{proposition}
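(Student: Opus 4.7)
The plan is to reduce both statements to Lemma~\ref{lem-variance-lil} applied to carefully chosen i.i.d.\ sequences of arm-$a$ rewards, and then to absorb the random index $n_{a,t}$ using the uniform-in-$n$ nature of that lemma. Concretely, write $W_1^{(a)},W_2^{(a)},\ldots$ for the (inherent) i.i.d.\ sequence of rewards that arm $a$ would produce, so that for every realized trajectory one has $\bar{X}_{a,t}=n_{a,t}^{-1}\sum_{i=1}^{n_{a,t}} W_i^{(a)}$ and likewise for the empirical second moment. Since $n_{a,t}\le t\le T$ is a (random) index lying in the fixed deterministic range $[1,T]$, any bound of Lemma~\ref{lem-variance-lil} that is uniform over $1\le n\le T$ transfers verbatim by substituting $n=n_{a,t}$.

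For the mean bound \eqref{eq-mean-concentration}, I would apply Lemma~\ref{lem-variance-lil} to the centered sequence $W_i^{(a)}-\mu_a\in[-1,1]$ (mean zero, variance $\sigma_a^2$) with horizon $T$. On the resulting event, which has probability $\geq 1-\delta$,
\[
|\bar{X}_{a,t}-\mu_a|<\sigma_a\sqrt{12\log(\log(T/\delta)/\delta)/n_{a,t}}+64\log(1/\delta)/n_{a,t}\qquad\forall t\le T.
\]
It then remains a one-line check that the right-hand side is bounded by $\varphi_{a,t}\cdot\mathsf{e}(\delta;T)/2$: the two trivial lower bounds $\varphi_{a,t}\geq\overline{\sigma}_a/\sqrt{n_{a,t}}=\sigma_a/\sqrt{\rho\log T\cdot n_{a,t}}$ and $\varphi_{a,t}\geq 1/n_{a,t}$ match the two terms of $\mathsf{e}$ exactly (the $\sqrt{48}$ in $\mathsf{e}$ is $2\sqrt{12}$, and the $128$ is $2\cdot 64$), giving the $/2$ factor stated in the proposition.

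For the variance bound \eqref{eq-var-concentration}, I would use the decomposition
\[
\hat{\sigma}_{a,t}^2-\sigma_a^2=\Bigl(\tfrac{1}{n_{a,t}}\sum_{i=1}^{n_{a,t}}(W_i^{(a)})^2-\mathbb{E}[(W_1^{(a)})^2]\Bigr)-\bigl(\bar{X}_{a,t}^2-\mu_a^2\bigr).
\]
The first piece is the empirical mean of the i.i.d.\ bounded variables $V_i\equiv (W_i^{(a)})^2-\mathbb{E}[(W_1^{(a)})^2]\in[-1,1]$, mean zero, whose variance satisfies $\mathrm{Var}(V_i)\leq\mathbb{E}[(W-\mu_a)^2(W+\mu_a)^2]\leq 4\sigma_a^2$ since $W\in[0,1]$. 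Applying Lemma~\ref{lem-variance-lil} with this effective variance (the net effect is replacing $\sigma_a$ by $2\sigma_a$) yields a uniform-in-$n$ bound with probability $\geq 1-\delta$. The second piece is bounded pointwise by $2|\bar{X}_{a,t}-\mu_a|$ (since $\bar{X}_{a,t},\mu_a\in[0,1]$) and hence controlled on the mean event of Step~1. A union bound over the two events costs a total probability of $2\delta$, and one verifies that the combined $\sigma_a$-type and $\log(1/\delta)$-type terms are dominated by the full $\varphi_{a,t}\cdot\mathsf{e}(\delta;T)$ (as opposed to $/2$) --- this is where the factor-of-two slack between \eqref{eq-mean-concentration} and \eqref{eq-var-concentration} is consumed.

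The only real subtlety is the interplay between the adaptivity of the policy and the invocation of Lemma~\ref{lem-variance-lil} (which is stated for i.i.d.\ sequences). This is harmless because one formulates the lemma's conclusion \emph{first} on the i.i.d.\ sequence $\{W_i^{(a)}\}_{i\le T}$ independently of the policy, obtaining a uniform bound over the deterministic index range $[1,T]$; only afterwards does one intersect with the policy's randomness by substituting $n=n_{a,t}$. No stopping-time or martingale argument on top of Lemma~\ref{lem-variance-lil} is required.
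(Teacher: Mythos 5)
Your overall strategy is the same as the paper's: both parts reduce to Lemma~\ref{lem-variance-lil} applied to the arm's i.i.d.\ reward stream, with the random index $n_{a,t}\in[1,T]$ absorbed by the uniformity of that lemma over deterministic sample sizes (your remark on adaptivity is exactly how the paper uses the lemma), and your verification of \eqref{eq-mean-concentration} via $\varphi_{a,t}\geq \overline{\sigma}_a/\sqrt{n_{a,t}}$ and $\varphi_{a,t}\geq 1/n_{a,t}$ is the paper's computation verbatim.

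The gap is in \eqref{eq-var-concentration}: with your decomposition the constants do not close. You split $\hat{\sigma}_{a,t}^2-\sigma_a^2$ as the fluctuation of $\frac{1}{n}\sum W_i^2$ minus $(\bar{X}_{a,t}^2-\mu_a^2)$. The first piece forces you to use $\var(W^2)\leq 4\sigma_a^2$ (and this factor $4$ is real: e.g.\ $W$ uniform on $\{1/2,1\}$ has $\var(W^2)=9/64>\sigma^2=1/16$), so Lemma~\ref{lem-variance-lil} with effective standard deviation $2\sigma_a$ already yields $\varphi_{a,t}\bigl(\sqrt{48\rho\log T\log(\cdot)}+64\log(1/\delta)\bigr)$, i.e.\ essentially the full $\sqrt{48}$-term of $\mathsf{e}$ on its own. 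The second piece is only bounded by $|\bar{X}_{a,t}-\mu_a|\cdot|\bar{X}_{a,t}+\mu_a|\leq 2|\bar{X}_{a,t}-\mu_a|\leq \varphi_{a,t}\,\mathsf{e}(\delta;T)$, contributing another full $\mathsf{e}$. The total is on the order of $\varphi_{a,t}\bigl(2\sqrt{48\rho\log T\log(\cdot)}+192\log(1/\delta)\bigr)$, which exceeds $\varphi_{a,t}\,\mathsf{e}(\delta;T)$; your claimed ``one verifies that the combined terms are dominated by the full $\varphi_{a,t}\mathsf{e}$'' is where the argument fails. The paper avoids both losses by centering at $\mu_a$ first:
\begin{align*}
\hat{\sigma}_{a,t}^2-\sigma_a^2=\frac{1}{n_{a,t}}\sum_{s\in[n_{a,t}]}\bigl((X_s-\mu_a)^2-\sigma_a^2\bigr)-(\bar{X}_{a,t}-\mu_a)^2,
\end{align*}
so the first summand has variance at most $\E(X-\mu_a)^4\leq\sigma_a^2$ (no factor $4$) and hence is bounded by $\varphi_{a,t}\,\mathsf{e}/2$, while the second is $(\bar{X}_{a,t}-\mu_a)^2\leq|\bar{X}_{a,t}-\mu_a|\leq\varphi_{a,t}\,\mathsf{e}/2$ (no factor $2$), giving exactly $\varphi_{a,t}\,\mathsf{e}(\delta;T)$ after the union bound. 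Switching to this decomposition repairs your proof with no other changes.
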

\begin{proof}
The claim in (\ref{eq-mean-concentration}) follows from a direct application of Lemma \ref{lem-variance-lil} with the following simplification for the error bound therein
    \begin{align*}
        &\sigma_a\sqrt{\frac{12\log\frac{\log(T/\delta)}{\delta}}{n_{a,t}}} + \frac{64\log(1/\delta)}{n_{a,t}}\leq \big(\overline{\sigma}_{a} \vee n_{a,t}^{-1/2}\big) \cdot \frac{ \mathsf{e}(\delta;T) }{2\sqrt{n_{a,t}}}
        = \varphi_{a,t} \cdot \frac{ \mathsf{e}(\delta;T) }{2\sqrt{n_{a,t}}}.
    \end{align*}
    To prove (\ref{eq-var-concentration}), we first notice that by $\lvert \mu_a - \bar{X}_{a,t}\rvert \leq 1$, 
    \begin{align*}
        \lvert \hat{\sigma}_{a,t}^2 - \sigma_{a}^2\rvert  &= \biggabs{ \frac{1}{n_{a,t}}\sum_{s \in [n_{a,t}]} \big((X_{a,s}-\mu_{a})^2 - \sigma_a^2\big) - (\mu_a - \bar{X}_{a,t} )^2  } \\
        &\leq \biggabs{ \frac{1}{n_{a,t}}\sum_{s \in [n_{a,t}]} (X_{a,s}-\mu_{a})^2 - \sigma_a^2  } + \bigabs{\mu_a - \bar{X}_{a,t} }.
    \end{align*}
    Then we can bound the second term above using (\ref{eq-mean-concentration}). The first term above can be analyzed by the same argument as in proving (\ref{eq-mean-concentration}), upon noting that $\var( (X_{a,s}-\mu_{a})^2 - \sigma_a^2) \leq \E ((X_{a,s}-\mu_{a})^4 )\leq \sigma_a^2$.
    This concludes the proof.
\end{proof}

\begin{lemma}\label{lemma-anti-concentration}
Assume that $\{X_i \}_{i \geq 1} \overset{i.i.d.}{\sim}$  Bernoulli$(1/2)$ and fix any $\mathsf{c} > 0$. Then there exist some $N_0 \in \mathbb{Z}_+$ and constant $c_{\text{anti}} = c_{\text{anti}}(\mathsf{c} )>0$ such that for all $N \geq N_0$,
\begin{align}\label{eq-anti-concentration-lb}
    \Prob \Big( \sum\nolimits_{i \in [n]} X_i > \frac{n}{2} + \sqrt{n}, \; \forall \lfloor N/\mathsf{c} \rfloor \leq  n \leq N\Big) \geq c_{\text{anti}},\\
\label{eq-anti-concentration-ub}    \Prob \big( \sum\nolimits_{i \in [n]} X_i < \frac{n}{2} -  \sqrt{n}, \; \forall \lfloor N/\mathsf{c} \rfloor \leq  n \leq N\big) \geq c_{\text{anti}}.
\end{align}
\end{lemma}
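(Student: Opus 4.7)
The plan is to reduce both claims to a statement about Brownian motion via Donsker's invariance principle, and then extract a lower bound on the limiting probability via the support theorem. I will only write out \eqref{eq-anti-concentration-lb}; \eqref{eq-anti-concentration-ub} then follows immediately by replacing $X_i$ with $1-X_i$, since $1-X_i$ is again Bernoulli$(1/2)$ and the inequality flips.

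Write $Y_i \equiv X_i - 1/2$, so that $Y_i$ are i.i.d., mean $0$, variance $1/4$, bounded, with symmetric distribution. Let $\tilde{S}^{(N)} \in C[0,1]$ denote the linearly interpolated rescaled walk with $\tilde{S}^{(N)}_{n/N} = N^{-1/2}\sum_{i=1}^n Y_i$. Donsker's invariance principle gives $\tilde{S}^{(N)} \Rightarrow \frac{1}{2} W$ in $C[0,1]$ equipped with the uniform topology, where $W$ is a standard Brownian motion. The event in \eqref{eq-anti-concentration-lb} asks precisely that $\tilde{S}^{(N)}_{n/N} > \sqrt{n/N}$ for every $n$ with $\lfloor N/\mathsf{c}\rfloor \leq n \leq N$. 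Since $\lfloor N/\mathsf{c}\rfloor/N \to 1/\mathsf{c}$, for all sufficiently large $N$ we have $\lfloor N/\mathsf{c}\rfloor / N \geq 1/(2\mathsf{c})$, so it suffices to show that the (continuous-path) event
\begin{equation*}
F \equiv \{ w \in C[0,1]:\ w(t) > \sqrt{t}\ \text{for all } t \in [1/(2\mathsf{c}),1]\}
\end{equation*}
is open in the uniform topology and satisfies $\Prob(\tfrac{1}{2}W \in F) > 0$.

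Openness of $F$ is immediate: if $w \in F$, then by compactness of $[1/(2\mathsf{c}),1]$ the gap $\inf_{t \in [1/(2\mathsf{c}),1]}(w(t) - \sqrt{t})$ is strictly positive, hence $F$ contains a sup-norm ball around $w$. For the positivity of the limiting probability, I invoke the support theorem for Brownian motion, which says that for any continuous $f:[0,1]\to\R$ with $f(0)=0$ and any $\epsilon>0$, $\Prob(\|W-f\|_\infty<\epsilon)>0$. Choose a large constant $C=C(\mathsf{c})$, specifically $C \geq 3$ with $C > 2 + \sqrt{2\mathsf{c}}$, and take $f(t) = C\sqrt{t}$ (continuous on $[0,1]$ with $f(0)=0$) together with $\epsilon = 1$. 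On the event $\{\|W - f\|_\infty < 1\}$ we have
\begin{equation*}
\tfrac{1}{2}W_t > \tfrac{1}{2}(C\sqrt{t} - 1) \geq \sqrt{t}
\qquad \text{whenever}\quad (C-2)\sqrt{t} \geq 1,
\end{equation*}
which is ensured for all $t \in [1/(2\mathsf{c}),1]$ by the choice of $C$. Hence $\Prob(\tfrac{1}{2}W\in F)\geq \Prob(\|W-f\|_\infty<1)>0$, and by the Portmanteau theorem applied to the open set $F$,
\begin{equation*}
\liminf_{N\to\infty}\Prob(\tilde{S}^{(N)}\in F)\geq \Prob(\tfrac{1}{2}W\in F) \equiv 2c_{\text{anti}} > 0,
\end{equation*}
which delivers \eqref{eq-anti-concentration-lb} with constant $c_{\text{anti}}$ for all $N\geq N_0(\mathsf{c})$.

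The main obstacle is the verification that the limiting Brownian event has positive probability; the support theorem handles this very cleanly, but one must pick the anchoring function $f$ so that $f(t)-\epsilon > \sqrt{t}$ survives all the way down to $t=1/(2\mathsf{c})$, which forces $C$ to grow with $\mathsf{c}$. A potential alternative, should one wish to avoid invoking the support theorem as a black box, is to bound $\Prob(\tfrac{1}{2}W\in F)$ from below directly by conditioning on $W_{1/(2\mathsf{c})}$ being very large and using the reflection principle to control the supremum of $\sqrt{t}-\tfrac{1}{2}W_t$ on $[1/(2\mathsf{c}),1]$; this is more computational but self-contained.
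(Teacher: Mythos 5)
Your proof is correct. It shares the paper's overall skeleton -- reduce the discrete anti-concentration event to a Brownian event via Donsker's invariance principle -- but the two key steps are executed differently. First, you keep the moving barrier $\sqrt{t}$ and work with the open set $F$ in $C[0,1]$, invoking the Portmanteau theorem, whereas the paper replaces $\sqrt{n}$ by the constant level $x\sqrt{N}$ (using $\sqrt{n}\le\sqrt{N}$) and asserts a uniform-in-$x$ convergence of the infimum probabilities; your route is cleaner about what Donsker actually delivers, since lower semicontinuity on open sets is exactly the Portmanteau statement and you verify openness of $F$ explicitly. Second, and more substantively, you establish positivity of the limiting probability $\Prob(\tfrac12 W\in F)>0$ by the support theorem for Wiener measure with the anchoring function $f(t)=C\sqrt{t}$, while the paper decomposes $B(t+\mathsf{c}^{-1})\overset{d}{=}Z+\bar B(t)$ with $Z\sim\mathcal{N}(0,\mathsf{c}^{-1})$ independent of $\bar B$, and uses symmetry plus the reflection principle to produce an \emph{explicit} constant $c_{\text{anti}}$ in terms of Gaussian CDFs. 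Your argument is shorter and conceptually transparent but yields a non-explicit constant (which suffices for the lemma as stated); the paper's computation buys a closed-form $c_{\text{anti}}(\mathsf{c})$. A minor point in your favor: you correctly track the variance normalization, so the Donsker limit is $\tfrac12 W$ rather than a standard Brownian motion -- the paper's writeup elides this factor, which only affects constants. Your reduction of \eqref{eq-anti-concentration-ub} to \eqref{eq-anti-concentration-lb} via $X_i\mapsto 1-X_i$ is also valid and matches the paper's appeal to symmetry.
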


\begin{proof}
Our proof relies on Donsker's Theorem, which states the convergence (in distribution) of the random walk to the continuous-time Brownian motion; see, e.g., \citep[Section 14]{billingsley2013convergence}. More precisely, for $\{X_i \}_{i \geq 1} \overset{i.i.d.}{\sim}$  Bernoulli$(1/2)$, Donsker's Theorem states that for \begin{align*}
    \Psi_N(t) \equiv \frac{1}{\sqrt{N}}\bigg( \sum_{i \in  [ \lfloor Nt\rfloor ] }X_i  - \frac{Nt}{2} \bigg),
\end{align*} 
it holds uniformly in $t\in [0,1]$ that $\lim_{N\to\infty}\Psi_N(t)\overset{d}{=} B(t)$, where $B(t)$ represents the standard Brownian motion. Equivalently, for any $\epsilon>0$, there exists some $N_0\in \mathbb{Z}_+$ such that for all $N\geq N_0$, 
\begin{align*}
 \sup_{x\in \mathbb{R}} \biggabs{   \Prob\bigg( \inf_{t\in [0,1]} \Psi_N(t) \geq x   \bigg)  -   \Prob\bigg( \inf_{t\in [0,1]} B(t) \geq x   \bigg) } \leq \epsilon.
\end{align*} 
Using the fact that $\inf_{\lfloor N/\mathsf{c} \rfloor \leq n \leq N} (\sum_{i \in [n]} X_i  - n/2)/\sqrt{N} \geq \inf_{t \in [1/\mathsf{c},1]} \Psi_N(t)$, we can deduce that for any $x \in \R$,
\begin{align}
    &\Prob  \bigg(\sum_{i \in [n]} X_i   \geq  \frac{n}{2} + x \sqrt{N},\; \forall \lfloor N/\mathsf{c} \rfloor \leq n \leq N\bigg) \nonumber \\
    &\geq \Prob \bigg( \inf_{t\in [1/\mathsf{c},1]} \Psi_N(t) \geq  x\bigg) \geq \Prob\bigg( \inf_{t\in [1/\mathsf{c},1]} B(t) \geq x   \bigg) - \epsilon. \label{ineq:bm_bound1}
\end{align}
As $B(t+\mathsf{c}^{-1}) \overset{d}{=} Z + \bar{B}(t)$ for another standard Brownian motion $\bar{B}(t)$ and $Z \sim \mathcal{N}(0,\mathsf{c}^{-1})$ independent from $\bar{B}(t)$, we can further bound the right-hand side (RHS) of the inequality in the above display as follows. Denote by $\Phi$ the cumulative distribution function (CDF) of the standard Gaussian distribution. Then we can show that 
\begin{align*}
    \text{RHS of (\ref{ineq:bm_bound1})} &= \Prob\bigg( \inf_{t\in [0,1-\mathsf{c}^{-1}]} \bar{B}(t) \geq x - Z   \bigg) - \epsilon \overset{(\mathrm{i})}{=} \Prob\bigg( \sup_{t\in [0,1-\mathsf{c}^{-1}]} \bar{B}(t) \leq Z - x   \bigg) - \epsilon \\
    & \overset{(\mathrm{ii})}{\geq} \Prob\bigg( \sup_{t\in [0,1-\mathsf{c}^{-1}]} \bar{B}(t) \leq \frac{x}{2}  \bigg) \cdot\Prob \bigg( Z \geq \frac{3x}{2} \bigg) - \epsilon \\
    &\overset{(\mathrm{iii})}{=} \bigg(2\Phi\Big( \frac{x}{2\sqrt{1-\mathsf{c}^{-1}}} \Big) - 1\bigg) \cdot \bigg(1 - \Phi\Big(\frac{3\sqrt{\mathsf{c}}x}{2}\Big) \bigg)- \epsilon.
\end{align*}
Here, in step (i) above we have used the symmetry property of the Brownian motion, in step (ii) above we have used the independence between $\bar{B}(t)$ and $Z$, and in step (iii) above we have used the reflection principle.

Now choosing 
\begin{align*}
    x = 1, \quad c_{\text{anti}} = \frac{1}{2}\bigg(2\Phi\Big( \frac{x}{2\sqrt{1-\mathsf{c}^{-1}}} \Big) - 1\bigg) \cdot \bigg(1 - \Phi\Big(\frac{3\sqrt{\mathsf{c}}x}{2}\Big) \bigg), \quad \epsilon = c_{\text{anti}}
\end{align*}
finishes the proof for \eqref{eq-anti-concentration-lb}. The proof for \eqref{eq-anti-concentration-ub} is nearly the same due to the symmetric property of the Brownian motion, so will be omitted here.
\end{proof}

\section{Discussion on the modification of UCB-V}\label{appendix-ucb-v-modification}

In our presentation of Algorithm~\ref{alg:var-ucb}, there are several differences compared to those proposed in \citet{audibert2009exploration}. In \citet{audibert2009exploration}, the UCB reward for each action $a$ is set as 
\[
B_{a, t} \equiv \bar{X}_{a, t} + \hat{\sigma}_{a,t} \sqrt{\frac{2 \zeta \log t}{n_{a,t}}} + c \frac{3\zeta \log t}{n_{a,t}},
\]
where $c, \zeta > 0$ are tunable constants. Specifically, their regret guarantee 
is established with $c, \zeta$ large enough. In our statement, to simplify the notation in our asymptotic analysis, we choose $c$ and $\zeta$ such that for some $\rho > 0$, it holds that $2\zeta = 3c\zeta = \rho$, replacing $\log t$ with $\log T$ and modifying
\[
\hat{\sigma}_{a,t} \sqrt{\frac{ \rho \log T}{n_{a,t}}} + \frac{\rho \log T}{n_{a,t}}  \quad \text{to} \quad \bigg(\hat{\sigma}_{a,t} \vee \sqrt{\frac{ \rho \log T}{n_{a,t}}}\bigg)\cdot \sqrt{\frac{\rho \log T}{n_{a,t}}}.
\]
We make such modification for notational simplicity and facilitating comparison with results for the canonical UCB \citep{kalvit2021closer,khamaru2024inference}. By setting $\hat{\sigma}_{a,t} = 1$ instead of estimating it via the sample variance in the algorithm's input, our analysis shows that the asymptotic behavior of the modified algorithm becomes equivalent to that of the canonical UCB.

\section{Lower bound proofs}\label{sec:lowerboundproof}

In this section, we prove Theorem~\ref{thm-lower-bound-main}. For clarity, we first present the proof for the Gaussian case in Section~\ref{sec-lb-gaussian-proof}, then extend the construction of reward distributions to the bounded support setting in Section~\ref{sec-lb-bounded-construction}.

\subsection{Proof with Gaussian bandit instances}\label{sec-lb-gaussian-proof}
For simplicity, we present the proof with Gaussian bandit instances first and then generalize it to the hard instances constructed over bounded reward distributions.

\begin{theorem}\label{thm-lower-bound-gaussian}
    Consider the following two classes of problems with any $\beta > 0$:
    \begin{align*}
        \mathcal{V} &\equiv \{ (\mathcal{N}(\mu_1, \sigma_1), \mathcal{N}(\mu_2, \sigma_2)) : \mu_1, \mu_2 \in \R, \sigma_1 ,\sigma_2 \in \R_{\geq 0 } \},\\
        \mathcal{V}'_\beta &\equiv \{ (\mathcal{N}_1(\mu_1, \sigma_1), \mathcal{N}_2(\mu_2, \sigma_2)) \in  \mathcal{V} : \mu_1 >  \mu_2, \sigma_1= T^{\beta} \sigma_2^2, \sigma_1 \geq T^{-\beta}/{256} \}.
    \end{align*}
    Then for every policy $\pi$, there exists some $\bm v \in \mathcal{V}$ such that
    \begin{align}\label{ineq:lb_1}
        \E_{\pi\bm v} \mathrm{Reg}(T) = \Omega(\sigmasub(\bm v) \sqrt{T}).
    \end{align}
   Moreover, the following trade-off lower bound holds: Given any $c_{T} = \mathcal{O}(\mathsf{poly}(\log T) )$, 
   consider the \textit{good policy class} \begin{align*}
\Pigood\equiv \bigg\{ \pi : \max_{\bm v \in \mathcal{V} } \frac{\E_{\pi\bm v} \mathrm{Reg}(T)}{\sigmasub(\bm v)} \leq c_T \sqrt{T}
  \bigg\}, \end{align*} 
then for any $\pi \in \Pigood$, there exists some $\bm v'$ such that \begin{align}\label{ineq:lb_2}
\E_{\pi\bm v'} \mathrm{Reg}(T) = \Omega\bigg(\frac{\sigmasub^2(\bm v')}{\sigmaopt(\bm v')}\sqrt{\frac{T}{c_T}} \bigg).
\end{align}
\end{theorem}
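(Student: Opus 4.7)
The proof splits along the two announced claims; both invoke the two-point Le Cam method, with the novelty confined to the choice of alternatives in \eqref{ineq:lb_2} and how the good-policy condition is exploited.

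\textbf{Part 1 (minimax bound).} Given any $\sigma > 0$, I would take
\[
    \bm v_0 = (\mathcal{N}(\Delta,\sigma^2),\mathcal{N}(0,\sigma^2)), \qquad \bm v_1 = (\mathcal{N}(\Delta,\sigma^2),\mathcal{N}(2\Delta,\sigma^2)),
\]
so that arm~$1$ is optimal in $\bm v_0$, arm~$2$ in $\bm v_1$, and the suboptimal arm has variance $\sigma^2$ and gap $\Delta$ in both. The divergence decomposition gives $\KL{P_{\pi\bm v_0}}{P_{\pi\bm v_1}} = 2\Delta^2 \E_{\pi\bm v_0}[n_2]/\sigma^2 \leq 2T\Delta^2/\sigma^2$, so the choice $\Delta \asymp \sigma/\sqrt{T}$ makes this KL at most a constant. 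Applying the Bretagnolle--Huber inequality to the event $\{n_2 > T/2\}$ then forces the learner to mis-pull at least $T/2$ times with constant probability on at least one of the two instances, yielding $\max_{i \in \{0,1\}} \E_{\pi\bm v_i}[\mathrm{Reg}(T)] = \Omega(\Delta T) = \Omega(\sigma \sqrt T)$.

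\textbf{Part 2 (trade-off bound).} For any $\pi \in \Pigood$ I would use a pair of instances that share arm~$1$ and differ only in arm~$2$:
\[
    \bm v = (\mathcal{N}(\Delta,\sigma_1^2), \mathcal{N}(0,\sigma_2^2)) \in \mathcal{V}'_\beta, \qquad \bm v^\flat = (\mathcal{N}(\Delta,\sigma_1^2), \mathcal{N}(2\Delta,\sigma_2^2)) \in \mathcal{V},
\]
with the relationship $\sigma_1 = T^\beta \sigma_2^2$ and $\sigma_1 \geq T^{-\beta}/256$ enforcing $\bm v \in \mathcal{V}'_\beta$. In $\bm v^\flat$ arm~$2$ is optimal and $\sigmasub(\bm v^\flat) = \sigma_1$, so the good-policy condition gives $\E_{\pi\bm v^\flat}[n_1] \leq c_T \sigma_1 \sqrt T / \Delta$ and, by Markov, $\Prob_{\pi\bm v^\flat}(n_2 < T/2) \leq 2c_T \sigma_1 / (\sqrt T\,\Delta)$. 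Since $\bm v$ and $\bm v^\flat$ agree on arm~$1$, the divergence decomposition gives $\KL{P_{\pi\bm v}}{P_{\pi\bm v^\flat}} = 2\Delta^2 \E_{\pi\bm v}[n_2]/\sigma_2^2$, and Bretagnolle--Huber with $A = \{n_2 \geq T/2\}$ yields the implicit bound
\[
   \E_{\pi\bm v}[\mathrm{Reg}(T)] \;\geq\; \frac{\Delta T}{4}\exp\bigg( -\frac{2\Delta\,\E_{\pi\bm v}[\mathrm{Reg}(T)]}{\sigma_2^2} \bigg) - c_T \sigma_1 \sqrt T.
\]
Writing $R = \E_{\pi\bm v}[\mathrm{Reg}(T)]$ and choosing $\Delta$ so that the exponent above is $\Theta(1)$ (i.e. $\Delta \asymp \sigma_2^2 / R$) reduces the display to a quadratic inequality of the form $R^2 + c_T \sigma_1 \sqrt T\, R \gtrsim \sigma_2^2 T$; its positive root delivers a lower bound matching, up to constants, the announced $R = \Omega\big(\sigma_2^2/\sigma_1 \cdot \sqrt{T/c_T}\big)$.

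\textbf{Principal obstacle.} The delicate point in Part~2 is the self-referential tuning of $\Delta$: the KL depends on the very quantity $\E_{\pi\bm v}[n_2]$ that we are bounding from below, so the argument is implicit and must be closed through the quadratic inequality above. One must simultaneously respect the lower bound $\Delta \gtrsim c_T \sigma_1/\sqrt T$ that keeps the Markov slack subdominant, and the class constraint $\sigma_1 = T^\beta \sigma_2^2$ with $\sigma_1 \geq T^{-\beta}/256$ that certifies $\bm v \in \mathcal{V}'_\beta$. Using Bretagnolle--Huber rather than Pinsker's inequality is essential here: the exponential factor survives the implicit bound, so balancing the two regimes of the resulting quadratic (the $R^2$-dominated regime and the $c_T\sigma_1\sqrt T\, R$-dominated regime) yields the sharper $\sqrt{T/c_T}$ dependence, whereas a Pinsker-based argument would lose a $\sqrt{c_T}$ factor in the small-$R$ regime.
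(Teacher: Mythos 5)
Your Part~1 is essentially the paper's argument: two Gaussian instances sharing the first arm and differing by $2\Delta$ in the second, divergence decomposition, Bretagnolle--Huber, and $\Delta \asymp \sigma/\sqrt{T}$. The paper merely allows distinct variances $\sigma_1 \le \sigma_2$ rather than a common $\sigma$, which changes nothing of substance.

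Your Part~2 reaches the same conclusion by a genuinely different closing move. The paper fixes the gap explicitly at $\Delta = 16\sigma_1 c_T/\sqrt{T}$, assumes for contradiction that $\E_{\pi\bm\nu_1}\mathrm{Reg}(T) < T^{1/2-\beta}/(256 c_T)$, converts that assumption into the refined bound $\KL{\mathbb{P}_{\pi\bm\nu_1}}{\mathbb{P}_{\pi\bm\nu_2}} \le 1/4$, and reads off from Bretagnolle--Huber that $\E_{\pi\bm\nu_2}\mathrm{Reg}(T) \ge 2c_T\sigmasub(\bm\nu_2)\sqrt{T}$, contradicting $\pi\in\Pigood$. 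You instead invoke the $\Pigood$ condition on the alternative instance via Markov and close an implicit inequality in $R=\E_{\pi\bm v}\mathrm{Reg}(T)$. Both routes are valid and quantitatively equivalent, but two caveats apply to yours. First, ``choosing $\Delta\asymp\sigma_2^2/R$'' is circular as written, since the instance (hence $R$) depends on $\Delta$; the fix is to fix $\Delta$ first (e.g.\ $\Delta = 8e\,c_T\sigma_1/\sqrt{T}$) and run a dichotomy: either $2\Delta R/\sigma_2^2\le 1$, which forces $R\ge c_T\sigma_1\sqrt{T} > c_T\sigmasub(\bm v)\sqrt{T}$ and contradicts $\pi\in\Pigood$, or $R>\sigma_2^2/(2\Delta)\gtrsim \sigma_2^2\sqrt{T}/(c_T\sigma_1)$ --- which is the paper's contradiction argument in disguise. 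Second, what your quadratic (and, for that matter, the paper's own proof) actually delivers is $R=\Omega\big(\tfrac{\sigma_2^2}{\sigma_1}\cdot\tfrac{\sqrt{T}}{c_T}\big)$ rather than the stated $\Omega\big(\tfrac{\sigma_2^2}{\sigma_1}\sqrt{T/c_T}\big)$: solving $R^2+c_T\sigma_1\sqrt{T}\,R\gtrsim\sigma_2^2T$ in the relevant regime $c_T^2\sigma_1^2\gg\sigma_2^2$ gives $R\gtrsim\sigma_2^2\sqrt{T}/(c_T\sigma_1)$, a factor $\sqrt{c_T}$ below the theorem's display. Since the paper's proof establishes exactly $T^{1/2-\beta}/(256c_T)$, this discrepancy is inherited from the paper rather than introduced by you, but your concluding claim that the quadratic ``yields the sharper $\sqrt{T/c_T}$ dependence'' is not borne out by the computation.
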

\begin{proof}
    We consider the proof of \eqref{ineq:lb_1} first. Let $\Delta \in [0,1/2]$ and $\sigma_2 \geq \sigma_1$, where the specific values will be determined later. We consider the following two instances:
     \begin{align*}
        \bm{\nu}_1 \equiv (\mathcal{N}(\Delta, \sigma_1^2), \mathcal{N}(0, \sigma_2^2)),\quad \bm{\nu}_2 \equiv (\mathcal{N}(\Delta, \sigma_1^2), \mathcal{N}(2\Delta, \sigma_2^2)).
    \end{align*}
    Fix a policy $\pi$. Recall the divergence decomposition Lemma for the reward distributions $\mathbb{P}_{\pi \bm{\nu}_1}, \mathbb{P}_{\pi \bm{\nu}_2}$ (see e.g. Lemma~15.1 of \cite{lattimore2020bandit}), we have by our construction,
    \begin{align}\label{ineq:KLbound_1}
        \KL{\mathbb{P}_{\pi \bm{\nu}_1}}{\mathbb{P}_{\pi \bm{\nu}_2}} &= \E_{\pi\bm{\nu}_1 }(n_{1,T}) \KL{\mathcal{N}(\Delta, \sigma_1^2)}{\mathcal{N}(\Delta, \sigma_1^2)} \nonumber\\
        &\quad + \E_{\pi\bm{\nu}_1 }(n_{2,T}) \KL{\mathcal{N}(0, \sigma_2^2)}{\mathcal{N}(2\Delta, \sigma_2^2)}\nonumber \\
        &=  \E_{\pi\bm{\nu}_1 }(n_{2,T}) \KL{\mathcal{N}(0, \sigma_2^2)}{\mathcal{N}(2\Delta, \sigma_2^2)} \leq 2T\Delta^2/\sigma_2^2.
    \end{align}
    Here, $\mathbb{P}_{\bm{\nu}_i;j}$ denotes the distribution of $j$th arm in instance $\bm{\nu}_i$. Moreover, for instance $\bm{\nu}_1$, we have $\E_{\pi\bm{\nu}_1} \mathrm{Reg}(T) = \Delta \E_{\pi\bm{\nu}_1} n_{2,T} \geq \Delta T \mathbb{P}_{\pi \bm{\nu}_1}(n_{2,T} \geq T/2 )/2$ and for instance $\bm{\nu}_2$, we have $\E_{\pi\bm{\nu}_2} \mathrm{Reg}(T) = \Delta \E_{\pi\bm{\nu}_2} n_{1,T} \geq \Delta T \mathbb{P}_{\pi \bm{\nu}_2}(n_{2,T} < T/2 )/2$.     Combining the above estimates and applying Bretagnolle-Huber inequality, we can obtain that 
    \begin{align}\label{ineq:BH_ineq}
        \E_{\pi\bm{\nu}_1} \mathrm{Reg}(T) +\E_{\pi\bm{\nu}_2} \mathrm{Reg}(T) \geq \frac{\Delta T}{4}\exp (-2\Delta^2 T^2/\sigma_2^2 ).
    \end{align}
    Taking $\Delta =  \sigma_2 /\sqrt{2T}$, we arrive at the lower bound
    \begin{align*}
        \E_{\pi\bm{\nu}_1} \mathrm{Reg}(T)  \vee \E_{\pi\bm{\nu}_2} \mathrm{Reg}(T) \geq  \frac{\sigma_2 \sqrt{T}}{4\sqrt{2}e}. 
    \end{align*}
    This implies that either
    \begin{align*}
     \E_{\pi\bm{\nu}_1} \mathrm{Reg}(T) \geq \frac{\sigma_2 \sqrt{T}}{8\sqrt{2}e} = \frac{\sigmasub(\bm{\nu}_1) \sqrt{T}}{8\sqrt{2}e} ~~ \text{or} ~~  \E_{\pi\bm{\nu}_2} \mathrm{Reg}(T) \geq \frac{\sigma_2 \sqrt{T}}{8\sqrt{2}e} \geq \frac{\sigma_1 \sqrt{T}}{8\sqrt{2}e} = \frac{\sigmasub(\bm{\nu}_2) \sqrt{T}}{8\sqrt{2}e}
    \end{align*}
    hold, which completes the proof of \eqref{ineq:lb_1}.

    Next, we prove the trade-off lower bound in \eqref{ineq:lb_2}. With slight abuse of notations, we construct the following instances: with $\sigma_1 \in [T^{-\beta}/256,1]$ and $\Delta \equiv 16\sigma_1 c_T/\sqrt{T}$,
    \begin{align*}
        \bm{\nu}_1 \equiv (\mathcal{N}(\Delta, \sigma_1^2), \mathcal{N}(0, \sigma_1 T^{-\beta})),\quad \bm{\nu}_2 \equiv (\mathcal{N}(\Delta, \sigma_1^2), \mathcal{N}(2\Delta, \sigma_1 T^{-\beta})  ).
    \end{align*}
    We would like to argue that for any $\pi \in \Pigood$, 
    \begin{align}\label{ineq:target_lb_2}
        \E_{\pi \bm \nu_1} \mathrm{Reg}(T) \geq   \frac{\sigmasub^2(\bm \nu_1)}{256 \sigmaopt(\bm \nu_1)} \frac{T^{1/2}}{c_T} = \frac{  T^{1/2-\beta}}{256c_T}.   
    \end{align}
    We will prove this claim by contradiction. Suppose \eqref{ineq:target_lb_2} does not hold, we may derive that
    \begin{align*}
        \E_{\pi \bm \nu_1}(n_{2,T}) = \E_{\pi \bm \nu_1} \mathrm{Reg}(T) / \Delta \leq \frac{T^{1-\beta}}{4096\sigma_1c_T^2}.
    \end{align*}
    Therefore, similar to \eqref{ineq:KLbound_1}, we can obtain a refined upper bound for $\KL{\mathbb{P}_{\pi \bm{\nu}_1}}{\mathbb{P}_{\pi \bm{\nu}_2}}$ (instead of using the trivial upper bound $n_{2,T} \leq T$),
    \begin{align*}
        \KL{\mathbb{P}_{\pi \bm{\nu}_1}}{\mathbb{P}_{\pi \bm{\nu}_2}} \leq \frac{T^{1-\beta}}{4096\sigma_1c_T^2} \frac{1024\sigma_1 c_T^2 }{T^{1-\beta}} = \frac{1}{4}.
    \end{align*}
    Then we may proceed as \eqref{ineq:BH_ineq} to derive
    \begin{align*}
         &\E_{\pi\bm{\nu}_1} \mathrm{Reg}(T) +\E_{\pi\bm{\nu}_2} \mathrm{Reg}(T) \geq 4 \sigma_1 e^{-1/4}c_T\sqrt{T} \geq 3\sigma_1 c_T\sqrt{T}.
         \end{align*}
         By the upper bound of $\E_{\pi\bm{\nu}_1} \mathrm{Reg}(T)$ (hypothesis), we have
    \begin{align*}
         &\E_{\pi\bm{\nu}_2} \mathrm{Reg}(T) \geq 3\sigma_1 c_T\sqrt{T} - \E_{\pi\bm{\nu}_1} \mathrm{Reg}(T) \\
         &\geq 3\sigma_1c_T\sqrt{T} - \frac{T^{1/2-\beta}}{256c_T} \geq 3\sigma_1c_T\sqrt{T} - \frac{\sigma_1\sqrt{T}}{c_T} \geq   2c_T \sigma_1 \sqrt{T} \geq  2c_T \sigmasub(\bm \nu_2) \sqrt{T},
    \end{align*}
    which implies that $\pi \notin \Pigood$, a contradiction. This proves \eqref{ineq:target_lb_2} and therefore completes the proof of the trade-off lower bound in \eqref{ineq:lb_2}.
\end{proof}

\subsection{Remark on Lower Bound over Bounded Distributions}\label{sec-lb-bounded-construction}
In the proof of Theorem~\ref{thm-lower-bound-gaussian}, we used the Gaussian distribution for notational convenience, owing to its analytical KL divergence bound. However, the upper bound result is derived for bounded bandit instances, which leads to a slight mismatch between the lower bound and upper bound environments. In this section, we show that the proof can be readily generalized to the hard instances constructed over bounded reward distributions.

Here we specify a two-point supported variable based construction: Given any positive $\mu,\sigma^2,$  we consider distributions $Q_{\mu,\sigma} $ supported over $\{0,\frac{\mu^2+\sigma^2}{\mu}\}$ with 
\begin{align*}
    &Q_{\mu,\sigma}(0) = \frac{\sigma^2}{\mu^2+\sigma^2},\quad Q_{\mu,\sigma}\Big(\frac{\mu^2+\sigma^2}{\mu}\Big)= \frac{\mu^2}{\mu^2+\sigma^2}.
\end{align*}
It can be seen that $\E_{X\sim Q_{\mu,\sigma}}[X] = \mu$ and $\var_{X\sim Q_{\mu,\sigma}}(X) = \sigma^2.$

Suppose, in addition, that $\mu<\sigma$  and let $Q_{\mu,{\sigma}}^\Delta$ denote a $\Delta$-modification of $Q_{\mu,\sigma}$  for any $0<\Delta < \frac{\mu^2+\sigma^2}{\mu} - 2\mu$, defined as 
$$Q^\Delta_{\mu,{\sigma}}(0) = \frac{\tilde{\sigma}^2}{(\mu+\Delta)^2+\tilde{\sigma}^2},\quad Q^\Delta_{\mu,{\sigma}}\Big(\frac{\mu^2+\sigma^2}{\mu}\Big)= \frac{(\mu+\Delta)^2}{(\mu+\Delta)^2+\tilde{\sigma}^2},$$
where $\tilde{\sigma}\geq 0$ the solution to
\begin{align}\label{eq:restriction1}
    \frac{(\mu + \Delta)^2+\tilde{\sigma}^2}{\mu + \Delta} = \frac{\mu^2+\sigma^2}{\mu}. 
\end{align}
Since $m(a) \equiv \frac{(\mu+\Delta)^2+a}{\mu+\Delta}$ is an increasing function in $a$ and $$m(\sigma)< \frac{\mu^2+\sigma^2}{\mu}<\lim_{a\to+\infty}m(a),$$
it follows that $\tilde{\sigma}$ is well-defined and satisfies $\tilde{\sigma} > \sigma.$

Then we may compute the KL-divergence between $Q_{\mu,\sigma}$ and $Q_{\mu,\sigma}^\Delta$ as follows: 
\begin{align*}
    \KL{Q_{\mu,\sigma}}{Q^\Delta_{\mu,{\sigma}}} &= \frac{\mu^2}{\mu^2+\sigma^2}\log \frac{\frac{\mu^2}{\mu^2+\sigma^2}}{\frac{(\mu+\Delta)^2}{(\mu+\Delta)^2+\tilde{\sigma}^2} } + \frac{\sigma^2}{\mu^2+\sigma^2} \log \frac{\frac{\sigma^2}{\mu^2+\sigma^2}}{\frac{\tilde{\sigma}^2}{(\mu+\Delta)^2+\tilde{\sigma}^2}}\\
    &\overset{\eqref{eq:restriction1}}{=}\frac{\mu^2}{\mu^2+\sigma^2}\log \frac{\mu}{\mu+\Delta} + \frac{\sigma^2}{\mu^2+\sigma^2} \log \frac{\sigma^2}{\tilde{\sigma}^2} \cdot \frac{\mu + \Delta}{\mu} \\
    &\leq \frac{\mu^2}{\mu^2+\sigma^2}\log \frac{\mu}{\mu+\Delta} + \frac{\sigma^2}{\mu^2+\sigma^2} \log \frac{\mu + \Delta}{\mu} + \log \frac{\sigma^2}{\tilde{\sigma}^2} \\
    &=  \frac{\sigma^2 - \mu^2}{\mu^2+\sigma^2}\log \frac{\mu + \Delta}{\mu} +  \log \frac{\sigma^2}{\tilde{\sigma}^2} \\
    &\leq \frac{\sigma^2 - \mu^2}{\mu^2+\sigma^2}\log \frac{\mu + \Delta}{\mu},
\end{align*}
where in the last step, we have used the fact that $\log (\sigma^2/\tilde{\sigma}^2) \leq 0$.

Taking $\mu \geq \sigma-\Delta$ in the last inequality and using the elementary inequality $\log(1+x)\leq x$ then leads to 
$$   \KL{Q_{\mu,\sigma}}{Q^\Delta_{\mu,{\sigma}}}\leq \frac{\sigma+\mu}{\mu^2+\sigma^2}\cdot \frac{\Delta^2}{\mu}\leq \frac{2\Delta^2}{\sigma(\sigma - \Delta) }. $$
Now we can summarize the above construction and calculation into the following lemma:
\begin{lemma}\label{lem-bounded-construction}
    Given any $\mu,\sigma>0,$ there exists a distribution $Q_{\mu,\sigma}$ supported on $\{0,\frac{\mu^2+\sigma^2}{\mu}\}$ with \begin{align*}
        \mathbb{E}_{X\sim Q_{\mu,\sigma}}[X] = \mu,\quad \var_{X\sim Q_{\mu,\sigma}}[X] = \sigma^2.
    \end{align*} 
    Given any $\Delta >0$ and suppose that 
    \begin{equation}\label{eq-lem-condition}
    \sigma>\mu\geq \sigma-\Delta,\quad 0<\Delta< \frac{\mu^2+\sigma^2}{\mu} - 2\mu,\quad \Delta \leq \sigma/2    
    \end{equation}
    holds simultaneously. Then exists a $\Delta$-modification $Q_{\mu,\sigma}^\Delta$ of $Q_{\mu,\sigma}$ supported on $\{0,\frac{\mu^2+\sigma^2}{\mu}\}$ such that\begin{align*} \mathbb{E}_{X\sim Q_{\mu,\sigma}^\Delta}[X] = \mu+\Delta, \quad \var_{X\sim Q_{\mu,\sigma}^\Delta}[X] = (1+{\Delta}/{\mu})  (\sigma^2- \mu\Delta),\quad  \KL{Q_{\mu,\sigma}}{Q_{\mu,\sigma}^\Delta} \leq \frac{4\Delta^2}{\sigma^2}.
    \end{align*}
\end{lemma}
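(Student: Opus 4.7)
The first claim is immediate from the definition of $Q_{\mu,\sigma}$: since the support is $\{0,(\mu^2+\sigma^2)/\mu\}$ with mass $\mu^2/(\mu^2+\sigma^2)$ at the nonzero atom, a one-line computation yields $\mathbb{E}[X]=\mu$ and $\mathbb{E}[X^2]=\mu^2+\sigma^2$, hence $\var(X)=\sigma^2$.

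For the second claim, my plan is to adopt the $\Delta$-modification $Q_{\mu,\sigma}^{\Delta}$ already spelled out in the paragraph preceding the lemma, with $\tilde{\sigma}\geq 0$ determined implicitly by \eqref{eq:restriction1}. The first step is to confirm that $\tilde{\sigma}$ is well-defined: the map $a\mapsto [(\mu+\Delta)^2+a]/(\mu+\Delta)$ is continuous and strictly increasing from $[0,\infty)$ onto $[\mu+\Delta,\infty)$, so existence and uniqueness of $\tilde{\sigma}$ reduce to $(\mu^2+\sigma^2)/\mu > \mu+\Delta$, which is precisely the condition $\Delta<(\mu^2+\sigma^2)/\mu-2\mu$ from \eqref{eq-lem-condition}. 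Substituting $a=\sigma^2$ into the same map further yields $\tilde{\sigma}>\sigma$, a fact to be exploited in the KL step.

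I would then verify the moments of $Q_{\mu,\sigma}^{\Delta}$ by direct computation: \eqref{eq:restriction1} causes the first moment to telescope to $\mu+\Delta$, while the second moment equals $(\mu+\Delta)^2+\tilde{\sigma}^2$, giving $\var(X)=\tilde{\sigma}^2$. Solving \eqref{eq:restriction1} explicitly yields $\tilde{\sigma}^2=(1+\Delta/\mu)(\sigma^2-\mu\Delta)$, matching the claim. For the KL bound, I would reproduce the chain already sketched in the excerpt: compute the two-point KL divergence, substitute \eqref{eq:restriction1} to simplify the log-ratios, discard the nonpositive term $\log(\sigma^2/\tilde{\sigma}^2)$ using $\tilde{\sigma}>\sigma$, and collect terms to arrive at $\frac{\sigma^2-\mu^2}{\mu^2+\sigma^2}\log\frac{\mu+\Delta}{\mu}$. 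Then $\log(1+x)\leq x$ together with $\mu\geq \sigma-\Delta$ produces the intermediate bound $2\Delta^2/[\sigma(\sigma-\Delta)]$, and $\Delta\leq \sigma/2$ upgrades this to the stated $4\Delta^2/\sigma^2$.

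The main obstacle is not analytical but organizational: each of the three hypotheses in \eqref{eq-lem-condition} must be consumed at precisely the right place --- the upper bound on $\Delta$ involving $(\mu^2+\sigma^2)/\mu$ for the existence of $\tilde{\sigma}$ and positivity of $Q_{\mu,\sigma}^{\Delta}$, the lower bound $\mu\geq \sigma-\Delta$ to control the $(\sigma-\mu)(\sigma+\mu)$ factor appearing in the KL expression, and $\Delta\leq \sigma/2$ to collapse the denominator $(\sigma-\Delta)$ into $\sigma/2$ for the final constant. Since each manipulation is essentially laid out in the paragraphs preceding the lemma, the proof amounts to assembling this bookkeeping cleanly.
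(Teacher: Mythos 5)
Your proposal is correct and follows essentially the same route as the paper, whose argument for this lemma is precisely the construction and computation laid out in the paragraphs preceding its statement: the two-point moment checks, the implicit definition of $\tilde{\sigma}$ via \eqref{eq:restriction1} with monotonicity giving existence and $\tilde{\sigma}>\sigma$, and the KL chain using \eqref{eq:restriction1}, the sign of $\log(\sigma^2/\tilde{\sigma}^2)$, $\log(1+x)\leq x$, $\mu\geq\sigma-\Delta$, and finally $\Delta\leq\sigma/2$. Your accounting of where each hypothesis in \eqref{eq-lem-condition} is consumed matches the paper's usage exactly.
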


In particular, for any $\sigma >0,$ with the selection $0<\Delta \leq \sigma/2,\mu = \sigma - \Delta,$ one can verify that $(\mu,\sigma,\Delta)$ satisfies the condition~\eqref{eq-lem-condition} and the corresponding $Q_{\mu,\sigma},Q_{\mu,\sigma}^\Delta$ are supported inside $[0,3\sigma]$ with variances lies in $[\sigma^2,3\sigma^2].$

Based on Lemma~\ref{lem-bounded-construction}, we now explain how to modify the distribution class constructed for proving~\eqref{ineq:lb_1} and~\eqref{ineq:lb_2} separately.

\begin{proof}[Proof of \eqref{ineq:lb_1-main}]  The proof follows the same as that of  \eqref{ineq:lb_1}; we highlight the differences below.
Given any $\sigma_1,\sigma_2,\Delta$ satisfying $\sigma_2\geq \sigma_1\geq 2\Delta$,  
we consider the following two instances:
    \begin{align*}
        \bm{\nu}_1 \equiv \Big(Q_{\sigma_2-\Delta/2, \sigma_1}, Q_{\sigma_2-\Delta,\sigma_2}), \quad  \bm{\nu}_2 \equiv \Big(Q_{\sigma_2-\Delta/2,\sigma_1}, Q_{\sigma_2-\Delta,\sigma_2}^{\Delta}\Big).
    \end{align*}
    Note that with such selection, Lemma~\ref{lem-bounded-construction} ensures that the distributions of the second arm's reward in both instances are supported over $[0,3\sigma]$ with their variances lied in $[\sigma_2^2,3\sigma_2^2].$ For the first arm, we have its variance is given by $\sigma_1^2$ and its support is bounded by \begin{align*}
        \frac{(\sigma_2-\Delta/2)^2+\sigma_1^2}{\sigma_2-\Delta/2} &\leq \frac{8\sigma_2^2}{3\sigma_2}\leq 3\sigma_2,    
    \end{align*}
    thus selecting $\sigma_2<1/3$ ensures that the constructed reward distributions are supported over $[0,1].$
    
    Next, we can proceed the same analysis as in section~\ref{sec-lb-gaussian-proof} with the proof of \eqref{ineq:lb_1}. The only difference is that the  bound in \eqref{ineq:KLbound_1} under our new construction turns to
    \begin{align*}
        \KL{\mathbb{P}_{\pi \bm{\nu}_1}}{\mathbb{P}_{\pi \bm{\nu}_2}} = \E_{\pi\bm{\nu}_1 }(n_{2,T}) \KL{Q_{\sigma_2-\Delta,\sigma_2}}{Q_{\sigma_2-\Delta,\sigma_2}^{\Delta}} \leq 4T\Delta^2/{\sigma}_2^2,
    \end{align*}
    and therefore, \eqref{ineq:BH_ineq} becomes to
    \begin{align*}
        \E_{\pi\bm{\nu}_1} \mathrm{Reg}(T) +\E_{\pi\bm{\nu}_2} \mathrm{Reg}(T) \geq \frac{\Delta T}{4}\exp (-4\Delta^2 T^2/{\sigma}_2^2 ).
    \end{align*}
    Taking $\Delta =  \sigma_2 /\sqrt{2T}$ (as $T \geq 2$, we must have that $\Delta  \leq \sigma_2/2 $), we arrive at the lower bound
    \begin{align*}
        \E_{\pi\bm{\nu}_1} \mathrm{Reg}(T)  \vee \E_{\pi\bm{\nu}_2} \mathrm{Reg}(T) \geq  \frac{\sigma_2 \sqrt{T}}{4\sqrt{2}e^2}.
    \end{align*}
    This implies that either
    \begin{align*}
     \E_{\pi\bm{\nu}_1} \mathrm{Reg}(T) \geq \frac{\sigma_2 \sqrt{T}}{8\sqrt{2}e} = \frac{\sigmasub(\bm{\nu}_1) \sqrt{T}}{8\sqrt{2}e} ~~ \text{or} ~~  \E_{\pi\bm{\nu}_2} \mathrm{Reg}(T) \geq \frac{\sigma_2 \sqrt{T}}{8\sqrt{2}e} \geq \frac{\sigma_1 \sqrt{T}}{8\sqrt{2}e} = \frac{\sigmasub(\bm{\nu}_2) \sqrt{T}}{8\sqrt{2}e}
    \end{align*}
    hold, which completes the proof of \eqref{ineq:lb_1-main}. 
\end{proof}

\begin{proof}[Proof of \eqref{ineq:lb_2-main}]
    The proof follows the same as that of  \eqref{ineq:lb_2}; we highlight the differences below.
For any fixed $0<\beta <1/2$ and  $\sigma_1  \geq T^{-\beta}/256$, $\Delta = 16\sigma_1 c_T/\sqrt{T}$, and $\sigma_2 =  \sqrt{\sigma_1 T^{-\beta}}$, we set
    \begin{align*}
         &\bm{\nu}_1 \equiv \Big(Q_{\sigma_2-\Delta/2,\sigma_1}, Q_{\sigma_2-\Delta,\sigma_2} \Big), \quad \bm{\nu}_2 \equiv \Big(Q_{\sigma_2-\Delta/2,\sigma_1}, Q^\Delta_{\sigma_2-\Delta,\sigma_2} \Big).
    \end{align*}
    In particular, notice that \begin{align*}
        \sigma_2/2 \geq \Delta  \iff T^{-\beta}\sigma_1 \geq 512\sigma_1^2c_T/\sqrt{T} \iff \sigma_1 \leq \frac{T^{1/2-\beta}}{512 c_T},
    \end{align*}
    which always holds under our selection for sufficiently large $T$. Thus the KL divergence upper bound between $Q_{\sigma_2-\Delta,\sigma}$ and $Q_{\sigma_2-\Delta,\sigma_2}^\Delta$ in Lemma~\ref{lem-bounded-construction} still holds under our construction. Now applying the same argument in~Section~\ref{sec-lb-gaussian-proof}, we can arrive at
    \begin{align*}
             \E_{\pi\bm{\nu}_1} \mathrm{Reg}(T) +\E_{\pi\bm{\nu}_2} \mathrm{Reg}(T) \geq 4 \sigma_1 e^{- \KL{\mathbb{P}_{\pi \bm{\nu}_1}}{\mathbb{P}_{\pi \bm{\nu}_2}}}c_T\sqrt{T},\quad \forall \pi \in \Pigood
    \end{align*}
    when suppose in contradiction that
    \begin{align}\label{ineq:target_lb_3_bound}
        \E_{\pi \bm \nu_1} \mathrm{Reg}(T) \leq   \frac{\sigmasub^2(\bm \nu_1)}{256 \sigmaopt(\bm \nu_1)} \frac{T^{1/2}}{c_T} = \frac{  T^{1/2-\beta}}{256c_T},\quad \forall \pi \in \Pigood.  
    \end{align}
    Now by Lemma~\ref{lem-bounded-construction}, we have
    \begin{align*}
        \KL{\mathbb{P}_{\pi \bm{\nu}_1}}{\mathbb{P}_{\pi \bm{\nu}_2}} \leq \frac{T^{1-\beta}}{4096\sigma_1c_T^2} \frac{512\sigma_1^2 c_T^2 }{T {\sigma}_2^2} = \frac{1}{8},
    \end{align*}
Which then, together with~\eqref{ineq:target_lb_3_bound} implies   
\begin{align*}
         &\E_{\pi\bm{\nu}_2} \mathrm{Reg}(T)  \geq 4 \sigma_1 e^{-1/8}c_T\sqrt{T} - \E_{\pi\bm{\nu}_1} \mathrm{Reg}(T) \geq 3\sigma_1c_T\sqrt{T} - \frac{\sigma_1\sqrt{T}}{c_T} \\
         & \geq   2c_T \sigma_1 \sqrt{T} \geq  2c_T \sigmasub(\bm \nu_2) \sqrt{T},
         \end{align*}
    which implies that $\pi \notin \Pigood$, a contradiction. This disproves \eqref{ineq:target_lb_3_bound} and therefore completes the proof of the trade-off lower bound in \eqref{ineq:lb_2-main}.
\end{proof}

\end{document}